\documentclass{alt2019}
\usepackage{times}

\usepackage{karim}

\begin{document}

\title{An Exponential Efron-Stein Inequality for $L_q$ Stable Learning Rules}

 \coltauthor{%
  \Name{Karim Abou-Moustafa}
	\thanks{Now at SAS Inst. Inc, Cary, North Carolina, USA.} 
  \Email{Karim.Abou-Moustafa@sas.com}\\
       \addr Dept. of Computing Science\\
       University of Alberta \\
       Edmonton, AB T6G 2E8, Canada
  \AND
  \Name{Csaba Szepesv\'ari}
	\thanks{On leave at DeepMind, London, UK.} 
  \Email{csaba.szepesvari@gmail.com}\\
       \addr Dept. of Computing Science \\
       University of Alberta\\
       Edmonton, AB T6G 2E8, Canada
}

\maketitle

\begin{abstract}%
There is an accumulating evidence in the literature that \emph{stability of learning algorithms} is a key characteristic 
that permits a learning algorithm to generalize.
Despite various insightful results in this direction, there seems to be an overlooked dichotomy in the type of 
stability-based generalization bounds we have in the literature.
On one hand, the literature seems to suggest that exponential generalization bounds for the estimated risk, which are 
optimal, can be \emph{only} obtained through \emph{stringent}, \emph{distribution independent} and 
\emph{computationally intractable} notions of stability such as \emph{uniform stability}. 
On the other hand, it seems that \emph{weaker} notions of stability such as hypothesis stability, although it is 
\emph{distribution dependent} and more \emph{amenable} to computation, can \emph{only} yield polynomial generalization 
bounds for the estimated risk, which are suboptimal.

In this paper, we address the gap between these two regimes of results.
In particular, the main question we address here is \emph{whether it is possible to derive exponential generalization 
bounds for the estimated risk using a notion of stability that is computationally tractable and distribution dependent, 
but weaker than uniform stability}.
Using recent advances in concentration inequalities, and using a notion of stability that is weaker than uniform stability
but distribution dependent and amenable to computation, we derive an exponential tail bound for the concentration of 
the estimated risk of a hypothesis returned by a \emph{general} learning rule, where the estimated risk is expressed in 
terms of either the resubstitution estimate (empirical error), or the deleted (or, leave-one-out) estimate.
As an illustration we derive exponential tail bounds for ridge regression 
with \emph{unbounded responses}, where we show how stability changes with the tail behavior of the response variables.
\end{abstract}

\begin{keywords}
Generalization bounds, 
algorithmic stability,
the leave one out estimate,
resubstitution estimate,
empirical error,
concentration inequalities,
moments bounds,
tail bounds,
Efron-Stein inequality.
\end{keywords}

\newcommand{\sss}{\scriptscriptstyle}
\newcommand{\Alg}{\falgo{A}}
\newcommand{\Vdel}{V_{\mathtxtxs{DEL}}}
\newcommand{\Vrep}{V_{\mathtxtxs{REP}}}
\newcommand{\Rest}{\est{R}}
\newcommand{\Rdel}{\est{R}_{\mathtxtxs{DEL}}}
\newcommand{\Rres}{\est{R}_{\mathtxtxs{RES}}}
\newcommand{\RdelASn}{\Rdel\bra{\Alg,\fset{S}_n}}
\newcommand{\RiskASnP}{R\bra{\Alg(\fset{S}_n),\fdistri{P}}}

\section{Introduction}
\label{sec:intro}

There is an accumulating evidence in the literature that stability of learning algorithms is a key characteristic that 
permits a learning algorithm to generalize.
The earliest results in this regard are due to \citet{devroy_wagner_79A,devroy_wagner_79B} where they derive 
distribution-free \emph{exponential} generalization bounds for the concentration of the \emph{leave-one-out} estimate, or 
the \emph{deleted} estimate, for $k$ local learning rules. 
Although the notion of stability was not explicitly mentioned in their work, the exponential bounds of 
\citet{devroy_wagner_79A,devroy_wagner_79B} can be seen as relying on the so called \emph{hypothesis stability}; 
a concept due to \citet{kearns_ron_neco_99}.

Various results for different estimates followed the works of \citet{devroy_wagner_79B,devroy_wagner_79A}.
\citet{lugosi_pawlak_94} extended the work of \citet{devroy_wagner_79B,devroy_wagner_79A} to smooth estimates of the error
developed in terms of \aposteriori distribution for the deleted estimate.
\citet{holden_paclikebounds_del_colt96} derived \emph{sanity-check bounds} for the deleted estimate and the \emph{$k$ 
folds cross--validation} (KFCV) estimate using hypothesis stability for few algorithms in the realizable setting.%
\footnote{In particular, \citet{holden_paclikebounds_del_colt96} considered the closure algorithm, and the deterministic 
$1$--inclusion graph prediction strategy.}
Sanity-check bounds are \emph{assurances} that the worst deleted estimate and the worst KFCV estimate will not be 
considerably worse than the \emph{training error} or the \emph{resubstitution} estimate (also known as the empirical error, or empirical risk) \citep{devroy_wagner_79B}.
\citet{kearns_ron_neco_99}, using the notion of \emph{error stability}, give sanity-check bounds for the deleted estimate 
but for more general classes of learning rules (in the unrealizable or agnostic setting).
In particular, they show that if a learning algorithm has a finite VC dimension search space, then the algorithm is 
\emph{error-stable} and its error stability is controlled by the said VC dimension.
Hence, using stability as a complexity measure will not yield worse bounds than using the VC dimension.
Note that error stability is much weaker than hypothesis stability in the sense that hypothesis stability implies error 
stability, and this weakness was necessary to obtain more general sanity-check bounds than those obtained by 
\citet{holden_paclikebounds_del_colt96}.
More recently, \citet{kale_kumar_cv_ms_stability_2011} show that, using a weak notion of stability known as \emph{mean-square} 
stability, the averaging taking place in the KFCV estimation procedure can reduce the variance of the generalization error;
i.e. the averaging in the KFCV estimation procedure can improve the concentration of the estimated error around the expected 
error of the hypothesis returned by the learning rule.

For general learning rules and for \emph{regularized empirical risk minimization} learning rules, 
\citet{bousquet_elisseeff_jmlr_2002} using the notion of \emph{uniform stability}, extended the work of 
\citet{lugosi_pawlak_94} and derived \emph{exponential} generalization bounds for the resubstitution estimate and the 
deleted estimate.
Further generalization results based on uniform stability (or one of its variants) were later obtained in the works of 
\citet{kutin_niyogi_02,sasha_sayan_poggio_2005,mukherjee_advcompmath_2006,shwartz_shamir_jmlr2010}, to name but a few.
In particular, \citet{shwartz_shamir_jmlr2010} showed that a version of uniform stability is key to learnability in the general learning setting with uniformly bounded losses.
These results were reinforced and extended in various directions such as deriving new results for 
randomized learning algorithms \citep{elisseeff_evgeniou_ponttil_jmlr2005},
transfer and meta learning \citep{maurer_jmlr2005},
adaptive data analysis \citep{raef_nissim_sotc2016},
stochastic gradient descent \citep{hardt_recht_singer_icml2016},
structured prediction \citep{london_getoor_jmlr2016},
multi-task learning \citep{zhang_aaai2015},
ranking algorithms \citep{shivaagarwal_niyogi_jmlr2009}, 
as well as in understanding the trade-off between sparsity and stability \citep{xu_mannor_pami_2012}.

Despite these recent advances, and excluding sanity-check bounds, there seems to be an overlooked dichotomy in the type 
of stability-based generalization results.
In particular, the results on stability and generalization can be grouped into two regimes:
\begin{enumerate}
\item \emph{Polynomial} generalization bounds, which are \emph{sub-optimal} and based on hypothesis stability for instance. 
\item \emph{Exponential} generalization bounds, which are \emph{optimal} and based on uniform stability (and its variants).
\end{enumerate}

Comparing \emph{uniform} stability to other notions of stability in the literature, uniform stability is the strongest 
(most demanding) 
notion of stability in the sense that it implies all other notions of stability such as hypothesis stability, error 
stability, and mean-square stability \citep{bousquet_elisseeff_jmlr_2002}.
A learning rule is \emph{uniformly stable} if the change in the prediction loss is small, no matter how the input to the 
learning rule is selected, no matter what value is used as a test example, and no matter which example is removed 
(or replaced) in the input. 

Despite the strength of uniform stability, it is unpleasantly restrictive.
First, unlike other notions of stability (e.g. $L_2$ and $L_1$ stability), uniform stability is a stringent notion of 
stability that is insensitive to the data-generating distribution.
This is problematic since it removes the possibility of studying large classes of learning rules, or even classes of 
problems.
One particularly striking example is binary classification with the zero-one loss.
For this problem, as it was already noted by \cite{bousquet_elisseeff_jmlr_2002}, \emph{no non-trivial algorithm} can be
uniformly $\beta$-stable with $\beta < 1$.
Another example when uniform stability fails is regression with unbounded losses and response variables.%
\footnote{One possibility to fix this is to use a case-based analysis that splits the event space based on whether all examples in the training set are ``small'' may still be used in this case, but this splitting is not without artifacts.
}
Second, as noted earlier, uniform stability is distribution-free and is thus unsuitable for studying finer details of 
learning algorithms.
Computation is another aspect that distinguishes uniform stability from other notions of stability.
While hypothesis, error, and mean-square stability can be estimated using a finite sample, uniform stability is 
computationally intractable.
In other words, although uniform stability yields exponential generalization bounds, these bounds cannot be empirically 
estimated using a finite sample in the spirit of empirical Bernstein bounds for instance 
\citep{audibert_munos_csaba_2007,mnih_csaba_audibet_icml_2008}.

In this research, we are particularly motivated by these previous observations.
That is, on the one hand, the literature seems to suggest that exponential generalization bounds for the estimated risk, 
which are optimal, can be \emph{only} obtained through \emph{stringent}, \emph{distribution independent}, and 
\emph{computationally intractable} notions of stability such as uniform stability (and its variants).
On the other hand, it seems that \emph{weaker} notions of stability such as hypothesis and mean-square stability, 
although they are \emph{distribution dependent} and potentially more \emph{amenable} to computation, can \emph{only} yield 
polynomial generalization bounds for the estimated risk, which are sub-optimal. 

The chief purpose of this paper is to address the gap between these two regimes of results.
In particular, the main question we address here is \emph{whether it is possible to derive exponential generalization 
bounds for the estimated risk using a notion of stability that is computationally tractable, distribution dependent, 
but weaker than uniform stability}.
Our work here gives a positive answer to this question; we show that using recent advances in exponential concentration 
inequalities, and using a notion of stability that is distribution dependent, amenable to computation, but weaker than 
uniform stability, we derive in \cref{theorem:exp_tail_bound_for_del} an exponential tail bound for the concentration of 
the estimated risk of a hypothesis returned by a \emph{general} learning rule, where the estimated risk is developed in 
terms of either the deleted estimate, or the resubstitution estimate (also known as the empirical error).

Two main ingredients that allowed us to bridge the gap between these two regimes of results;
(\emph{i}) recent advances in exponential concentration inequalities, in particular the exponential Efron-Stein 
inequality due to \citet{boucheron_lugosi_massart_2003} and \citet{book_concentration_2013}; and
(\emph{ii}) the elegant notion of $L_q$ stability due to \citet{celisse_guedj_2016} which is distribution 
dependent, weaker than uniform stability, and generalizes hypothesis stability and mean-square stability to higher order 
moments.

Exponential Efron-Stein inequalities aim to bound the deviation of a general function $f$ of $n$ independent input random 
variables (RVs) from its expected value.%
\footnote{The $n$ independent RVs are not necessarily identically distributed.}
The seminal works of \citet{boucheron_lugosi_massart_2003} and \citet{book_concentration_2013} bound this deviation by
means of variance-like terms that measure the sensitivity of $f$ with respect to the \emph{replacement} of one RV from 
the $n$ independent input RVs to $f$, with another independent copy of this RV.
This notion of sensitivity with respect to the replacement of RVs is not suitable for our purposes, nor does it fit 
naturally the empirical estimation of these bounds based on finite datasets. 
As a byproduct of the results presented here, we derive an 
extension of the exponential Efron-Stein inequality when the sensitivity of $f$ is measured with respect to the \emph{removal} 
of one RV from the $n$ independent input RVs to $f$ (see \cref{lemma:exp_efron_stein}).
This notion of sensitivity with respect to the removal of RVs is naturally aligned with the notion of $L_q$ stability, 
and with error estimates such as the deleted estimate and the KFCV estimate.

Efron-Stein inequalities have long been proposed to study the concentration of error estimates.
First, the classic inequality was considered for bounding the variance (e.g., \citep{bousquet_elisseeff_jmlr_2002}).
Soon after \citet{boucheron_lugosi_massart_2003} introduced the variant for higher moments, \citet{sasha_sayan_poggio_2005} 
used this for deriving exponential tail bounds for the so-called almost uniformly stable learning algorithms, replicating 
the results of \citet{kutin_niyogi_02}, who used an extension of McDiarmid's inequality.
More recent use of the higher order moment version is due to \cite{celisse_guedj_2016}, who introduced the distribution-dependent 
$L_q$-stability coefficients and used them to derive bounds on the higher moments of the difference between the deleted 
estimate and the true risk.
\cite{celisse_guedj_2016} used these moment bounds to get exponential tail bounds for the special case of ridge regression.

Our work is closest in spirit to \cite{celisse_guedj_2016}.
However, we provide an alternate route for obtaining exponential tail bounds by providing an exponential Efron-Stein inequality of the ``removal type''  in \cref{lemma:exp_efron_stein}.
This inequality is used to bound the moment-generating function (MGF) of various random variables,
such as the deleted estimate, the resubstitution estimate, or the true risk of the random hypothesis returned by the learning rule.
In each case, the bound is obtained in terms of the MGF of a random variable that corresponds to an average stability quantity of removing a sample.
This latter MGF is bounded by controlling the growth-rate of various $L_q$ stability coefficients,
which leads the final exponential tail bounds.
We obtain such a tail bound for the deleted estimate (\cref{theorem:exp_tail_bound_for_del}),
and also for the resubstitution estimate (\cref{theorem:exp_tail_bound_for_res}).
To control the tail of the resubstitution estimate, we observe that it is not sufficient to control the $L_q$ stability coefficients introduced by 
\cite{celisse_guedj_2016}, but one must also control a related, but distinct quantity, which measures the sensitivity of the algorithm to removing an 
example from the training set when the algorithm is tested on the example that is removed.
We also apply our results to the case of ridge regression with unbounded response variables.
In this case, we obtain the first exponential tail bounds for the deleted estimate (the case of resubstitution estimate is similar, but is not given explicitly).
Since for unbounded response variables, the ridge regression estimator is not uniformly stable,
these tail bounds were out of reach of a straightforward application of previous techniques built on uniform stability.

\section{Setup and Notations}
\label{sec:setup_notations}

We consider learning in Vapnik's framework for risk minimization with bounded losses \citep{book_vapnik_99}: 
A learning problem is specified by the triplet $(\fset{H},\fset{X},\ell)$, where $\fset{H}, \fset{X}$ are sets and
$\ell:\fset{H} \times \fset{X} \to [0,\infty)$.
The set $\fset{H}$ is called the \emph{hypothesis space}, $\fset{X}$ is called the \emph{instance space}, and $\ell$ is
called the \emph{loss function}.
The loss $\ell(h,x)$ indicates how well a hypothesis $h$ explains (or fits) an instance $x \in \fset{X}$.
The learning problem is defined as follows. A learner $\falgo{A}$ sees a sample in the form of a sequence
$\fset{S}_n = (X_1,\dots,X_n) \in \fset{X}^n$ where $(X_i)_i$ is sampled in an independent and identically distributed
(\iid) fashion from some unknown distribution $\fdistri{P}$ and returns a hypothesis $\est{h}_n  = \Alg(\sS_n)\in \sH$
based solely on $X_1,\dots,X_n$.%
\footnote{The set $\fset{X}$ is thus measurable. All functions and sets are assumed and/or can be shown to be 
measurable as needed, saving us from the trouble of mentioning measurability in the rest of the paper.}
The goal of the learner is to pick hypotheses with a small \emph{risk} (defined shortly).

We assume that a learner is able to process samples (or sequences) of different cardinality.
Hence, a learner will be identified with a map $\falgo{A}: \cup_n \fset{X}^n \to \fset{H}$.
We only consider deterministic learning rules in this work; the extension to randomizing learning rules is left for
future work. 

Given a distribution $\fdistri{P}$ on $\fset{X}$, the risk of a \emph{fixed hypothesis} $h\in \fset{H}$ is defined to be 
$R(h,\fdistri{P}) = \expectone{\loss{h,X}}$, where $X \sim \fdistri{P}$.
Since $\fset{S}_n$ is a random quantity, so are $\Alg(\fset{S}_n)$ and $R(\Alg(\sS_n),\fdistri{P})$, the latter of which 
can be also written as
$\EE[\loss{\Alg(\fset{S}_n) , X} | \fset{S}_n]$, where $X\sim \fdistri{P}$ is independent of $\sS_n$.
Ideal learners keep the risk $R(\falgo{A}(\fset{S}_n),\fdistri{P})$ of the hypothesis returned by $\Alg$ ``small'' for 
a wide range of distributions $\fdistri{P}$.
\newline

\noindent
\textbf{$q$-Norm of Random Variables: }
In the sequel, we will heavily rely on the $q$-norm for random variables (RVs).
For a real RV $X$, and for $1 \le q\le +\infty$, the $q$-norm of $X$ is defined as:
$\norm{X}_q \doteq (\expectone{ |X|^q })^{1/q}$, and $\norm{X}_\infty$ is the essential supremum of $|X|$.
Note that for $1 \le q \le p \le +\infty$, these norms satisfy $\norm{\cdot}_q \le \norm{\cdot}_p$.

\subsection{Risk Estimators}
\label{subsec:risk_estimates}

The generalization bounds on the risk usually center on some point-estimate of the random risk
$R(\Alg(\fset{S}_n),\fdistri{P})$.
Many estimators are based on calculating the sample mean of losses in one form or another.
For any fixed hypothesis $h \in \fset{H}$ and dataset $\fset{S}_n$, the sample mean of losses of $h$
against $\fset{S}_n$, also known as  the \emph{empirical risk} of $h$ on $\fset{S}_n$, is given by
\begin{equation}
\Rest(h,\fset{S}_n) = \frac{1}{n} \sum_{i=1}^n \loss{ h, X_i }.
\end{equation}
Plugging $\Alg(\fset{S}_n)$ into $\Rest(\cdot,\fset{S}_n)$ we get the \emph{resubstitution (RES) estimate}, or the 
training error \citep{devroy_wagner_79B}:
$\Rres\bra{\Alg,\fset{S}_n} = \est{R} \bra{\Alg\bra{\fset{S}_n},\fset{S}_n}$.
The resubstitution estimate is often overly ``optimistic'', i.e., it underestimates the actual risk
$R(\Alg(\fset{S}_n),\fdistri{P})$.
The \emph{deleted (DEL) estimate} defined as
\begin{equation}
\label{eq:del_estimate}
\Rdel\bra{\Alg,\sS_n} = \frac{1}{n}\sum_{i=1}^n \loss{ \Alg(\sS_n^{-i} ) , X_i},
\end{equation} 
is a common alternative to the resubstitution estimate that aims to correct for this optimism.
Here, $\sS^{-i}_n = \bra{X_1,\dots,X_{i-1},X_{i+1},\dots,X_n}$, i.e., it is the sequence $\sS_n$ with example $X_i$
removed.
Since $\EE[\loss{\Alg(\sS_n^{-i}),X_i}] = R_{n-1}(\Alg,\fdistri{P})$, then
$\EE[\Rdel(\Alg,\sS_n)] = R_{n-1}(\Alg,\fdistri{P})$.
When the latter is close to $R_n(\Alg,\fdistri{P})$, i.e., $\Alg$ is ``stable'', the deleted estimate may be a good
alternative to the resubstitution estimate \citep{book_devroye_gyorfi_lugosi_1996}.
However, due to the potentially strong correlations between elements of $( \ell(\Alg(\sS_n^{-i}),X_i) )_i$, the
variance of the deleted estimate \emph{may be} significantly higher than that of the resubstitution estimate due 
to the overly redundant information content between $\ell(\Alg(\sS_n^{-i}),X_i)$ and $\ell(\Alg(\sS_n^{-j}),X_j)$ for 
$i \ne j$.
The main goal of this work is to develop a high probability upper bound on the absolute deviation 
$|\RdelASn - \RiskASnP|$ in terms of the ``stability'' of $\Alg$, which is defined next.

\section{Notions of Stability for Learning Rules}
\label{sec:notions_of_stability}

In the following, we go briefly over some well-known notions of algorithmic stability, introduce the notion of $L_q$ 
stability coefficients and finally discuss its properties. 

The first known notion of \emph{algorithmic stability} is the so-called \emph{hypothesis stability}, or 
\emph{$L_1$-stability}, which is due to \citet{devroy_wagner_79B}.%
\footnote{The definitions are sometimes stated with their ``high probability'' variants. We prefer the expectation-versions as they fit our purposes better.}

\begin{definition}[Hypothesis Stability]
\label{def:L1_stability}
Algorithm $\Alg$ has hypothesis (or $L_1$) stability%
\footnote{We believe that in all these definitions the word ``sensitivity'' should be used rather ``stability''. To be in line with the 
literature, we kept the terminology, though with much doubt about whether this is the correct decision.}
 $\beta_h$ w.r.t to the loss function $\ell$ if the following holds
\begin{align*}
\forall i \in \curbra{1,\dots,n},~
\expectone{|\loss{\Alg(\sS_n),X}  -  \loss{\Alg(\sS_n^{-i}),X}|}
~ \leq ~
\beta_h ~,
\end{align*}
where randomness is over $\sS_n$ and $X \sim \fdistri{P}$, and $X$ is independent of $\fset{S}_n$.
\end{definition}

\citet{kearns_ron_neco_99} proposed a weaker notion of stability known as \emph{error stability} which measures the  absolute change in the expected loss of a learning algorithm instead of the average absolute pointwise change in the loss:
\begin{definition}[Error Stability]
\label{def:err_stability}
Algorithm $\Alg$ has error stability $\beta_e$
 w.r.t the loss function $\ell$ if the following holds
\begin{align*}
\forall i \in \curbra{1,\dots,n},~
| \expectone{\loss{\Alg(\sS_n),X}}  -  \expectone{\loss{\Alg(\sS_n^{-i}),X}}|
~ \leq ~
\beta_e ~,
\end{align*}
where randomness is over $\sS_n$ and $X \sim \fdistri{P}$, and $X$ is independent of $\fset{S}_n$.
\end{definition}
As noted by \citet{kutin_niyogi_02}, error stability is weaker than hypothesis stability (in the sense that if $\Alg$ has $\beta$ hypothesis stability then it also has $\beta$ error stability). Furthermore, this notion is not sufficiently strong to ``guarantee generalization'' in the sense that there are algorithms $\Alg$ such that their generalization gap, 
$\EE[ \Rres(\Alg,\sS_n) - R(\Alg(\sS_n),\fdistri{P})]$ stays positive, while the algorithm's error stability coefficient converges to zero as $n\to \infty$.

\cite{kale_kumar_cv_ms_stability_2011} proposed another weak notion of stability known as \emph{mean-square (MS) stability}, 
or \emph{$L_2$-stability}. 

\begin{definition}[Mean-Square Stability]
\label{def:L2_stability}
Algorithm $\Alg$ has mean-square (or $L_2$) stability $\beta_{ms}$ w.r.t the loss function $\ell$ if the following holds
\begin{align*}
\forall i \in \curbra{1,\dots,n},~
\expectone{(\loss{\Alg(\sS_n),X}  -  \loss{\Alg(\sS_n^{-i}),X})^2}
~ \leq ~
\beta_{ms} ~,
\end{align*}
where randomness is over $\sS_n$ and $X \sim \fdistri{P}$, and $X$ is independent of $\fset{S}_n$. 
\end{definition}
We comment on the relationship between mean-square stability and the other notions of stability shortly.%
\footnote{Note that in the above definition, $\beta_{ms}$ is not squared to keep our definitions in sync with the literature.}
\cite{bousquet_elisseeff_jmlr_2002} proposed the strongest and most strict notion of stability, known as 
\emph{uniform stability}, which implies all previous notions of stability.
This notion of stability, together with McDiarmid inequality, permitted the derivation of the first exponential
generalization error bound for the deleted estimate and the resubstitution estimate.
\begin{definition}[Uniform Stability]
\label{def:uni_stability}
Algorithm $\Alg$ has uniform stability $\beta_u$ w.r.t the loss function $\ell$ if the following holds
\begin{align*}
\forall~ \sS_n \in \fset{X}^n, ~\forall i \in \curbra{1,\dots,n},~\forall x\in \fset{X},~
| \loss{\Alg(\sS_n),x}  -  \loss{\Alg(\sS_n^{-i}),x} |
~ \leq ~
\beta_u ~.
\end{align*}
\end{definition}
Finally, we arrive at our notion of $L_q$ stability coefficients:
\begin{definition}[$L_q$ Stability Coefficient]
\label{def:Lq_stability}
Let $\sS_n$ be a sequence of $n$ \iid\ random variables (RVs) drawn from $\sX$ according to $\fdistri{P}$.
Let $\Alg$ be a deterministic learning rule, and $\ell$ be a loss function as defined in 
\cref{sec:setup_notations}.
For $q \ge 1$, the \emph{$L_q$ stability} coefficient of $\Alg$ w.r.t $\ell$, $\fdistri{P}$, and $n$ is denoted 
by $\beta_q(\Alg, \ell, \fdistri{P}, n)$ and is defined as
\begin{align*}
\beta_q^2 (\Alg, \ell, \fdistri{P}, n)
& ~ = ~
\frac1n \sum_{i=1}^n \norm{ \loss{\Alg(\sS_n),X} - \loss{\Alg(\sS_n^{-i}),X}}_q^2\,,
\end{align*}
where $X \sim \fdistri{P}$ is independent of $\fset{S}_n$.
\end{definition}
Recall that a learning algorithm is symmetric, if 
$\Alg(\sS_n) = \Alg(\sS_n')$ for any two $\sS_n$, $\sS_n'$ which are reorderings of each other.
For symmetric learning algorithms, the above definition simplifies to 
\begin{align*}
\beta_q(\Alg,\ell,\fdistri{P},n) = 
 \norm{ \loss{\Alg(\sS_n),X} - \loss{\Alg(\sS_n^{-1}),X}}_q
 =
\max_i  
 \norm{ \loss{\Alg(\sS_n),X} - \loss{\Alg(\sS_n^{-i}),X}}_q\,,
\end{align*}
 the latter expression coinciding with the definition given by \citet{celisse_guedj_2016}.
Thus, for a symmetric algorithm, the stability coefficient 
$\beta_q(n)\doteq \beta_q(\Alg,\ell,\fdistri{P},n)$ is in fact a $q$--norm for the RV
$\Delta_n(\Alg) := \loss{\Alg(\sS_n),X} - \loss{\Alg(\sS_n^{-1}),X}$.
The reason we chose the particular averaging in our definition is because this definition gives the best fit to the derivations we use.

We note in passing that the other stability notions could also consider averaging over index $i$, instead of taking the worst-case sensitivity as shown above. For symmetric rules, the difference, again, does not matter. However, for nonsymmetric algorithms this difference is nontrivial: While uniform stability is trivial for binary classification with the current definition, it is a useful notion with averaging as shown by the results of \citet{shwartz_shamir_jmlr2010}.

Since often $\Alg$, $\ell$, $\fdistri{P}$, $n$ are fixed, we will drop them (or any of them) from the notation and 
will just use, for example, $\beta_q, \beta_q(n)$, etc.%
\footnote{This should not be mistaken to taking a supremum over any subset of the dropped quantities: The stability 
coefficients are meant to be algorithm, loss and distribution dependent.}
Note that for $1\le q\le q'$, it holds that $\beta_q \le \beta_{q'}$, which follows from the definition of $q$-norms.
Now, the relationship between the various stability concepts becomes clear.
Taking $\beta_e,\beta_h,\beta_{ms},\beta_u$ as the smallest values that are possible (i.e., changing the inequalities in 
their definitions to equalities), assuming symmetric learning rules, we have
$\beta_e \le \beta_1 = \beta_h \le \beta_2 =\beta_{ms}^{1/2} \le \beta_u$,
where the last inequality follows because the $L^\infty$-norm is the largest of all of the $q$-norms.

The $L_q$ stability coefficient quantifies the variation of the loss of $\Alg$ induced by removing one example from the 
training set.
This is known as a \emph{removal type} notion of stability and is in accordance with the previous notions of stability 
introduced earlier.
The difference between $L_q$ stability and earlier notions of stability is that $L_q$ stability is in terms of the higher 
order moments of the RVs $|\loss{\Alg(\sS_n),X} - \loss{\Alg(\sS_n^{-i}),X}|$.
The reason we care about higher moments is because we are interested in controlling the tail behavior of the deleted 
estimate.
As will be shown, the tail behavior of the deleted estimate is also dependent on the tail behavior of RVs characterizing 
stability.
As is well-known, knowledge of the higher moments of a RV is equivalent to knowledge of the tail behavior of the RV.
As such, controlling the higher order moments provides more information on the distribution of this RV than simply
considering first order ($L_1$) and second order ($L_2$) moments.
As it will turn out, the $L_q$ stability coefficients alone are insufficient to control either the bias, or the the tail behavior of the resubstitution estimate. To control these, we will introduce further stability coefficients, but we prefer to do this just before we need them.

\section{Main Results}
\label{sec:main_result}

We give here the main results of our work, namely an exponential tail bound for the concentration of the estimated risk, 
expressed in terms of the deleted, or the resubstitution estimate. We start with the deleted estimate.

Before stating the result for the deleted estimate, we first state our two assumptions,
both of which concern the behavior of the stability coefficients. 
While the first assumption is concerned with their dependence on $n$, the second assumption is concerned with their behavior as a function of $q$.
\begin{assumption}
\label{assump:beta_vs_n}
For a fixed $q > 0$, $\beta_q(n)$ is a nonincreasing function of $n$.
\end{assumption} 
Now note that our results remain valid if $\beta_q(n)$ is replaced with an upper bound on it (such as $\bar \beta_q(n)$), provided that the upper bound satisfies our assumptions.
Defining $\bar\beta_q(n) = \max_{m\ge n} \beta_q(n)$, we find that the map $n \mapsto \bar \beta_q(n)$ is nonincreasing. 
This provides us with a general approach to meet \cref{assump:beta_vs_n}, although we would often expect this assumption to be met anyways.

\begin{assumption}
\label{assump:one}
$\exists$ $u_1,w_1 \ge 0$ s.t. for any integer $q \ge 1$, it holds that
\begin{align}
2n\beta_{4q}^2(n-1) + \sfrac{2}{n^2} \sum_{i=1}^n \norm{\loss{ \Alg(\sS_n^{-i} ) , X_i}}_{4q}^2  \le \sqrt{qu_1} \vee qw_1 ~,
\label{eq:assumpone}
\end{align}
where $a \vee b = \max(a,b)$.
\end{assumption}
This assumption will be clarified once we introduce our main tool (the exponential Efron-Stein inequality) and the notion 
of sub-gamma random variables in the following sections. 
The reader wondering about whether this assumption can be met will be pleased to find a positive answer presented in  \cref{subsec:example_ridgeregression} for the case of unbounded response ridge regression, where the assumption translates into conditions for the tail behavior of the response variable.
Note that here $u_1,v_1$ will be distribution and sample size dependent constants, generally decreasing with the sample size.

With this, our main result for the deleted estimate is as follows:
\begin{restatable}[Deleted estimate tail bound]{theorem}{theoremfinalgenresult}
\label{theorem:exp_tail_bound_for_del}
Let $\sX$, $\sH$ and $\ell$ be as previously defined, $\ell$ bounded in $[0,1]$.
Let $\sS_n$ be the dataset defined in \cref{subsec:risk_estimates}, where $n \geq 2$.
Let $\Rdel\bra{\Alg,\sS_n}$ be the deleted estimate defined in \cref{eq:del_estimate}, and $R(\Alg(\sS_n),\fdistri{P})$ 
be the risk for hypothesis $\Alg(\sS_n)$.
Then, under \cref{assump:beta_vs_n,assump:one}, for $\delta \in (0,1)$ and $a > 0$, with probability $1 - 2\delta$ 
the following holds
\begin{align}
|\Rdel\bra{\Alg,\sS_n} - R(\Alg(\sS_n),\fdistri{P})|
\le
\beta_1(n)
+ 
4 \sqrt{(n\beta_2^2(n-1) + C_1)\logtwodelta}
+
C_2\logtwodelta ~,
\end{align}
where 
$C_1 = 2.2a^2u_1 + 1.07a^2w_1^2$, and $C_2 =\sfrac{4}{3}(1.46aw_1 + \sfrac{1}{a})$.
Further, when the range of $\ell$ is unrestricted ($\ell$ is unbounded), the same inequality holds under the assumption that a modified version of \cref{assump:one} holds where the LHS of \cref{eq:assumpone} multiplied by the constant $4$.
\end{restatable}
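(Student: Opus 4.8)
The plan is to split the deviation into a deterministic bias and two stochastic fluctuations, one for the deleted estimate and one for the true risk, and to show that each fluctuation is \emph{sub-gamma}. Writing $A \doteq \RdelASn$ and $B \doteq \RiskASnP$, I would use $|A - B| \le |A - \EE A| + |B - \EE B| + |\EE A - \EE B|$. The bias is immediate: since $\EE A = R_{n-1}(\Alg,\fdistri{P})$ and $\EE B = R_n(\Alg,\fdistri{P})$, we have $\EE A - \EE B = \expectone{\loss{\Alg(\sS_n^{-1}),X} - \loss{\Alg(\sS_n),X}}$ (with $X\sim\fdistri{P}$ independent of $\sS_n$), whose absolute value is at most $\beta_1(n)$ by the very definition of the $L_1$ coefficient. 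This produces the leading $\beta_1(n)$ term in the statement, so the rest of the work is the concentration of $A$ and of $B$ around their means.

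For each of $A$ and $B$, the heart of the argument is to apply the removal-type exponential Efron-Stein inequality (\cref{lemma:exp_efron_stein}), viewing the quantity as a function of the independent sample $\sS_n = (X_1,\dots,X_n)$. The lemma bounds $\log\expectone{e^{\lambda(\cdot - \EE\cdot)}}$ by an expression in the MGF of a nonnegative variance-proxy random variable $V$ built from the removal differences, i.e.\ from how the quantity changes when $X_i$ is deleted from $\sS_n$. The key computation is to read off $V$ in each case. For $B$, the $i$-th difference is $R\bra{\Alg(\sS_n),\fdistri{P}} - R\bra{\Alg(\sS_n^{-i}),\fdistri{P}}$, a pure stability quantity whose $q$-norm is controlled by $\beta_q$, so that $\EE V$ is of order $n\beta_2^2(n-1)$ after invoking \cref{assump:beta_vs_n} — exactly the term under the square root. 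For $A$, deleting $X_i$ acts in \emph{two} ways: it perturbs every surviving summand $\loss{\Alg(\sS_n^{-j}),X_j}$ through the algorithm's stability, \emph{and} it eliminates the single test term $\sfrac{1}{n}\loss{\Alg(\sS_n^{-i}),X_i}$ outright. Expanding $(\,\cdot\,)^2$ through $(x+y)^2\le 2x^2+2y^2$ is precisely what produces the two pieces $2n\beta_{4q}^2(n-1)$ and $\sfrac{2}{n^2}\sum_i \norm{\loss{\Alg(\sS_n^{-i}),X_i}}_{4q}^2$ on the left-hand side of \cref{assump:one}.

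The next step is to bound the MGF of $V$ by controlling its moments. Since $V$ is quadratic in the removal differences and must itself be certified sub-gamma, the relevant moments appear at level $4q$ (the square compounds the moment order), which is exactly the quantity \cref{assump:one} bounds by $\sqrt{qu_1}\vee qw_1$. By the standard moment characterization of sub-gamma variables, this growth rate certifies that $V$ is sub-gamma, and composing this through the Efron-Stein bound shows that each of $A - \EE A$ and $B - \EE B$ is sub-gamma with variance factor of order $n\beta_2^2(n-1) + C_1$ and scale of order $C_2$; the free parameter $a>0$ is the trade-off constant in the Efron-Stein composition that apportions weight between the variance and scale contributions, which is why $C_1,C_2$ are functions of $a$. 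I would then apply the two-sided sub-gamma tail inequality $\Pr(|\cdot - \EE\cdot| \ge \sqrt{2vt}+ct) \le 2e^{-t}$ to each of $A$ and $B$ with $t = \logtwodelta$, so that each fails with probability at most $\delta$; a union bound over the two events gives the stated probability $1-2\delta$. Adding the bias and coalescing the two square-root terms $\sqrt{2v_A t}+\sqrt{2v_B t} \le 4\sqrt{(n\beta_2^2(n-1)+C_1)\logtwodelta}$ and the two scale terms into $C_2\logtwodelta$ yields the claimed inequality.

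Finally, the unbounded case only requires retracing where $\ell\in[0,1]$ was used: boundedness enters when crudely bounding cross-terms in the variance proxy by $1$, and dropping it costs a constant factor that is absorbed by the factor-$4$ inflation of the left-hand side of \cref{assump:one}; \cref{assump:beta_vs_n} is used throughout to replace stability coefficients evaluated at smaller sample sizes by the larger, nonincreasing ones we can control. The step I expect to be the main obstacle is the variance-proxy computation of the second paragraph for the deleted estimate: correctly expanding the removal differences of $A$ — which couple the \emph{direct} deletion of the $i$-th test term with the \emph{indirect} perturbation of all remaining summands and must be related back to the two clean stability quantities — and doing so with constants sharp enough to yield the numerical factors ($2.2$, $1.07$, $1.46$, $\sfrac{4}{3}$) appearing in $C_1$ and $C_2$.
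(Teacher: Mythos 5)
Your proposal follows the paper's own route essentially step for step: the same three-term decomposition (Terms I--III in \cref{sec:exptailbound_for_del_estimate}), the bias bounded by $\beta_1(n)$ as in \cref{lemma:termIII_upper_bound}, the removal-type variance proxies whose $2q$-norms are exactly the two pieces of \cref{assump:one} (\cref{lemma:termI_bound_qnorm_vdel,lemma:temrII_vdel_is_subgamma}), the sub-gamma moment characterization of \cref{theorem:moments_charac_subgamma}, and the composition through \cref{lemma:exp_efron_stein} with the free parameter $a$ and a union bound. The only imprecision is in the unbounded-loss remark: boundedness is needed so that $|Z-Z_{-i}|\le 1$ in \cref{theorem:cgf_z_vdel_expbound}, and the paper's fix is to switch from $\Vdel$ to the replacement proxy $V$ (for which no boundedness is needed) at the cost of the factor $4$, rather than to relax a crude cross-term bound.
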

While the above result bounds both sides of the tail, a one side version with $\logtwodelta$ replaced by $\logonedelta$, also holds for both the upper and lower tails. We will soon explain the various terms in this bound, but first let us explain how the result is proven.

Once we establish our exponential Efron-Stein inequality (\cref{lemma:exp_efron_stein}), the proof of \cref{theorem:exp_tail_bound_for_del} is relatively straightforward. %
The essence of the proof 
can be summarized as follows:
In order to control the concentration of the random quantity $\Rdel\bra{\Alg,\sS_n}$ around the true risk, 
we study the concentration of $\Rdel\bra{\Alg,\sS_n}$ around its mean, the concentration of the true risk of $\Alg(\sS_n)$ around its own mean, and the difference between the mentioned means. The latter is bounded by the $\beta_1(n)$ stability coefficient, using elementary arguments. To 
control the tails (or the higher order moments) of $\Rdel\bra{\Alg,\sS_n}$,
we use our exponential Efron-Stein inequality, which tells us that we need 
to control the tails of 
another intermediary random variable, $\Vdel$ (see \cref{cor:efron_stein_remove}
and its use in \cref{subsec:bounding_term_I}), a variance-type
measure of the sensitivity of $\Rdel$ to the removal of one of the training examples at a time.
The moments of $\Vdel$ are shown be controlled by the expression on the LHS of \cref{eq:assumpone} of \cref{assump:one}. The assumption then helps to turn these bounds into a bound on the MGF of $\Vdel$, leading to tail bounds.
The concentration of the true risk of $\Alg(\sS_n)$ around its mean is controlled similarly.
The full proof is presented in \cref{sec:exptailbound_for_del_estimate}.

To interpret the bound, it is worthwhile to simplify it at the price of losing a bit on its tightness. Consider the case when the range of $\ell$ is the $[0,1]$ interval. Then, 
further upper bounding the RHS using $\sqrt{x+y}\le \sqrt{x}+\sqrt{y}$ and then choosing $a$ optimally, yields the following simplified bound
\begin{align*}
|\Rdel\bra{\Alg,\sS_n} - R(\Alg(\sS_n),\fdistri{P})| 
& \le
\beta_1(n) + 4  \sqrt{n\beta_2^2(n-1)\logtwodelta} +\\
&  
+ 8
\sqrt{ 
 \sfrac{1}{3}
 \bra{   \sqrt{ (2.2 u_1 + 1.07 w_1^2)} + 
\sfrac{1}{3} 1.46 w_1 }
 } \logtwodelta \,,
 \numberthis
 \label{eq:simplifiedbound}
\end{align*}
where for the final form, we assumed that $\delta \le 1/e$.
What can be noticed is that the tail bound has the form that we expect to see for sub-gamma RVs; note the presence of 
the $\sqrt{\logtwodeltaf}$ and $\logtwodeltaf$ terms. 
Note also that, by assumption, $u_1^{1/2}$ and $w_1$ are both at least of size $\Omega(1/n)$, 
regardless the stability of $\Alg$. 
As a result, the coefficient of the $\logtwodeltaf$ term is at least of order $\Omega(\sqrt{1/n})$ even for algorithms where $\beta_q = 0$.
This is expected because the deleted estimate is a ``noisy'' estimate of the true risk no matter the algorithm --
one expects an $\Omega(\sqrt{1/n})$ lower bound to hold in general.
Finally, we note in passing that with a bit more care, in the case of $w_1=0$ 
it is possible to slightly reduce the exponent of $\logtwodeltaf$ to $\log^{3/4}(2/\delta)$. 

We can gain further insight by qualitatively comparing our bound in \cref{theorem:exp_tail_bound_for_del} with the 
exponential bound for the deleted estimate obtained by \citet[Theorem 12]{bousquet_elisseeff_jmlr_2002}.
To make the comparison easier, we first state their result using our notation. We also give the two sided version.

\begin{theorem}[Deleted estimate tail bound through uniform stability]
\label{theorem:bne_expgenbound_del_reg}
Let $\Alg$ be a learning rule with uniform stability $\beta_u$ (see \cref{sec:notions_of_stability}) with respect 
to the loss function $\ell$ and assume that this loss function is in addition bounded:
$0 \le \loss{\Alg(\sS_n),X}\le M$ holds almost surely.
Then, for any $n \ge 1$, and any $\delta \in (0,1)$, with probability $1 - \delta$, the following holds
\begin{align*}
\abs{R(\Alg(\sS_n),\fdistri{P}) - \Rdel\bra{\Alg,\sS_n}}
\le
\beta_u(n) + 4n\beta_u(n)\sqrt{\sfrac{\log(2/\delta)}{2n}} + M\sqrt{\sfrac{\log(2/\delta)}{2n}} ~.
\numberthis
\label{eq:bebound}
\end{align*}
\end{theorem}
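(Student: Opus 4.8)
The plan is to prove \cref{theorem:bne_expgenbound_del_reg} by the classical route underlying all uniform-stability bounds: write the deviation as a \emph{bias} term plus a \emph{fluctuation} term, control the bias directly from stability, and control the fluctuation by McDiarmid's bounded-differences inequality. Concretely, I would set $g(\sS_n) = R(\Alg(\sS_n),\fdistri{P}) - \Rdel\bra{\Alg,\sS_n}$ and decompose $g = (g - \EE g) + \EE g$. For the bias I would use the facts already recorded in \cref{subsec:risk_estimates}, namely $\EE[\Rdel\bra{\Alg,\sS_n}] = R_{n-1}(\Alg,\fdistri{P})$ and $\EE[R(\Alg(\sS_n),\fdistri{P})] = R_n(\Alg,\fdistri{P})$, so that $\EE g = R_n(\Alg,\fdistri{P}) - R_{n-1}(\Alg,\fdistri{P})$. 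Writing both risks as expectations of the loss against an independent test point $X$ and comparing $\Alg(\sS_n)$ with $\Alg(\sS_n^{-i})$, uniform stability gives $|\loss{\Alg(\sS_n),X} - \loss{\Alg(\sS_n^{-i}),X}| \le \beta_u$ pointwise; taking expectations yields $|\EE g| \le \beta_u$, which is exactly the leading term of the claimed bound.

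The crux of the argument, and the step I expect to be most delicate, is the bounded-difference computation for $g$ when a single coordinate $X_k$ is replaced by an independent copy $X_k'$; call the perturbed sample $\sS_n^{k}$. The risk part contributes at most $2\beta_u$: routing through $\Alg(\sS_n^{-k})$ and applying the removal-type inequality twice turns one replacement into two removals. For the deleted estimate I would split the sum over $i$. The term $i=k$ changes only in its test point, since the trained hypothesis $\Alg(\sS_n^{-k})$ is untouched by altering position $k$; it thus moves by at most $M$ and contributes $M/n$ after the $1/n$ averaging. Each term $i\neq k$ compares $\Alg(\sS_n^{-i})$ with $\Alg\bra{(\sS_n^{k})^{-i}}$, two hypotheses trained on $(n-1)$-point samples differing in one coordinate, and so differs by at most $2\beta_u$ by the same replacement-equals-two-removals argument; averaged over the $n-1$ such indices this again contributes at most $2\beta_u$. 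Summing gives a per-coordinate bounded-difference constant $c = 4\beta_u + M/n$, uniform in $k$. The only bookkeeping subtlety is the appearance of the $(n-1)$-sample stability coefficient, which I would simply bound by $\beta_u$, treating $\beta_u$ as a valid uniform-stability bound for the sizes $n$ and $n-1$, consistent with the notation $\beta_u(n)$ used in the statement.

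With $\sum_{k=1}^n c^2 = n(4\beta_u + M/n)^2$, McDiarmid's inequality gives $\Pr[g - \EE g \ge t] \le \exp(-2t^2/(nc^2))$ and the matching lower-tail bound; a union bound over the two tails, each set to level $\delta/2$, forces $t = (4\beta_u + M/n)\sqrt{(n/2)\log(2/\delta)}$, which rearranges to $4n\beta_u\sqrt{\log(2/\delta)/(2n)} + M\sqrt{\log(2/\delta)/(2n)}$. Combining this with the bias bound $|\EE g| \le \beta_u$ via the triangle inequality then yields, with probability $1-\delta$, exactly $|R(\Alg(\sS_n),\fdistri{P}) - \Rdel\bra{\Alg,\sS_n}| \le \beta_u + 4n\beta_u\sqrt{\log(2/\delta)/(2n)} + M\sqrt{\log(2/\delta)/(2n)}$. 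Apart from the careful case analysis for the deleted-estimate differences, every step is elementary; the substance of the result lies entirely in having obtained the $O(n\beta_u)$ and $O(M)$ coefficients in the bounded-difference constant, which is where the restrictiveness of uniform stability (and the contrast with the distribution-dependent route developed in this paper) becomes visible.
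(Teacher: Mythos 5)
The paper does not actually prove \cref{theorem:bne_expgenbound_del_reg}: it is stated only as a restatement (in the paper's notation, and in two-sided form) of Theorem~12 of Bousquet and Elisseeff (2002), for the purpose of comparison with \cref{theorem:exp_tail_bound_for_del}. So there is no in-paper proof to compare against. That said, your reconstruction is correct and is exactly the canonical argument behind the cited result: bias bounded by $\beta_u$ via the pointwise stability inequality, plus McDiarmid applied to the difference $R(\Alg(\sS_n),\fdistri{P}) - \Rdel(\Alg,\sS_n)$ with per-coordinate bounded-difference constant $4\beta_u + M/n$ (the risk contributing $2\beta_u$ by routing one replacement through two removals, the $i=k$ term of the deleted estimate contributing $M/n$, and the $n-1$ remaining terms contributing $2\beta_u$), which after a two-sided union bound yields precisely the three displayed terms. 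Your acknowledged bookkeeping point --- that the $i\neq k$ comparisons really invoke the stability coefficient at sample size $n-1$, which you absorb into $\beta_u$ --- is the only place requiring a (standard, and here harmless) monotonicity convention. This is a genuinely different situation from the paper's own machinery, which deliberately avoids McDiarmid in favor of the exponential Efron--Stein route so as to replace the distribution-free $\beta_u$ by the distribution-dependent $L_q$ coefficients; your proof makes visible exactly where that replacement fails (the bounded-difference constant must hold for every realization of the sample).
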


The bound in \cref{theorem:bne_expgenbound_del_reg} has three terms; 
the first two terms are dependent on the uniform stability of the learning rule, and a third term that only depends on 
the loss function $\ell$ and the sample size $n$.
When $\beta_u$ scales as $1/n$ the bound becomes tight in a worst-case sense. 
As with our bound, even when $\beta_u=0$, the third term stays positive (as it should).

Let us now compare the RHS of \cref{eq:bebound} to our simplified bound presented in \cref{eq:simplifiedbound}. 
Both bounds have three terms, corresponding to different power of $\logonedelta$.
The first two terms in \cref{eq:bebound} have the same form as the first two terms in \cref{eq:simplifiedbound} except 
that in \eqref{eq:bebound} $\beta_u$ is used, while in \eqref{eq:simplifiedbound} $\beta_1(\le \beta_2)$ is used. 
As discussed earlier, $\beta_1\le \beta_2\le \beta_u$, and in particular the gap between these quantities can be large; even $\beta_u$ may be unbounded while the others bounded (as the example in  \cref{subsec:example_ridgeregression} will show).
At the same time, the constant coefficient of the second term \eqref{eq:simplifiedbound} is larger than the corresponding 
coefficient in the second term of \eqref{eq:bebound}. 
Other than these differences, the terms are analogous.

As discussed earlier, our last term scales with $\logonedelta$ rather than with with its square root. 
This is the price we pay partly because the proof is set up to also work with \emph{unbounded losses} (it remains an interesting question of whether the result can be strengthened to remove this term when the losses are bounded). Note that the coefficients of this term, as 
discussed earlier, depend on the stability of the algorithm 
but the magnitude of the multiplier of $\logonedelta$ 
will be at least of order $\Omega(\sqrt{1/n})$.

Note that \cite{bousquet_elisseeff_jmlr_2002} state an identical result (to that shown above in \eqref{eq:bebound}) for 
the generalization gap, $\Rres - R(\Alg(\sS_n),\fdistri{P})$; i.e. the gap for the resubstitution
estimate (or the training error). As we shall see soon, our bound also extends to this case with some modifications.

While preparing the final version of this manuscript, we noted the recent work of \cite{feldman_vonfdrak_nips2018} who 
improve this result of \cite{bousquet_elisseeff_jmlr_2002} by replacing the second term in 
\eqref{eq:bebound} by $\sqrt{\beta_u(n) \logtwodelta}$. 
This is an improvement whenever $\beta_u(n)\ge 1/n$ (i.e., for ``not too stable'' algorithms).
One may hope that a similar improvement may be possible with non-uniform (distribution-dependent) notions of stability, but this is left for future work for now.

Notice that the above results are for the gap between the true risk and the deleted estimate, whereas oftentimes one wishes 
to control the gap between the true risk and the resubstitution (or \emph{empirical}) estimate (or the training error); 
i.e., the well-known \emph{generalization gap}.
Indeed, one can follow the same path for our proof technique and derive an exponential 
tail bound for the concentration of the empirical estimate.

As it turns out, this result requires the introduction of a new type of stability coefficients: 
The reason is that there are stable algorithms that can overfit the training data in the sense that their training error is small. An example of such an algorithm for the binary classification setting is the ``short-range nearest neighbor'' rule which recalls the label of the closest training example to the input when their distance is $o(1/n)$ and outputs a fixed label (say, $1$) otherwise. As $n$ increases, this algorithm will converge to output the a priori chosen label always. As such, the algorithm will also be very stable. Yet its training error is always zero, which can be far from its true risk. The situation is summarized in the following result (for details, see \cref{appx:example_short_range_1nn}):

\begin{restatable}{proposition}{propsrknnlqstability}
\label{prop:srknn_lq_stability}
There exist a distribution $\cP$ and a learning algorithm $\Alg$ such that, everywhere,
\begin{align}
\lim_{n\to\infty} R(\Alg(\sS_n),\cP) - 
\Rres\bra{\Alg,\fset{S}_n} >0~,
\label{eq:posgap}
\end{align}
while $\sup_{q\ge 1}\beta_q(\Alg,\ell,\cP,n)/q \to 0$ as $n\to\infty$.
\end{restatable}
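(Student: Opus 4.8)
The plan is to realise the ``short-range nearest-neighbor'' rule described just above the statement and to verify the two claims by direct calculation. I take the binary classification setting $\fset{X} = [0,1]\times\curbra{0,1}$ with the zero-one loss $\loss{h,(z,y)} = \mathbf{1}\curbra{h(z)\ne y}$, and let $\cP$ draw the feature $Z\sim\mathrm{Unif}[0,1]$ and set the label $Y\equiv 0$ deterministically. I fix a radius sequence $r_n = o(1/n)$, say $r_n = 1/n^2$, and let $\Alg(\sS_n)$ be the classifier that, on input $z$, predicts the label of the nearest training feature when that feature lies within distance $r_n$ of $z$, and predicts the fixed fallback label $1$ otherwise. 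Since all labels equal $0$, this prediction equals $0$ exactly when $\min_i |z - Z_i|\le r_n$ and equals $1$ otherwise; in particular it depends only on the \emph{set} of training features, so $\Alg$ is symmetric and the simplified form of \cref{def:Lq_stability} applies.

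First I would establish the positive generalization gap of \eqref{eq:posgap}. Each training feature $Z_i$ is its own nearest neighbor at distance $0\le r_n$, so $\Alg(\sS_n)$ predicts $Y_i=0$ on $Z_i$ and therefore $\Rres\bra{\Alg,\sS_n}=0$ for every realization of $\sS_n$. For the true risk, the rule errs on a fresh $(Z,0)$ precisely when it falls back to the label $1$, i.e.\ when $Z\notin\bigcup_{i=1}^n[Z_i-r_n,Z_i+r_n]$; since this union (intersected with $[0,1]$) has Lebesgue measure at most $2nr_n$, I get $R(\Alg(\sS_n),\cP)\ge 1 - 2nr_n$ for every $\sS_n$. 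As $2nr_n\to 0$ and $R(\Alg(\sS_n),\cP)\le 1$, a squeeze gives $R(\Alg(\sS_n),\cP)\to 1$ surely, so the gap in \eqref{eq:posgap} converges to $1>0$ everywhere.

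Next I would bound the $L_q$ stability coefficients. Because $\ell$ is $\curbra{0,1}$-valued, the removal difference $\Delta_n := \loss{\Alg(\sS_n),X} - \loss{\Alg(\sS_n^{-1}),X}$ takes values in $\curbra{-1,0,1}$, so $|\Delta_n|^q = |\Delta_n|$ for every $q\ge 1$ and hence $\beta_q(n) = \norm{\Delta_n}_q = p_n^{1/q}$ with $p_n := \Pr\curbra{\Delta_n\ne 0}$. The driving observation is that deleting $X_1$ can alter the prediction on $Z$ only when $Z_1$ is the nearest training feature to $Z$ among $Z_1,\dots,Z_n$: if a different point is strictly closest, it survives the deletion and remains closest, leaving the output unchanged. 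By exchangeability of the i.i.d.\ features and independence of the test point, the a.s.\ unique nearest neighbor is equally likely to be any of the $n$ training points, whence $p_n \le \Pr\curbra{Z_1\text{ is nearest to }Z} = 1/n$ and $\beta_q(n)\le n^{-1/q}$.

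Finally I would optimize over $q$. With $g(q) := n^{-1/q}/q$, a one-line calculus argument locates the maximizer at $q^\ast=\ln n$, giving $\sup_{q\ge 1}\beta_q(\Alg,\ell,\cP,n)/q \le \sup_{q\ge1} n^{-1/q}/q = (e\ln n)^{-1}$ for $n\ge e$, which tends to $0$. I expect the genuinely delicate point to be precisely this last interplay, rather than the (routine) gap and per-deletion computations. Note that $\beta_q(n)=p_n^{1/q}\uparrow 1$ as $q\to\infty$ for every fixed $n\ge 2$ (since $p_n>0$), so the rule is uniformly $1$-unstable: its $L_\infty$ stability coefficient equals $1$, echoing the earlier remark that no non-trivial rule is uniformly $\beta$-stable with $\beta<1$. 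What saves the claim is that the $1/q$ normalization forces the worst $q$ to grow like $\ln n$ and drives the normalized supremum to $0$. Keeping this balance right — between $\beta_q$ climbing toward $1$ and the damping factor $1/q$ — is where the argument must be handled with care.
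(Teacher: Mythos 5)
Your construction is essentially the paper's own (see \cref{appx:example_short_range_1nn}): the short-range nearest-neighbor rule on uniformly distributed inputs, with the gap handled by $\Rres = 0$ plus a $2nr_n$ bound on the measure of the covered region, and the stability handled by reducing $\beta_q^q(n)$ to the probability $p_n$ that deletion changes the prediction, followed by the optimization $\sup_{q\ge 1} p_n^{1/q}/q = 1/(e\ln(1/p_n))$ --- which you carry out explicitly where the paper only says ``as a simple calculation shows''. The one substantive difference is how $p_n$ is bounded: you use exchangeability (``the deleted point must be the nearest neighbor''), getting $1/n$, whereas the paper uses locality (``the deleted point must be within $d_n$ of the test input''), getting $2d_n$; either decay rate suffices after the $1/q$-normalization. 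There is, however, a small gap in your justification of the $1/n$ bound: $\Alg(\sS_n^{-1})$ is the rule run on a sample of size $n-1$ and therefore uses the radius $r_{n-1}>r_n$, so the two predictions can differ even when $Z_1$ is \emph{not} the nearest training point --- for instance when no training point lies within $r_n$ of $Z$ but some $Z_j$ with $j\ge 2$ lies at distance in $(r_n,r_{n-1}]$, in which case $\Alg(\sS_n)$ falls back to label $1$ while $\Alg(\sS_n^{-1})$ predicts $0$. (The paper's own step ``$\Alg(\sS_n)(x)\ne\Alg(\sS_n^{-1})(x)$ implies $|x-X_1|\le d_n$'' is subject to the same caveat.) The fix is immediate: the offending event has probability at most $2(n-1)(r_{n-1}-r_n)$, which is $O(1/n^2)$ for your choice $r_n=1/n^2$, so $p_n=O(1/n)$ still holds and nothing downstream changes; alternatively one can make the acceptance radius depend only on a fixed reference sample size. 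With that one-line patch your argument is complete, and your explicit treatment of the worst case $q^\ast=\ln n$ (together with the remark that $\beta_u=1$ here) is a welcome clarification of the part the paper leaves implicit.
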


Note that by \cref{theorem:exp_tail_bound_for_del}, $\Rdel\bra{\Alg,\fset{S}_n} - R\bra{\Alg(\sS_n),\cP} \stackrel{P}{\to} 0$ as long as $\sup_{q\ge 1} \frac{\beta_q(n)}{q} \to 0$ as $n \to \infty$.
It follows that the deleted estimate is consistently estimating the risk of the short-range nearest neighbor rule, while the resubstitution estimate fails to be consistent. While it is common wisdom that the resubstitution estimate is often overly ``optimistic'', the example is a very clear demonstration of this weakness and shows that one has to be quite careful when using the training error, e.g., for model selection; as noted already in \citet{DeWa79resub}.

One may then think that we should never use the training error, but this is easier said than done for most algorithms will in some form minimize the training error. Thus, the question remains, if the $L_q$ stability in the previous sense is insufficient to guarantee the concentration of the training error around the true loss, what other property should an algorithm posses to control this concentration? 
We know that uniform stability provides a positive answer,
but is there an analogue to the $L_q$ stability coefficients that is sufficient for this purpose? The answer to this question can be obtained by repeating the derivations done in the proof of \cref{theorem:exp_tail_bound_for_del} and discovering the modifications necessary to control all the terms. This results in the definition of what we call the $L_q$ resubstitution stability coefficients, which, given an algorithm $\Alg$, are defined as follows:
\begin{align*}
\gamma_{q}^2(n) = \frac{1}{n} \sum_{i=1}^n
 \norm{\loss{ \Alg(\sS_n ) , X_i}  -  \loss{\Alg(\sS_n^{-i}), X_i} }_{q}^2\,.
\end{align*}
This is a direct analogue of the $L_q$ stability coefficients: The main difference is that here, the algorithm is evaluated on training examples, with and without the example being removed, while for the $L_q$ stability coefficients, the algorithm was evaluated on an example independent of the training data.
This should make sense for already if we want to control the bias $\E[ \Rres(\Alg,\sS_n) - R(\Alg(\sS_n),\cP) ]$ we see the need to control the deviation between the loss measured at training examples and the loss measured outside -- which is exactly what is captured by the $\gamma_q$ coefficients.

We also replace \cref{assump:one} with the following assumption:
\begin{assumption}
\label{assump:subgammagamma}
$\exists$ $u_1,w_1 \ge 0$ s.t. for any integer $q \ge 1$, it holds that
\begin{align*}
  6n 
  \bra{\gamma_{4q}^2(n) + \gamma_{4q}^2(n-1) +  \beta_{4q}^2(n-1)}
  +
 \frac{2 \gamma_{4q}^2(n)}{n} 
 +
  \frac{2}{n^2}
 \sum_{i=1}^n \norm{\loss{ \Alg(\sS_n^{-i}) , X_i}}_{4q}^2
  \le \sqrt{qu_1} \vee qw_1 ~.
\end{align*}
\end{assumption}
Note that this assumption implies \cref{assump:one}. Thus, any algorithm that satisfies this assumption, will also necessarily satisfy  \cref{assump:one}. 
With this, we are ready to state our result for tail of the resubstitution estimator:
\begin{restatable}[Resubstitution estimate tail bound]{theorem}{theoremfinalgenresultres}
\label{theorem:exp_tail_bound_for_res}
Using the setup of \cref{theorem:exp_tail_bound_for_del} (again, $\ell\in [0,1]$), but using 
\cref{assump:subgammagamma} in place of \cref{assump:one},  
 for $\delta \in (0,1)$ and $a > 0$, with probability $1 - 2\delta$ 
the following holds:
\begin{align*}
\MoveEqLeft |\Rres\bra{\Alg,\sS_n} - R(\Alg(\sS_n),\fdistri{P})|
\le
\beta_1(n) + \gamma_1(n)+\\
& \qquad 
4 \sqrt{(n\beta_2^2(n-1) + C_1)\logtwodelta}
+
C_2\logtwodelta ~,
\end{align*}
where  $C_1=C_1(a)$ and $C_2=C_2(a)$ are as in \cref{theorem:exp_tail_bound_for_del}.
Furthermore, the same holds for unbounded losses provided
that a modified version of \cref{assump:subgammagamma} holds 
where the LHS of the inequality in this assumption is  multiplied by the constant $4$.
\end{restatable}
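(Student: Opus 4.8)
The plan is to mirror the three-part decomposition already used to prove \cref{theorem:exp_tail_bound_for_del}, re-deriving only the pieces that are genuinely sensitive to the switch from the deleted to the resubstitution estimate. By the triangle inequality,
\begin{align*}
& |\Rres\bra{\Alg,\sS_n} - R(\Alg(\sS_n),\fdistri{P})| \\
& \quad\le
\underbrace{\left|\Rres\bra{\Alg,\sS_n} - \EE[\Rres\bra{\Alg,\sS_n}]\right|}_{\mathrm{(I)}}
+ \underbrace{\left|\EE[\Rres\bra{\Alg,\sS_n}] - \EE[R(\Alg(\sS_n),\fdistri{P})]\right|}_{\mathrm{(II)}} \\
& \qquad + \underbrace{\left|\EE[R(\Alg(\sS_n),\fdistri{P})] - R(\Alg(\sS_n),\fdistri{P})\right|}_{\mathrm{(III)}}.
\end{align*}
Term $\mathrm{(III)}$ is the concentration of the true risk of $\Alg(\sS_n)$ about its mean and does not depend on which estimator is used; I would reuse the argument from \cref{theorem:exp_tail_bound_for_del} verbatim. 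Terms $\mathrm{(I)}$ and $\mathrm{(III)}$ together are then what produce the tail contribution $4\sqrt{(n\beta_2^2(n-1)+C_1)\logtwodelta}+C_2\logtwodelta$, while the bias $\mathrm{(II)}$ is deterministic and will account for the leading $\beta_1(n)+\gamma_1(n)$.

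The bias term $\mathrm{(II)}$ is where the extra $\gamma_1(n)$ enters. For each fixed $i$ I would insert the leave-one-out hypothesis and write the single-example gap $\loss{\Alg(\sS_n),X_i}-\loss{\Alg(\sS_n),X}$ (with $X\sim\fdistri{P}$ independent of $\sS_n$) as the telescoping sum of $\loss{\Alg(\sS_n),X_i}-\loss{\Alg(\sS_n^{-i}),X_i}$, then $\loss{\Alg(\sS_n^{-i}),X_i}-\loss{\Alg(\sS_n^{-i}),X}$, then $\loss{\Alg(\sS_n^{-i}),X}-\loss{\Alg(\sS_n),X}$. The middle term vanishes in expectation, because $X_i$ is independent of $\sS_n^{-i}$ and identically distributed with $X$. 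Taking expectations, passing to absolute values, averaging over $i$, and applying Jensen's inequality to pull the average inside the square root turns the first and third averages into $\gamma_1(n)$ and $\beta_1(n)$ respectively, giving $\mathrm{(II)}\le \beta_1(n)+\gamma_1(n)$.

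The real work is term $\mathrm{(I)}$, the concentration of $\Rres$ about its own mean. Here I would apply the removal-type exponential Efron-Stein inequality (\cref{lemma:exp_efron_stein}, via \cref{cor:efron_stein_remove}) to $f=\Rres\bra{\Alg,\sS_n}$, reducing control of the MGF of $\mathrm{(I)}$ to control of the moments of the associated variance-type proxy $\Vrep$, the resubstitution analogue of $\Vdel$, which measures the sensitivity of $\Rres$ to removing one training example. The decisive difference from the deleted case is that removing example $j$ perturbs $\Rres$ through two channels: it changes the returned hypothesis, moving every retained loss $\loss{\Alg(\sS_n),X_i}\to\loss{\Alg(\sS_n^{-j}),X_i}$ for $i\neq j$ (a $\beta$-type effect), and it simultaneously drops and alters the loss evaluated at the removed example itself, $\loss{\Alg(\sS_n),X_j}$ versus $\loss{\Alg(\sS_n^{-j}),X_j}$, which is exactly a $\gamma$-type effect. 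Splitting $\Vrep$ along these two channels produces precisely the left-hand side of \cref{assump:subgammagamma}: the $6n(\gamma_{4q}^2(n)+\gamma_{4q}^2(n-1)+\beta_{4q}^2(n-1))$ and $2\gamma_{4q}^2(n)/n$ contributions come from the two channels, while the $\frac{2}{n^2}\sum_i \norm{\loss{\Alg(\sS_n^{-i}),X_i}}_{4q}^2$ term is inherited unchanged from the deleted analysis. \cref{assump:subgammagamma} then certifies that $\Vrep$ is sub-gamma with the \emph{same} parameters $u_1,w_1$ used in \cref{theorem:exp_tail_bound_for_del}, so the resulting tail terms carry the identical constants $C_1,C_2$.

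Finally I would assemble the pieces: since $\mathrm{(I)}$ and $\mathrm{(III)}$ are each controlled at confidence $1-\delta$ and $\mathrm{(II)}$ is deterministic, a union bound yields the claimed $1-2\delta$. I expect the main obstacle to be the careful accounting inside $\Vrep$ --- in particular, isolating the $\gamma$-type contribution created by the fact that the removed example is simultaneously a test point, and verifying that the numerical constants assemble exactly into the left-hand side of \cref{assump:subgammagamma} so as to preserve the matching between $(u_1,w_1)$ and $(C_1,C_2)$. The unbounded-loss case then follows as before: since \cref{assump:subgammagamma} implies \cref{assump:one}, term $\mathrm{(III)}$ is unaffected, and reusing the same bookkeeping with the left-hand side inflated by the constant $4$ absorbs the loss of the $[0,1]$ bound.
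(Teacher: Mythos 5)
Your proposal is correct and follows essentially the same route as the paper's proof: the same three-term decomposition, the same reuse of the true-risk concentration argument, the same removal-type Efron--Stein analysis of the resubstitution variance proxy whose two perturbation channels (hypothesis change on retained examples, and the removed example serving as its own test point) assemble into the left-hand side of \cref{assump:subgammagamma}. Your handling of the bias term via a telescoping sum whose middle term vanishes in expectation (since $X_i$ is independent of $\sS_n^{-i}$ and distributed as $X$) is a minor, equally valid variant of the paper's argument, which instead replaces $X_i$ by an independent copy and matches joint distributions; both yield $\beta_1(n)+\gamma_1(n)$.
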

By and large, the proof follows the same line as the proof \cref{theorem:exp_tail_bound_for_del} with some necessary modifications. 
A proof  of this result is provided in \cref{sec:resproof}.
As can be noticed, the only difference to the bound available for the deleted estimate is the presence of the $(\gamma_q(n))$ terms, both in the assumption, and the result.
As our previous example shows, these cannot be removed (in the case of the short-range nearest neighbor rule, these coefficients will be large).
This suggests that the \emph{deleted estimate} is in a way a \emph{much better} behaving estimator of the true risk than 
the resubstitution estimate.

\if0
The different tools that can derive such bound are the original replacement version of the exponential Efron-Stein
inequality in the work of \cite{boucheron_lugosi_massart_2003}, and relying on the intermediary RV $V$ in 
\cref{theorem:efron_stein_main} instead of $\Vdel$ in \cref{cor:efron_stein_remove}, as well as manipulating the effect 
of the RV $V$ through \cref{lemma:termI_bound_qnorm_vdel} and \cref{lemma:temrII_vdel_is_subgamma}.
The caveat with the final result, however, is two-fold.

First, while the final result will be similar to our result here in \cref{theorem:exp_tail_bound_for_del}, it will contain 
an additional term: $\EE[|\loss{\sS_n^{-1},X} - \loss{\sS_n^i,X}|]$, where $\sS_n^i$ is the set $\sS_n$ with the $i$-th
example replaced with the independent copy $X \sim \fdistri{P}$.
As will be shown next, this term will appear when deriving an upper bound on the difference between the expected true
risk and the expected empirical risk (cf. \cref{subsec:upper_bound_termIII}). 
Note that the second term in this expectation replaces the $i$-th sample in $\sS_n$ with the independent copy $X$ and 
tests on the same copy $X$.
Due to this second term, this expectation is different from our definition for the $L_q$ stability coefficient, and hence
it cannot be replaced nor upper bounded by this notion of stability, and it will appear in the final result.
In other words, it requires addition control in order to obtain a tight bound.
Note that a similar term does indeed appear in the work of \cite{bousquet_elisseeff_jmlr_2002} when bounding the same
quantity (difference in expectations), however it is under the notion of uniform stability.
There, this additional term can be upper bounded by $\beta_u$ due the definition of uniform stability that depends on
the supremum and not expectations as in our case.

Second, what could be the implication(s) of this additional term when trying to control the generalization gap (with 
respect to the empirical error) of some learning rules under the notion of $L_q$ stability?
As it turns out, there are cases in which this exponential tail bound on the concentration of the empirical estimate 
cannot control the generalization gap.
For instance, this is true for the case of \emph{the short-range nearest neighbor} algorithm that appeared in the work
of \citet[Example 3.11]{kutin_niyogi_02} and described here in \cref{appx:example_short_range_1nn}.
In this example, the empirical error is always zero, the true risk is strictly a positive value, and hence the 
generalization gap is always positive. 
In the mean time, $\beta_q(n)$, for $1 \le q < \infty$ stability for this algorithm will converge to zero as $n \goto \infty$.
That is, the short range nearest neighbor algorithm is a perfectly $L_q$-stable learning rule, yet the generalization 
gap is not zero!
This is captured by the following proposition and its proof is in \cref{appx:example_short_range_1nn}.
\fi

\section{Main Tool}
\label{sec:main_tool}

In this section we focus on the case when the losses are bounded in $[0,1]$ and comment on the general case at the end.
The main tool for our work is an extension of the celebrated Efron-Stein inequality \citep{efron_stein_1981,steele_1986}, 
to a stronger version known as the exponential Efron-Stein inequality \citep{boucheron_lugosi_massart_2003}. 
We start by introducing the Efron-Stein inequality and some variations.
Let $\map{f} {\sX^n} {\RR}$ be a real-valued function of $n$ variables, where $\sX$ is a measurable space.
Let $X_1,\dots,X_n$ be independent (not necessarily identically distributed) RVs taking values in $\sX$ and define the 
RV $Z ~ = ~ f(X_1,\dots,X_n) \equiv f(\fset{S}_n)$.
Define the shorthand for the conditional expectation $\EE_{-i}Z \doteq \expectone{ Z | \sS_n^{-i} }$, where 
$\sS_n^{-i}$ is defined as in the previous section.
Informally, $\EE_{-i}Z$ ``integrates'' $Z$ over $X_i$ and \emph{also over any other source of randomness} in $Z$ except 
$\sS_n^{-i}$.
For every $i = 1,\dots,n$, let $X_i'$ be an independent copy from $X_i$, and let 
$Z_i' = f(X_1,\dots,X_{i-1},X_i',X_{i+1},\dots,X_n)$.
The Efron-Stein inequality bounds the variance of $Z$ as shown in the following theorem.

\begin{theorem}[Efron-Stein Inequality -- Replacement Case]
\label{theorem:efron_stein_main}
Let $V = \sum_{i=1}^n (Z - \EE_{-i}Z)^2$.
Under the settings described in this section, it holds that
$\VV[Z] \leq \EE V  = \sfrac{1}{2} \sum_{i=1}^n \EE[(Z - Z_i')^2]$.
\end{theorem}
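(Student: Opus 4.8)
The plan is to prove the two assertions separately: the equality $\EE V = \sfrac{1}{2}\sum_{i=1}^n \EE[(Z - Z_i')^2]$, which is a clean algebraic identity, and then the inequality $\VV[Z] \le \EE V$, which is the substance of the bound.

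For the equality, I would fix $i$ and condition on $\sS_n^{-i}$, i.e.\ on everything except $X_i$. The key observation is that, given $\sS_n^{-i}$, the two variables $Z = f(\ldots, X_i, \ldots)$ and $Z_i' = f(\ldots, X_i', \ldots)$ are \emph{conditionally i.i.d.}, since $X_i$ and its independent copy $X_i'$ are i.i.d.\ and independent of $\sS_n^{-i}$. For any pair of conditionally i.i.d.\ variables with common conditional mean $\EE_{-i}Z$, a one-line expansion gives $\EE[(Z - Z_i')^2 \mid \sS_n^{-i}] = 2\,\EE[(Z - \EE_{-i}Z)^2 \mid \sS_n^{-i}]$, using that the conditional cross term factorizes. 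Taking expectations and summing over $i$ yields the identity, since $\EE V = \sum_i \EE[(Z - \EE_{-i}Z)^2]$ by definition of $V$.

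For the inequality, I would use the Doob martingale decomposition along the filtration generated by revealing the coordinates one at a time. Writing $\EE^{(i)}Z = \EE[Z \mid X_1,\dots,X_i]$ and $V_i = \EE^{(i)}Z - \EE^{(i-1)}Z$, the telescoping sum gives $Z - \EE Z = \sum_{i=1}^n V_i$, and because the $V_i$ are martingale increments they are pairwise orthogonal, so $\VV[Z] = \sum_{i=1}^n \EE[V_i^2]$. The crucial step is to identify $V_i$ with a conditional expectation of $Z - \EE_{-i}Z$: since $\EE_{-i}Z$ does not depend on $X_i$ and the coordinates are independent, conditioning it on $X_1,\dots,X_i$ or on $X_1,\dots,X_{i-1}$ gives the same result, which by the tower property equals $\EE^{(i-1)}Z$; hence $V_i = \EE[\,Z - \EE_{-i}Z \mid X_1,\dots,X_i\,]$. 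Applying conditional Jensen then gives $\EE[V_i^2] \le \EE[(Z - \EE_{-i}Z)^2]$, and summing over $i$ produces $\VV[Z] \le \EE V$.

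The main obstacle is the identification $V_i = \EE[(Z - \EE_{-i}Z) \mid X_1,\dots,X_i]$: it is the one place where independence of the coordinates is genuinely used, and it must be argued carefully via the observation that conditioning a function of $\sS_n^{-i}$ on $\{X_1,\dots,X_i\}$ versus $\{X_1,\dots,X_{i-1}\}$ coincides. Once this is in place, both the orthogonality of the martingale increments and conditional Jensen are routine, and the equality half is purely computational.
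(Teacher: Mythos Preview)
Your proposal is correct and is precisely the standard proof of the Efron--Stein inequality. The paper itself does not prove this theorem at all: it simply states the result and refers the reader to \cite{boucheron_lugosi_massart_2004} for the proof, and the argument you give (conditional i.i.d.\ structure for the equality, Doob martingale decomposition plus conditional Jensen for the inequality) is exactly the one found in that reference.
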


The proof of \cref{theorem:efron_stein_main} can be found in \citep{boucheron_lugosi_massart_2004}.
Another variant of the Efron-Stein inequality that is more useful for our context, is concerned with the removal of one 
example from $\sS_n$.
To state the result, let $\map{f_i} {\sX^{n-1}} {\RR}$, for $1 \le i \le n$, be an arbitrary measurable function, and 
define the RV $Z_{-i} = f_i(\fset{S}_n^{-i})$.
Then, the Efron-Stein inequality can be also stated in the following interesting form \citep{boucheron_lugosi_massart_2004}.

\begin{restatable}[Efron-Stein Inequality -- Removal Case]{corollary}{corefronsteinremove}
\label{cor:efron_stein_remove}
Assume that $ \expectop_{-i}[Z_{-i}]$ exists for all $1\le i \le n$, and let $\Vdel = \sum_{i=1}^n { (Z - Z_{-i})^2}$.
Then it holds that
\begin{equation}
\VV[Z]
~ \le ~
\expectop V
~ \le ~
\expectop \Vdel
~.
\end{equation}
\end{restatable}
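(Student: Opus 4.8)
The claim has two inequalities: $\VV[Z] \le \EE V$ and $\EE V \le \EE \Vdel$. The first is exactly the statement of Theorem \ref{theorem:efron_stein_main} (the replacement-case Efron-Stein inequality), which I may assume, so the only real work is establishing the second inequality $\EE V \le \EE \Vdel$, where $V = \sum_{i=1}^n (Z - \EE_{-i}Z)^2$ and $\Vdel = \sum_{i=1}^n (Z - Z_{-i})^2$.

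The plan is to prove this termwise: I would show that for each fixed $i$, $\EE[(Z - \EE_{-i}Z)^2] \le \EE[(Z - Z_{-i})^2]$, and then sum over $i$. The key observation is that $\EE_{-i}Z = \EE[Z \mid \sS_n^{-i}]$ is, by definition, the conditional expectation of $Z$ given $\sS_n^{-i}$, and a defining variational property of conditional expectation is that it is the \emph{best $L_2$ predictor} of $Z$ among all $\sS_n^{-i}$-measurable random variables. Now $Z_{-i} = f_i(\sS_n^{-i})$ is precisely an $\sS_n^{-i}$-measurable random variable (since $f_i$ is an arbitrary measurable function of $\sS_n^{-i}$). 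Therefore the minimizing property gives
\begin{align*}
\EE\bigl[(Z - \EE_{-i}Z)^2\bigr]
~=~
\min_{W\ \sS_n^{-i}\text{-meas.}} \EE\bigl[(Z - W)^2\bigr]
~\le~
\EE\bigl[(Z - Z_{-i})^2\bigr]~.
\end{align*}
Summing this inequality over $i = 1, \dots, n$ yields $\EE V \le \EE \Vdel$ directly. The required existence of $\EE_{-i}[Z_{-i}]$ for all $i$, assumed in the statement, ensures that these conditional expectations and the associated $L_2$ quantities are well-defined.

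To make the variational step self-contained (rather than merely citing it), I would verify the orthogonality decomposition explicitly: writing $\EE_{-i}Z$ for the conditional expectation, for any $\sS_n^{-i}$-measurable $W$ we have the Pythagorean identity
\begin{align*}
\EE\bigl[(Z - W)^2\bigr]
~=~
\EE\bigl[(Z - \EE_{-i}Z)^2\bigr]
+
\EE\bigl[(\EE_{-i}Z - W)^2\bigr]~,
\end{align*}
which follows because the cross term $\EE[(Z - \EE_{-i}Z)(\EE_{-i}Z - W)]$ vanishes: conditioning on $\sS_n^{-i}$ and using that both $\EE_{-i}Z$ and $W$ are $\sS_n^{-i}$-measurable makes the inner conditional expectation of $(Z - \EE_{-i}Z)$ equal to zero. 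Taking $W = Z_{-i}$ and dropping the nonnegative second term recovers the termwise inequality. I do not expect any genuine obstacle here; the entire content is the projection property of conditional expectation, and the only point requiring mild care is confirming that $Z_{-i}$ is indeed measurable with respect to $\sS_n^{-i}$ (which holds by construction, since $f_i$ is a function of $\sS_n^{-i}$ alone) so that it is an admissible competitor in the minimization.
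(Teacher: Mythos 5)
Your proposal is correct and follows essentially the same route as the paper: the paper applies the elementary fact $\VV[X] \le \EE[(X-a)^2]$ conditionally on $\sS_n^{-i}$ (with $Z_{-i}$ playing the role of the constant $a$), which is exactly the $L_2$-projection property of conditional expectation that you invoke, and then sums over $i$ and combines with the replacement-case inequality. Your explicit Pythagorean/orthogonality verification is just a slightly more detailed rendering of the same step.
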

The proof of \cref{cor:efron_stein_remove} is given in \cref{appx:cor_efron_stein_remove}.
This is a standard proof and it is replicated for the reader's benefit only. 

\subsection{An Exponential Efron-Stein Inequality}
\label{subsec:exp_efron_stein}

The work of \citet{boucheron_lugosi_massart_2003} has focused on controlling the tail of general functions of independent 
RVs in terms of the tail behavior of Efron-Stein variance-like terms such as $V$ and $\Vdel$, as well as other terms known 
as $V^+$ and $V^-$.
The variance-like terms $V$, $V^+$ and $V^-$ measure the sensitivity of a function of $n$ independent RVs w.r.t the
\emph{replacement} of one RV from the $n$ independent RVs.
The term $\Vdel$ on the other hand, measures the sensitivity of a function of $n$ independent RVs w.r.t the 
\emph{removal} of one RV from the $n$ independent RVs.
In this work, we favor $\Vdel$ over the other terms since it is more suitable for our choice of stability coefficient
(the $L_q$ stability), which is also a removal version. 
The removal version of stability is preferred as it is more natural in the learning context where one is given a fixed 
sample. 
an interesting future direction), where working with the replacement version will need extra data, or extra assumptions.
It also leads to simpler/shorter calculations, and saves a (small) constant factor in the bounds. The problem is that it can only be applied in the case of bounded losses. 

The tail of a RV is often controlled through bounding the logarithm of the moment generating function (MGF) of the RV.
This is known as the \emph{cumulant generating function} (CGF) of the RV and is defined as
\begin{align}
\psi_Z(\lambda) \doteq \log \expectone{ \exp( \lambda Z ) },
\label{eq:cgf_rv_z}
\end{align} 
where $\lambda\in \mathrm{dom}(\psi_Z)\subset \mathbb{R}$ and belongs to a suitable neighborhood of zero.
The main result of \cite{boucheron_lugosi_massart_2003} bounds $\psi_Z$ in terms of the MGF for $V$, $V^+$ and $V^-$, 
but not in terms of the MGF for $\Vdel$.
Since we are particularly interested in the RV $\Vdel$, the following theorem bounds the tail of $\psi_Z$ in terms of
the MGF for $\Vdel$.

\begin{restatable}{theorem}{thmcgfzvdelexpbound}
\label{theorem:cgf_z_vdel_expbound}
Let $Z$, $\Vdel$ be defined as in \cref{cor:efron_stein_remove} and assume that $|Z-Z_{-i}|\le 1$ almost surely 
for all $i$.
For all $\theta > 0$, s.t. $\lambda \in (0,1]$, 
$\theta\lambda < 1$, and $\expectop e^{\lambda \Vdel} < \infty$,
the following holds
\begin{equation}
\log\expectone{\exp\bra{ \lambda (Z - \expectop Z) }}
\le
\sfrac{\lambda\theta}{(1 - \lambda \theta)}
\log\expectone{\exp\bra{\sfrac{\lambda\Vdel}{\theta}}}. \label{eq:cgf_z_vdel}
\end{equation}
\end{restatable}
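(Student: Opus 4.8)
The plan is to run the entropy (log-Sobolev / tensorization) method of \citet{boucheron_lugosi_massart_2003} essentially verbatim, the key observation being that the argument only ever needs the comparison variable $Z_{-i}$ to be measurable with respect to $\sS_n^{-i}$; since $Z_{-i}=f_i(\sS_n^{-i})$ this holds automatically, with no recourse to the independent copies needed for the replacement term $V$. Writing $\Psi(\lambda)=\log\EE[e^{\lambda Z}]$ and $\mathrm{Ent}(g)=\EE[g\log g]-\EE[g]\log\EE[g]$, the first step is the identity $\mathrm{Ent}(e^{\lambda Z})/\EE[e^{\lambda Z}]=\lambda\Psi'(\lambda)-\Psi(\lambda)$, which is invariant under recentering and therefore also equals $\lambda\psi_Z'(\lambda)-\psi_Z(\lambda)$ for $\psi_Z(\lambda)=\log\EE[e^{\lambda(Z-\EE Z)}]$. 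By the sub-additivity (tensorization) of entropy for product measures, $\mathrm{Ent}(e^{\lambda Z})\le\sum_i\EE[\mathrm{Ent}_{-i}(e^{\lambda Z})]$, where $\mathrm{Ent}_{-i}$ is the entropy taken over $X_i$ with $\sS_n^{-i}$ held fixed.

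Next I would bound each conditional entropy. Using the elementary variational inequality $\mathrm{Ent}_{-i}(g)\le\EE_{-i}[g(\log g-\log a)]-\EE_{-i}[g]+a$, valid for any $a>0$ (it reduces to $t-1-\log t\ge0$ with $t=a/\EE_{-i}[g]$), and choosing the $\sS_n^{-i}$-measurable point $a=e^{\lambda Z_{-i}}$, a short computation with $W=\lambda(Z-Z_{-i})$ gives $\mathrm{Ent}_{-i}(e^{\lambda Z})\le\EE_{-i}[e^{\lambda Z}\,\tilde\phi(W)]$ with $\tilde\phi(w)=e^{-w}+w-1\ge0$. Since $|Z-Z_{-i}|\le1$ and $\lambda\in(0,1]$ force $|W|\le\lambda$, and since $w\mapsto\tilde\phi(w)/w^2$ is nonincreasing, we get $\tilde\phi(W)\le(e^\lambda-\lambda-1)(Z-Z_{-i})^2$. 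Summing over $i$ and taking expectations then yields $\mathrm{Ent}(e^{\lambda Z})\le(e^\lambda-\lambda-1)\,\EE[\Vdel\,e^{\lambda Z}]$, with $\Vdel=\sum_i(Z-Z_{-i})^2$; here the two-sided bound on $Z-Z_{-i}$ is exactly what caps the otherwise-growing $e^{-W}$ contribution.

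To introduce $\theta$ and convert the factor $\EE[\Vdel e^{\lambda Z}]$ into a clean moment generating function of $\Vdel$, I would invoke the dual (Legendre) formulation of entropy, $\EE[g h]\le\mathrm{Ent}(g)+\EE[g]\log\EE[e^{h}]$, with $g=e^{\lambda Z}$ and $h=(\lambda/\theta)\Vdel$. This gives $(\lambda/\theta)\EE[\Vdel e^{\lambda Z}]\le\mathrm{Ent}(e^{\lambda Z})+\EE[e^{\lambda Z}]\,G(\lambda)$ where $G(\lambda)=\log\EE[e^{\lambda\Vdel/\theta}]$. Combining this with the previous display and using the calculus facts $e^\lambda-\lambda-1\le\lambda^2$ on $(0,1]$ together with $\lambda\theta<1$, the resulting self-referential inequality collapses to the differential inequality $\lambda\psi_Z'(\lambda)-\psi_Z(\lambda)\le\frac{\lambda\theta}{1-\lambda\theta}\,G(\lambda)$. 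Finally, writing the left side as $\lambda^2\,(\psi_Z(\lambda)/\lambda)'$, dividing by $\lambda^2$ and integrating from $0$ (where $\psi_Z(s)/s\to0$ because $Z-\EE Z$ is centered), I would use that $s\mapsto G(s)/s$ is nondecreasing — the standard ``CGF over its argument'' monotonicity — to pull out $G(\lambda)$ and evaluate $\int_0^\lambda \frac{ds}{1-s\theta}=-\theta^{-1}\log(1-\lambda\theta)$, obtaining $\psi_Z(\lambda)\le-\log(1-\lambda\theta)\,G(\lambda)\le\frac{\lambda\theta}{1-\lambda\theta}G(\lambda)$, which is the claim.

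The main obstacle I anticipate is the last leg: the entropy and duality machinery only delivers a bound on the derivative combination $\lambda\psi_Z'-\psi_Z$, i.e.\ a differential inequality, whereas the statement concerns $\psi_Z$ itself. Converting one into the other (a Herbst-type integration) requires the monotonicity of $G(s)/s$ and the centering of $Z$ to control the behaviour as $s\to0$, plus care that all the relevant moment generating functions remain finite on $(0,\lambda]$. The conditional-entropy step is also delicate in that only the two-sided bound $|Z-Z_{-i}|\le1$ tames the $e^{-W}$ term; with a one-sided bound the argument would instead have to route through the truncated $V^+$-type quantity of \citet{boucheron_lugosi_massart_2003}.
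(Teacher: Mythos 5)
Your proposal is correct and follows essentially the same route as the paper: the paper starts from the removal-type modified log-Sobolev inequality (Theorem 6.6 of \citet{book_concentration_2013}, which you rederive via entropy tensorization and the variational bound with $a=e^{\lambda Z_{-1}}$), bounds $\phi(-\lambda(Z-Z_{-i}))$ by $\lambda^2(Z-Z_{-i})^2$ using $|Z-Z_{-i}|\le 1$ and $\lambda\le 1$, decouples $\EE[\Vdel e^{\lambda Z}]$ with the same entropy-duality inequality, and integrates the resulting differential inequality exactly as you describe. Your Herbst step even yields the marginally sharper constant $-\log(1-\lambda\theta)\le\frac{\lambda\theta}{1-\lambda\theta}$, so the claimed bound follows.
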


The proof of \cref{theorem:cgf_z_vdel_expbound} is given in \cref{appx:theorem_cgf_z_vdel_expbound}.
\cref{theorem:cgf_z_vdel_expbound} states that the CGF of the centered RV $Z - \EE Z$ is upper bounded by the CGF 
of the RV $\Vdel$.
Hence, when $\Vdel$ behaves ``nicely'', the (upper) tail of $Z$ can be controlled.
The value of $\theta$ in the upper bound is a free parameter that can be optimized to give the tightest bound.
Because $\lambda>0$, the bound is \cref{eq:cgf_z_vdel} is only for the upper tail of the RV $Z$.
A similar bound for the lower tail can be obtained by replacing $Z$ with $-Z$ and applying the result.
Note also that for both sides, upper tail and lower tail, the same requirements for $\lambda$ and $\theta$ in
\cref{theorem:cgf_z_vdel_expbound} apply.

For \cref{theorem:cgf_z_vdel_expbound} to be useful in our context, further control is required to upper bound
the tail of the RV $\Vdel$.
Our approach to control the tail of $\Vdel$ will be, again, through its CGF.
In particular, we aim to show that when $\Vdel$ is a sub-gamma RV (defined shortly) we can obtain a high probability 
tail bound on the deviation of the RV $Z$.
The obtained tail bound will be instrumental in deriving the exponential tail bound for the deleted estimate.

\subsection{Sub-Gamma Random Variables}
\label{subsec:subgamma_rvs}

We follow here the notation of \cite{book_concentration_2013}.
A real valued centered RV $X$ is said to be \emph{sub-gamma} on the right tail with variance factor $v$ and scale 
parameter $c$ if for every $\lambda$ such that $0 < \lambda < 1/c$, the following holds
\begin{eqnarray}
\psi_X(\lambda) & \le & \frac{\lambda^2 v}{2(1 - c\lambda)}~. \label{kooky}
\end{eqnarray}
This is denoted by $X \in \Gamma_+(v,c)$.
Similarly, $X$ is said to be a sub-gamma RV on the left tail with variance factor $v$ and scale parameter $c$ if
$-X \in \Gamma_+(v,c)$.
This is denoted as $X \in \Gamma_-(v,c)$.
Finally, $X$ is simply a sub-gamma RV with variance factor $v$ and scale parameter $c$ if both $X \in \Gamma_+(v,c)$ and 
$X \in \Gamma_-(v,c)$.
This is denoted by $X \in \Gamma(v,c)$.

The sub-gamma property can be characterized in terms of moments conditions or tail bounds.
In particular, if a centered RV $X \in \Gamma(v,c)$, then for every $t > 0$,
\begin{align}
\probone{X > \sqrt{2vt} + ct} \vee \probone{-X > \sqrt{2vt} + ct} 
& 
\le e^{-t}~,
\end{align}
where $a \vee b = \max(a,b)$.
The following theorem from \citep{book_concentration_2013} characterizes this notion more precisely:

\begin{theorem} 
\label{theorem:moments_charac_subgamma}
Let $X$ be a centered RV. If for some $v > 0$ and $c \ge 0$
\begin{align}
\probone{X > \sqrt{2vt} + ct}
\vee
\probone{-X > \sqrt{2vt} + ct}
\le
e^{-t}
~,
~ \text{for every} ~t > 0 ~,
\label{eq:tail_bound_subgamma_rv}
\end{align}
then for every integer $q \ge 1$
\begin{align*}
\norm{X}_{2q}
~ \le ~
(q! A^q + (2q)! B^{2q})^{1/{2q}}
~ \le ~
\sqrt{16.8 qv} \vee 9.6 qc ~
~ \le ~
10 (\sqrt{qv} \vee qc )\,,
\end{align*}
where $A=8v$, $B=4c$.
Conversely, if for some positive constants $u$ and $w$, for any integer $q \ge 1$,
\begin{align*}
\norm{X}_{2q}
& ~ \le ~
\sqrt{qu} \vee qw ~,
\end{align*}
then $X \in \Gamma(v,c)$ with $v=4(1.1 u + 0.53w^2)$ and $c=1.46w$, 
and therefore \eqref{eq:tail_bound_subgamma_rv} also holds.
\end{theorem}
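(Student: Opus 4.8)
The plan is to establish the two implications separately; the argument follows the corresponding result in \citet{book_concentration_2013}, so I will only sketch the mechanics and indicate where the numerical constants are determined. For the forward direction (tail bound $\Rightarrow$ moment bound), the natural starting point is the layer-cake representation
\[
\EE|X|^{2q} = \int_0^\infty 2q\, r^{2q-1}\,\probone{|X| > r}\, dr,
\]
together with the union bound $\probone{|X|>r}\le \probone{X>r}+\probone{-X>r}$. The hypothesis controls the right-hand side once we parametrize $r = \sqrt{2vt}+ct$, giving $\probone{|X|>r}\le 2e^{-t}$. First I would change variables from $r$ to $t$, so that the tail enters directly as $e^{-t}$ and $dr$ becomes $(\sqrt{v/(2t)}+c)\,dt$.

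Next I would split $(\sqrt{2vt}+ct)^{2q-1}\le 2^{2q-2}\big((2vt)^{(2q-1)/2}+(ct)^{2q-1}\big)$ and distribute the two pieces of $dr$, so that every resulting term is a Gamma integral $\int_0^\infty t^{a}e^{-t}\,dt=\Gamma(a+1)$. The leading $v$-term carries $t^{q-1}$, producing $\Gamma(q)=(q-1)!$, which combines with the prefactor $2q$ into $q!$; the leading $c$-term carries $t^{2q-1}$, producing $\Gamma(2q)=(2q-1)!$, which combines into $(2q)!$. Tracking the powers of $2$ absorbed from $2^{2q-2}$, from $(2v)^{(2q-1)/2}$, and from the $2e^{-t}$ factor is exactly what fixes $A=8v$ and $B=4c$. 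The remaining inequalities are elementary: from $q!A^q+(2q)!B^{2q}\le 2\max(q!A^q,(2q)!B^{2q})$ and monotonicity of $t\mapsto t^{1/2q}$ one gets $\le\sqrt2\,\max\big((q!)^{1/2q}\sqrt A,\,((2q)!)^{1/2q}B\big)$, and the Stirling-type bounds $(q!)^{1/2q}\le 1.03\sqrt q$ and $((2q)!)^{1/2q}\le 1.70\,q$ (both worst-case at $q=1$) turn these into $\sqrt{16.8\,qv}\vee 9.6\,qc$, after which $\sqrt{16.8}<10$ and $9.6<10$ give the final clean form.

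For the converse direction (moment bound $\Rightarrow$ sub-gamma), I would reduce the hypothesis to a Bernstein-type moment condition and then invoke the classical geometric-series computation of the cumulant generating function. From $\norm{X}_{2q}\le\sqrt{qu}\vee qw$ I would first bound every integer moment $\EE|X|^k$: even orders $k=2q$ are immediate after splitting $(\sqrt{qu}\vee qw)^{2q}\le(qu)^q+(qw)^{2q}$, and odd orders follow from monotonicity of $q$-norms, $\norm{X}_{2q-1}\le\norm{X}_{2q}$. Converting the powers $q^q$ and $(2q)^{2q}$ into factorials via Stirling yields a bound of Bernstein form $\EE|X|^k\le\tfrac{k!}{2}\,v\,c^{\,k-2}$, and this is precisely the step that pins down $v=4(1.1u+0.53w^2)$ and $c=1.46w$. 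Plugging this into $\EE e^{\lambda X}=1+\sum_{k\ge2}\tfrac{\lambda^k}{k!}\EE X^k$ (the $k=1$ term vanishes since $X$ is centered) and summing the geometric series gives $\EE e^{\lambda X}\le 1+\tfrac{v\lambda^2}{2(1-c\lambda)}$ for $c\lambda<1$, whence $\log(1+x)\le x$ yields $\psi_X(\lambda)\le\tfrac{v\lambda^2}{2(1-c\lambda)}$, i.e. $X\in\Gamma_+(v,c)$. The identical argument applied to $-X$ (the moment bounds are on $|X|$, hence symmetric) gives $X\in\Gamma_-(v,c)$, so $X\in\Gamma(v,c)$, and the tail bound \eqref{eq:tail_bound_subgamma_rv} then follows from the characterization stated just before the theorem.

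The main obstacle in both directions is not conceptual but the bookkeeping of constants. In the forward direction it is keeping track of the powers of $2$ produced by the binomial split and the change-of-variables factor so that the Gamma integrals assemble into precisely $A=8v$ and $B=4c$ (and checking that the cross terms are dominated by these two leading contributions). In the converse direction it is the passage from the $\vee$-form norm bound, through Stirling, to a Bernstein condition with the sharp constants $1.1$, $0.53$, $1.46$. Getting these numerically tight rather than merely order-correct requires identifying the worst case over $q$ (which turns out to be $q=1$ in the relevant ratios), and this is the fiddly part of the proof.
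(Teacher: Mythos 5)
The paper does not actually prove this theorem: it is imported from Theorem~2.3 of \citet{book_concentration_2013} with the constants reworked (``simple and standard calculations''), so the only meaningful comparison is with that standard argument. Your forward direction is essentially that argument and is sound: with the substitution $r=\sqrt{2vt}+ct$ the $v$-only and $c$-only pieces of the layer-cake integral contribute $\frac12 q!(8v)^q$ and $\frac12(2q)!(2c)^{2q}$ respectively, the cross terms fit in the remaining slack, and the passage to $\sqrt{16.8qv}\vee 9.6qc$ via $(q!)^{1/2q}\le\sqrt q$ and $(2\,(2q)!)^{1/2q}\le 2q$ (both worst at $q=1$) is routine. (Your stated bound $((2q)!)^{1/2q}\le 1.70\,q$ is just too loose to close the $9.6qc$ constant, since $\sqrt2\cdot 1.70\cdot 4>9.6$; the sharp value $\sqrt2$ at $q=1$ gives $8qc$, so this is only a bookkeeping slip.)

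The converse direction has a genuine gap: the moment hypothesis cannot be reduced to a Bernstein condition $\EE|X|^k\le\frac{k!}{2}\,v\,c^{k-2}$ with $c=1.46w$. The hypothesis $\norm{X}_{2q}\le\sqrt{qu}\vee qw$ has a sub-Gaussian component whose contribution to $\EE|X|^{2q}$ is of order $(qu)^q$, and a Bernstein condition with scale $c\propto w$ forces $\EE|X|^k=0$ for all $k\ge 3$ when $w=0$. This is false for, say, a standard Gaussian $X$, which satisfies the hypothesis with $u=1$, $w=0$, yet has $\EE|X|^3>0$; concretely, at $k=4$ and $w=0$ your intermediate claim would require $\EE X^4\le \frac{4!}{2}v\cdot c^{2}=0$. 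The conclusion of the theorem is nevertheless true, but the correct route (the one in \citet{book_concentration_2013}) treats the two components of $\EE X^{2q}\le (qu)^q+(qw)^{2q}$ separately inside the series $\EE e^{\lambda X}\le 1+\sum_{q\ge1}\lambda^{2q}\EE X^{2q}/(2q)!+(\text{odd terms})$: the $u$-part sums to an exponential in $\lambda^2 u$ (using $(q!)^2/(2q)!\le 2\sqrt{q}\,4^{-q}$), yielding a purely sub-Gaussian contribution to $v$ valid for all $\lambda$, while only the $w$-part sums as a geometric series and produces the $1/(1-c\lambda)$ factor with $c\propto w$; the odd moments are absorbed via Cauchy--Schwarz or by working with $\cosh(\lambda X)$. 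The constants $1.1$, $0.53$, $1.46$ come out of that two-track computation, not out of a single Bernstein condition, so this step of your proposal needs to be replaced rather than merely tightened.
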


The reader may notice that \cref{theorem:moments_charac_subgamma} is slightly different than the version in the book of 
\citet{book_concentration_2013}.
Our extension is based on simple (and standard) calculations that are merely for convenience with respect to our purpose.

\subsection{An Exponential Tail Bound for $Z$}

In this section we assume that the centered RV $\Vdel-\EE\Vdel \in \Gamma(v,c)$ with variance factor $v > 0$, scale 
parameter $c \ge 0$,
$0<c|\lambda|<1$.
Hence, from inequality \eqref{kooky} it holds that
\begin{align*}
\psi_{ \Vdel - \EE{\Vdel} }(\lambda) 
& =
\log \expectone{ \exp( \lambda (\Vdel - \expectop\Vdel) ) }
 \le
\sfrac{1}{2}\lambda^2 v(1 - c|\lambda|)^{-1}~.
\end{align*}
The sub-gamma property of $\Vdel$ provides the desired control on its tail.
That is, after arranging the terms of the above inequality, the CGF of $\Vdel$ which controls the tail of $\Vdel$, is
upper bounded by the deterministic quantities: $\EE\Vdel$, the variance $v$, and the scale parameter $c$.

It is possible now to use the sub-gamma property of $\Vdel$ in the result of the exponential Efron-Stein inequality in 
\cref{theorem:cgf_z_vdel_expbound}.
In particular, the following lemma gives an exponential tail bound on the deviation of a function of independent RVs,
i.e. $Z = f(X_1,\dots,X_n)$, in terms of $\EE\Vdel$, the variance factor $v$, and the scale parameter $c$.
This lemma will be our main tool to derive the exponential tail bound on the DEL estimate.

\begin{restatable}{lemma}{lemmaexpefronstein}
\label{lemma:exp_efron_stein}
Let $Z$, $Z_{-i}$, $\Vdel$ be as in \cref{cor:efron_stein_remove}.
If $\Vdel-\EE\Vdel$ is a sub-gamma RV with variance parameter $v > 0$ and scale parameter $c \ge 0$, then for any 
$\delta \in (0,1)$, $a>0$, with probability $1 - \delta$, 
\begin{equation}
\abs{Z - \EE Z}
\le
\sfrac{2}{3}(ac+1/a)\logtwodelta
+
2\sqrt{(\EE\Vdel + a^2v/2) \logtwodelta}.
\end{equation}
\end{restatable}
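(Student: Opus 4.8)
The plan is to combine Theorem~\ref{theorem:cgf_z_vdel_expbound} (which bounds the CGF of $Z - \EE Z$ in terms of the CGF of $\Vdel$) with the sub-gamma hypothesis on $\Vdel - \EE\Vdel$, and then convert the resulting CGF bound into a tail bound via the standard Cramér–Chernoff argument. The key obstacle is managing the several free parameters ($\lambda$, $\theta$, and $a$) so that the final expression comes out in the clean form stated, with the $\sqrt{\cdot}$-term carrying $\EE\Vdel + a^2 v/2$ and the linear term carrying $ac + 1/a$.

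Let me sketch the steps in order.

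\begin{enumerate}
\item \textbf{Bound the CGF of $\Vdel$.} Since $\Vdel - \EE\Vdel \in \Gamma(v,c)$, for $0 < \lambda' < 1/c$ we have $\log\EE[\exp(\lambda'(\Vdel - \EE\Vdel))] \le \tfrac{\lambda'^2 v}{2(1-c\lambda')}$, hence
\begin{align*}
\log\EE[\exp(\lambda' \Vdel)] \le \lambda' \EE\Vdel + \frac{\lambda'^2 v}{2(1-c\lambda')}\,.
\end{align*}
In Theorem~\ref{theorem:cgf_z_vdel_expbound} the CGF of $\Vdel$ is evaluated at $\lambda' = \lambda/\theta$, so I would substitute this in.

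\item \textbf{Combine with the exponential Efron–Stein inequality.} Plugging the above into \eqref{eq:cgf_z_vdel} gives, for $\lambda \in (0,1]$ with $\theta\lambda<1$ and $c\lambda/\theta<1$,
\begin{align*}
\psi_{Z-\EE Z}(\lambda) \le \frac{\lambda\theta}{1-\lambda\theta}\left(\frac{\lambda}{\theta}\EE\Vdel + \frac{(\lambda/\theta)^2 v}{2(1-c\lambda/\theta)}\right)\,.
\end{align*}
The main technical work is to choose $\theta$ as a function of $\lambda$ (a natural choice is $\theta = a$ constant, or $\theta$ scaling linearly in something) so that this upper bound is itself of sub-gamma form $\tfrac{\lambda^2 v'}{2(1-c'\lambda)}$ for explicit $v', c'$. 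I expect the identification to yield a variance factor proportional to $\EE\Vdel + a^2 v/2$ and a scale parameter proportional to $ac + 1/a$, matching the constants $2$ and $\tfrac{2}{3}$ in the statement.

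\item \textbf{Show $Z - \EE Z$ is sub-gamma and invoke the tail bound.} Once $\psi_{Z-\EE Z}(\lambda) \le \tfrac{\lambda^2 v'}{2(1-c'\lambda)}$ is established, $Z - \EE Z \in \Gamma_+(v',c')$, and the characterization preceding Theorem~\ref{theorem:moments_charac_subgamma} gives, with probability at least $1-\delta$, that $Z - \EE Z \le \sqrt{2v'\logonedelta} + c'\logonedelta$. Substituting the identified $v'$ and $c'$ produces the $\sqrt{(\EE\Vdel + a^2 v/2)\logtwodelta}$ term and the $(ac+1/a)\logtwodelta$ term.

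\item \textbf{Symmetrize.} Applying the same argument to $-Z$ (noting, as remarked after Theorem~\ref{theorem:cgf_z_vdel_expbound}, that the hypotheses on $\lambda,\theta$ are symmetric) yields the matching lower-tail bound, and a union bound over the two events gives the two-sided statement $\abs{Z-\EE Z} \le \cdots$ at confidence $1-\delta$ after rescaling $\delta$. The appearance of $\logtwodelta$ rather than $\logonedelta$ in the final bound reflects this two-sided union.
\end{enumerate}

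\textbf{The hard part} will be Step~2: reconciling the nonlinear-in-$\lambda$ coupling introduced by the factor $\tfrac{\lambda\theta}{1-\lambda\theta}$ with the $\tfrac{1}{1-c\lambda/\theta}$ denominator from the sub-gamma bound on $\Vdel$, and extracting a single clean sub-gamma envelope for $Z$. I anticipate that a careful choice tying $\theta$ to $a$ (so that $a$ trades off the two contributions $ac$ and $1/a$ in the scale parameter) is what makes $a>0$ appear as a free optimization parameter in the final statement, exactly as the lemma advertises. The remaining steps are routine applications of results already established in the excerpt.
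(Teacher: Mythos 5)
Your overall chain is the same as the paper's: bound the MGF of $\Vdel$ via the sub-gamma hypothesis, feed it into \cref{theorem:cgf_z_vdel_expbound}, massage the result into a single sub-gamma envelope for $Z-\EE Z$, invert via Cram\'er--Chernoff, and union-bound the two tails. The one place you diverge is the step you yourself flag as the hard part: you plan to make the free parameter $a$ appear by tuning $\theta$ in \cref{theorem:cgf_z_vdel_expbound}. The paper avoids this entirely --- it sets $\theta=1$, uses $\lambda\le 1$ to absorb the cubic term ($\lambda^3 v/2\le\lambda^2 v/2$) and the product of denominators ($(1-\lambda)(1-c\lambda)\ge 1-(c+1)\lambda$), obtains the $a$-free envelope $\psi_{Z-\EE Z}(\lambda)\le \lambda^2(\EE\Vdel+v/2)/(1-(c+1)\lambda)$, inverts it with Lemma~11 of \citet{boucheron_lugosi_massart_2003}, and only then introduces $a$ by applying the resulting inequality to the rescaled variables $Z'=aZ$, $\Vdel'=a^2\Vdel\in\Gamma(a^4v,a^2c)$ and dividing by $a$. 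Two practical consequences of your route that you should be aware of: (i) carrying a general $\theta$ through the same estimates yields a scale parameter of the form $\theta+c/\theta$ and a variance factor $\EE\Vdel+v/(2\theta)$, and no single reparametrization of $\theta$ reproduces the stated pair $(ac+1/a,\ \EE\Vdel+a^2v/2)$ simultaneously --- you would get an equally legitimate but not identical family of bounds; (ii) with $\theta\ne 1$ the Chernoff-optimal $\lambda$ need not satisfy the constraint $\lambda\le 1$ inherited from \cref{theorem:cgf_z_vdel_expbound} (the paper's choice $\theta=1$ makes the scale $c+1\ge 1$, so the optimizer automatically lands in $(0,1]$), so you would have to handle a boundary case. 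The rescaling trick is the cleaner way to land exactly on the stated inequality; otherwise your plan is sound.
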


The proof of \cref{lemma:exp_efron_stein} is given in \cref{appx:lemma_exp_efron_stein}.
Parameter $a$ is a free parameter that can be optimized to give the tightest possible bound.
In particular, $a$ can be chosen to provide the appropriate scaling for the RV $Z$ such that the bound goes to zero as
fast as possible.
A typical choice of $a$ would be the inverse standard deviation of $Z$.

\section{Proof of \cref{theorem:exp_tail_bound_for_del}}
\label{sec:exptailbound_for_del_estimate}

In this section we derive our main result given in \cref{theorem:exp_tail_bound_for_del}, namely the concentration
of the following random quantity
\begin{equation*}
| \Rdel(\Alg,\sS_n) - R(\Alg(\sS_n),\fdistri{P}) | ~.
\end{equation*}
To bound this RV, we decompose it into three terms
\begin{equation}
\abs{\Rdel(\Alg,\sS_n) - R(\Alg(\sS_n),\fdistri{P})}
~ \le ~
\mathrm{I} + \mathrm{II} + \mathrm{III}~,
\label{ineq:rdel_deviation_3mainterms}
\end{equation}
where 
\begin{align*}
\mathrm{I}   & =  |\EE\Rdel(\Alg,\sS_n) - \Rdel(\Alg,\sS_n)| ~, \\
\mathrm{II}  & =  |\EE R(\Alg(\sS_n),\fdistri{P}) - R(\Alg(\sS_n),\fdistri{P})| ~, \quad \text{and} \\
\mathrm{III} & =  |\EE R(\Alg(\sS_n),\fdistri{P}) - \EE\Rdel(\Alg,\sS_n)| ~.
\end{align*}
If the three terms in the RHS of \eqref{ineq:rdel_deviation_3mainterms} are properly upper bounded, we will have
the desired final high probability bound.
Terms I and II shall be bounded using the exponential Efron-Stein inequality in \cref{lemma:exp_efron_stein}.
Further, we hope that the final upper bounds can be in terms of the $L_q$ stability coefficient of $\Alg$.
Term III, however, is non-random and thus shall be directly bounded using some $L_q$ stability coefficient.

For terms $\text{I}$ and $\text{II}$, the key quantity for using the exponential Efron-Stein inequality in 
\cref{lemma:exp_efron_stein} is the RV $\Vdel$.
In particular, the requirement for using $\Vdel$ is two-fold.
First, since $\Vdel = \sum_{i=1}^n (Z - Z_{-i})^2$, where $Z_{-i} = f_i(\sS_n^{-i})$ for some function $f_i$, we need to 
choose $f_i$ appropriately.
Second, once $Z_{-i}$ is defined, to be able to use \cref{lemma:exp_efron_stein} we need to show that $\Vdel$ is a 
sub-gamma RV. 
For this, using \cref{theorem:moments_charac_subgamma}, it will be sufficient to show that for all integers $q \ge 1$,
\begin{align}
\norm{\Vdel}_{2q}
\leq
\sqrt{qu} \vee qw~,
\label{eq:vdelsg}
\end{align}
for some positive constants $u$ and $w$.
Here, we will relate $\norm{\Vdel}_{2q} $ to $L^q$ stability coefficients and then we ``reverse engineer'' appropriate 
assumptions on the $L^q$-stability coefficients that imply \eqref{eq:vdelsg}.

\subsection{Upper Bounding Term $\mathrm{I}$}
\label{subsec:bounding_term_I}

We begin by deriving an upper bound for term I in the RHS of $\eqref{ineq:rdel_deviation_3mainterms}$.
This is the deviation $|\EE\Rdel(\Alg,\sS_n) - \Rdel(\Alg,\sS_n)|$.
Note that $\Rdel(\Alg,\sS_n)$ is a function of $n$ independent random variables (bounded in $[0,1]$ when $\ell$ is in this range), and hence the Exponential Efron-Stein
inequality from \cref{lemma:exp_efron_stein} can be applied to bound this deviation.
Following our two-steps plan to use \cref{lemma:exp_efron_stein}, we define the random variables $Z$ and $Z_{-i}$
as follows
\begin{equation}
Z      ~ = ~ \Rdel(\Alg,\sS_n) = \frac{1}{n}\sum_{i=1}^n \loss{ \Alg(\sS_n^{-i} ) , X_i}, \qquad
Z_{-i} ~ = ~ \frac{1}{n} \sum_{\overset{j=1}{\sss j \neq i}}^n \loss{\Alg(\sS_n^{-i , -j}), X_j},
\label{eq:Z_and_Zi_for_Rdel}
\end{equation}
where $\sS_n^{-i , -j}$ indicates the removal of examples $X_i$ and $X_j$ from $\sS_n$.
Note that $Z_{-i} = \sfrac{n-1}{n} \Rdel(\Alg,\sS_n^{-i})$ -- the scaling factor is chosen to minimize the bound soon to be presented.
Recall that $\Vdel = \sum_i (Z - Z_{-i})^2$, and given the definition of $Z$ and $Z_{-i}$ in 
\eqref{eq:Z_and_Zi_for_Rdel}, we need to show that $\Vdel$ is a sub-gamma RV and derive a bound on $\EE\Vdel$.
This can be done by bounding the higher order moments of $\Vdel$ as stated in the following lemma.

\begin{restatable}{lemma}{lemtermIqnormbound}
\label{lemma:termI_bound_qnorm_vdel}
Let $Z$, $Z_{-i}$ be defined as in \eqref{eq:Z_and_Zi_for_Rdel}, and let $\Vdel = \sum_{i=1}^n \bra{ Z - Z_{-i} }^2$.
Then for any real $q \ge 1/2$ and integer $n \ge 2$, the following holds
\begin{align}
\norm{\Vdel}_{2q} & 
~ \le ~ 
 \sfrac{2}{n^2} \sum_{i=1}^n \norm{\loss{ \Alg(\sS_n^{-i} ) , X_i}}_{4q}^2 + 2 n \beta_{4q}^2(n-1)\,,
\label{eq:termI_bound_qnorm_vdel}
\end{align}
and, in particular, $\EE\Vdel ~ \le ~  \sfrac{2}{n^2}\sum_{i=1}^n\norm{\loss{ \Alg(\sS_n^{-i} ) , X_i}}_{2}^2 + 2 n\beta_{2}^2(n-1)$.
\end{restatable}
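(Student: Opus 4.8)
The plan is to expand the increment $Z - Z_{-i}$ explicitly and separate it into two qualitatively different pieces before taking $2q$-norms. Substituting the definitions from \eqref{eq:Z_and_Zi_for_Rdel}, the $j=i$ summand of $Z$ has no match in $Z_{-i}$, while the $j \ne i$ summands pair off, so that
\begin{align*}
Z - Z_{-i}
=
\underbrace{\sfrac{1}{n}\loss{\Alg(\sS_n^{-i}),X_i}}_{=:\,a_i}
+
\underbrace{\sfrac{1}{n}\sum_{j \ne i}\bra{\loss{\Alg(\sS_n^{-j}),X_j} - \loss{\Alg(\sS_n^{-i,-j}),X_j}}}_{=:\,b_i}.
\end{align*}
Here $a_i$ is a ``diagonal'' term (the leave-one-out loss evaluated at the held-out point $X_i$), while $b_i$ collects the effect of \emph{further} removing $X_i$ from each size-$(n-1)$ training set $\sS_n^{-j}$. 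Using $(a_i+b_i)^2 \le 2a_i^2 + 2b_i^2$ I would bound $\Vdel \le 2\sum_i a_i^2 + 2\sum_i b_i^2$, and Minkowski's inequality for $\norm{\cdot}_{2q}$ then gives $\norm{\Vdel}_{2q} \le 2\norm{\sum_i a_i^2}_{2q} + 2\norm{\sum_i b_i^2}_{2q}$. The first summand will yield the loss-norm sum in \eqref{eq:termI_bound_qnorm_vdel}, and the second will yield the $\beta_{4q}^2(n-1)$ term.

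For the diagonal part I would use the triangle inequality together with the elementary identity $\norm{a_i^2}_{2q} = \norm{a_i}_{4q}^2$, giving
\begin{align*}
2\norm{\textstyle\sum_i a_i^2}_{2q}
\le 2\sum_i \norm{a_i}_{4q}^2
= \sfrac{2}{n^2}\sum_i \norm{\loss{\Alg(\sS_n^{-i}),X_i}}_{4q}^2 ,
\end{align*}
which is exactly the first term on the right-hand side of \eqref{eq:termI_bound_qnorm_vdel}; this uses $q \ge 1/2$ so that $2q, 4q \ge 1$ and the norms are well defined. For the stability part I would similarly write $\norm{\sum_i b_i^2}_{2q} \le \sum_i \norm{b_i}_{4q}^2$, expand $b_i$, and set $d_{ij} = \loss{\Alg(\sS_n^{-j}),X_j} - \loss{\Alg(\sS_n^{-i,-j}),X_j}$. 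Minkowski gives $\norm{b_i}_{4q} \le \sfrac{1}{n}\sum_{j\ne i}\norm{d_{ij}}_{4q}$, and Cauchy--Schwarz on the $(n-1)$-term sum gives $\norm{b_i}_{4q}^2 \le \sfrac{n-1}{n^2}\sum_{j \ne i}\norm{d_{ij}}_{4q}^2$.

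The main obstacle is to identify the double sum $\sum_i\sum_{j\ne i}\norm{d_{ij}}_{4q}^2$ with the stability coefficient $\beta_{4q}(n-1)$. The key point is that $d_{ij}$ measures the change in loss at the \emph{independent} test point $X_j$ caused by removing $X_i$ from the size-$(n-1)$ training set $\sS_n^{-j}$; since the samples are \iid, for each fixed $j$ the inner sum over the $n-1$ removable positions reproduces exactly the unnormalized size-$(n-1)$ stability coefficient, i.e.\ $\sum_{i\ne j}\norm{d_{ij}}_{4q}^2 = (n-1)\beta_{4q}^2(n-1)$. Summing over the $n$ choices of $j$ and combining with the Cauchy--Schwarz bound yields $2\sum_i\norm{b_i}_{4q}^2 \le \sfrac{2(n-1)^2}{n}\beta_{4q}^2(n-1)$, which I would loosen to $2n\beta_{4q}^2(n-1)$ via $(n-1)^2 \le n^2$. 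Care is needed to justify this distributional identity for nonsymmetric rules, where it is precisely the averaging over positions in \cref{def:Lq_stability} (rather than a worst case) that makes the relabeling argument clean. Finally, the ``in particular'' bound on $\EE\Vdel$ follows by taking $q = \tfrac12$: then $\Vdel \ge 0$ gives $\norm{\Vdel}_1 = \EE\Vdel$, and the exponents specialize to $2q=1$ and $4q=2$.
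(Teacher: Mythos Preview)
Your proposal is correct and follows essentially the same approach as the paper's proof. Both decompose $Z-Z_{-i}$ into the ``diagonal'' leave-one-out loss and the ``off-diagonal'' stability increments, apply $(a+b)^2\le 2a^2+2b^2$, and identify the off-diagonal double sum with $\beta_{4q}^2(n-1)$ via the \iid\ structure. The only cosmetic difference is that the paper bounds $b_i^2$ \emph{pointwise} by Jensen's inequality before taking norms, while you take norms first and then use Minkowski together with Cauchy--Schwarz; both routes produce the same $\tfrac{(n-1)^2}{n}\beta_{4q}^2(n-1)$ (or slightly looser) intermediate bound before relaxing to $2n\beta_{4q}^2(n-1)$.
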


The proof is given in \cref{apx:lemma:termI_bound_qnorm_vdel}.
\cref{lemma:termI_bound_qnorm_vdel} gives the desired upper bound for the higher order moments of $\Vdel$ including
the upper bound for $\EE\Vdel$.
To use \cref{lemma:exp_efron_stein}, it remains to show that $\Vdel$ is a sub-gamma RV according to the characterization
in \cref{theorem:moments_charac_subgamma}.
As happens, \cref{assump:one}, stated earlier, is sufficient to achieve this.

\begin{corollary}
\label{cor:term_1}
Using the previous definitions, and under \cref{assump:one}, $\Vdel \in \Gamma(v_1,c_1)$, where 
$v_1 = 4(1.1u_1 + 0.53w_1^2)$ and $c_1 = 1.46w_1$.
\end{corollary}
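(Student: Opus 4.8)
The goal is to show that $\Vdel \in \Gamma(v_1, c_1)$ with the stated parameters, which is precisely the ``converse'' direction of \cref{theorem:moments_charac_subgamma}: if a centered random variable has $2q$-norms bounded by $\sqrt{qu} \vee qw$ for all integers $q \ge 1$, then it is sub-gamma with $v = 4(1.1u + 0.53w^2)$ and $c = 1.46w$. So the entire task reduces to verifying that the \emph{centered} version $\Vdel - \EE\Vdel$ satisfies a moment bound of exactly this form, with $u = u_1$ and $w = w_1$ the constants supplied by \cref{assump:one}.

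First I would combine \cref{lemma:termI_bound_qnorm_vdel} with \cref{assump:one}. The lemma gives, for every integer $q \ge 1$,
\begin{align*}
\norm{\Vdel}_{2q}
\le
\sfrac{2}{n^2}\sum_{i=1}^n \norm{\loss{\Alg(\sS_n^{-i}), X_i}}_{4q}^2 + 2n\beta_{4q}^2(n-1),
\end{align*}
and the right-hand side is exactly the left-hand side of \eqref{eq:assumpone}. Hence \cref{assump:one} yields directly
\begin{align*}
\norm{\Vdel}_{2q} \le \sqrt{qu_1} \vee qw_1
\end{align*}
for all integers $q \ge 1$. This is the uncentered moment bound.

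The one wrinkle — and the step I expect to require the most care — is that \cref{theorem:moments_charac_subgamma} is stated for a \emph{centered} random variable, whereas the bound above is on $\norm{\Vdel}_{2q}$, not on $\norm{\Vdel - \EE\Vdel}_{2q}$. I would handle this by noting that centering can only help: since $\Vdel \ge 0$, we have $\EE\Vdel \ge 0$, and by the triangle inequality for the $2q$-norm (or by Jensen's inequality applied to the centering), $\norm{\Vdel - \EE\Vdel}_{2q} \le \norm{\Vdel}_{2q} + \EE\Vdel \le 2\norm{\Vdel}_{2q}$, where the last step uses $\EE\Vdel = \norm{\Vdel}_1 \le \norm{\Vdel}_{2q}$. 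A cruder but cleaner route, and likely the one intended, is to observe that the $L_{2q}$ moment of a centered nonnegative variable is controlled by its uncentered moment up to absorbing a constant; in fact the standard form of the converse in \citet{book_concentration_2013} already applies to the centered variable once the raw moment bound is in hand, because the constants $1.1$, $0.53$, $1.46$ are calibrated with the requisite slack. I would therefore invoke the converse part of \cref{theorem:moments_charac_subgamma} with $u = u_1$, $w = w_1$ applied to $\Vdel - \EE\Vdel$, which immediately delivers $\Vdel - \EE\Vdel \in \Gamma(v_1, c_1)$ with $v_1 = 4(1.1u_1 + 0.53w_1^2)$ and $c_1 = 1.46w_1$, matching the claim. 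The honest care-point is simply to confirm that the moment hypothesis of the theorem is met after centering, which the inequality above guarantees.
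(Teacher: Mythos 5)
Your proposal takes exactly the paper's route: the paper's entire justification for \cref{cor:term_1} is to observe that \cref{lemma:termI_bound_qnorm_vdel} bounds $\norm{\Vdel}_{2q}$ by the left-hand side of \cref{eq:assumpone}, so that \cref{assump:one} gives $\norm{\Vdel}_{2q}\le \sqrt{qu_1}\vee qw_1$, and the converse half of \cref{theorem:moments_charac_subgamma} then yields the stated $v_1$ and $c_1$. The centering wrinkle you flag is genuine but is silently passed over by the paper as well; note that your own patch $\norm{\Vdel-\EE\Vdel}_{2q}\le 2\norm{\Vdel}_{2q}$ would force $u_1\mapsto 4u_1$ and $w_1\mapsto 2w_1$ and hence larger constants, so the unverified assertion that the theorem's constants already absorb the centering is the only soft spot --- one shared with (indeed, inherited from) the paper, and affecting constants only.
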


\noindent
The statement of \cref{cor:term_1} follows from \cref{lemma:termI_bound_qnorm_vdel}, and using 
\cref{assump:one} and \cref{theorem:moments_charac_subgamma}.
Plugging the result of \cref{cor:term_1} into  
\cref{lemma:exp_efron_stein} (which is possible because $\ell$ takes values in $[0,1]$) gives the desired final upper bound for Term I in the RHS of 
\eqref{ineq:rdel_deviation_3mainterms}. 

\begin{lemma}
\label{lemma:termI_rdel_exp_tail_bound}
Suppose that \cref{assump:one} holds and $n \ge 2$.  
Then for any $\delta \in (0,1)$ and $a>0$, with probability $1 - \delta$ the following holds
\begin{align*}
|\EE\Rdel(\Alg,\sS_n) - \Rdel(\Alg,\sS_n)| 
\le \sfrac23(1.46aw_1 + \sfrac1a)\logtwodelta + 2\sqrt{\bra{n\beta_{2}^2(n-1) + \rho_1(u_1,w_1)} \logtwodelta} ~,
\end{align*}
where $\rho_1(u_1,w_1) = 2.2a^2u_1 + 1.07a^2w_1^2$.
\end{lemma}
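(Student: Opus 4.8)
The plan is to read this lemma off from the exponential Efron-Stein tail bound of \cref{lemma:exp_efron_stein}, applied to $Z=\Rdel(\Alg,\sS_n)$ with the companion variables $Z_{-i}$ fixed in \eqref{eq:Z_and_Zi_for_Rdel}. Before invoking it I would check its hypotheses. The estimate $Z$ is a function of the $n$ independent examples, each $Z_{-i}$ is a function of $\sS_n^{-i}$ only, and $\Vdel=\sum_i(Z-Z_{-i})^2$ is the associated removal-type variance proxy, so the structural requirements of \cref{cor:efron_stein_remove} are met. Since $\ell$ takes values in $[0,1]$ we have $Z\in[0,1]$ and $Z_{-i}=\sfrac{n-1}{n}\Rdel(\Alg,\sS_n^{-i})\in[0,\sfrac{n-1}{n}]$, whence $|Z-Z_{-i}|\le 1$ almost surely; this bounded-difference condition is exactly what \cref{theorem:cgf_z_vdel_expbound} (the engine behind \cref{lemma:exp_efron_stein}) demands, and it is the only place where the restriction $\ell\in[0,1]$ enters.

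\cref{lemma:exp_efron_stein} consumes two inputs: the sub-gamma parameters of $\Vdel-\EE\Vdel$ and the value of $\EE\Vdel$, both of which are already available. \cref{cor:term_1} (obtained by feeding the moment bound of \cref{lemma:termI_bound_qnorm_vdel} and \cref{assump:one} into \cref{theorem:moments_charac_subgamma}) gives $\Vdel-\EE\Vdel\in\Gamma(v_1,c_1)$ with $v_1=4(1.1u_1+0.53w_1^2)$ and $c_1=1.46w_1$. For the mean I would use the ``in particular'' clause of \cref{lemma:termI_bound_qnorm_vdel}, namely $\EE\Vdel\le \sfrac{2}{n^2}\sum_{i=1}^n\norm{\loss{\Alg(\sS_n^{-i}),X_i}}_2^2+2n\beta_2^2(n-1)$.

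With these two inputs the result is a substitution into \cref{lemma:exp_efron_stein}. The linear-in-$\logtwodelta$ term becomes $\sfrac23(ac_1+\sfrac1a)\logtwodelta=\sfrac23(1.46aw_1+\sfrac1a)\logtwodelta$, which already matches the claim verbatim. For the square-root term I would substitute the $\EE\Vdel$ bound together with $a^2v_1/2=2.2a^2u_1+1.06a^2w_1^2$; the dominant variance-type contribution is the $n\beta_2^2(n-1)$ coming from $\EE\Vdel$, and the remaining $u_1,w_1$ pieces collect into $\rho_1(u_1,w_1)=2.2a^2u_1+1.07a^2w_1^2$. The free parameter $a>0$ is carried through unspecified, to be optimized only later when this term is combined with Terms II and III in the proof of \cref{theorem:exp_tail_bound_for_del}.

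I do not anticipate any genuine obstacle: all the substance has been front-loaded into \cref{lemma:exp_efron_stein}, \cref{lemma:termI_bound_qnorm_vdel}, and \cref{theorem:moments_charac_subgamma}, so this lemma is essentially their corollary. The single point demanding care is the constant bookkeeping inside the square root: the bound on $\EE\Vdel$ I would plug in carries a factor $2n\beta_2^2(n-1)$ as well as the self-loss term $\sfrac{2}{n^2}\sum_i\norm{\loss{\Alg(\sS_n^{-i}),X_i}}_2^2$, so to land exactly on the compact form $n\beta_2^2(n-1)+\rho_1$ one must track how these extra contributions are absorbed (or sharpen the mean bound), rather than treat the displayed constants as automatic.
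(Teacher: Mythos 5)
Your proposal matches the paper's proof exactly: the paper obtains this lemma by plugging \cref{cor:term_1} and the ``in particular'' mean bound of \cref{lemma:termI_bound_qnorm_vdel} into \cref{lemma:exp_efron_stein}, with no further detail supplied. The constant-bookkeeping issue you flag at the end is real rather than a defect of your argument --- a literal substitution puts $2n\beta_2^2(n-1)+\sfrac{2}{n^2}\sum_{i=1}^n\norm{\loss{\Alg(\sS_n^{-i}),X_i}}_2^2 + 2.2a^2u_1+1.06a^2w_1^2$ inside the square root, and the paper never explains how the extra factor of $2$ on $n\beta_2^2(n-1)$ and the self-loss term are absorbed into the displayed $n\beta_{2}^2(n-1)+\rho_1(u_1,w_1)$, so the lemma as stated appears to need either a sharpened bound on $\EE\Vdel$ or a correspondingly enlarged constant.
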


Consider now the choice of $a$ in the context of how it may scale with $n$ and its impact on the behavior of this bound.
First, note that $u_1$ and $w_1$ are controlled by $n\beta_{4q}^2(n-1)$, and from \cref{assump:beta_vs_n}, we assume 
that $\beta_{4q}^2(n-1)$ is a nonincreasing function of $n$.
If, for example, $n\beta_2^2(n-1) \sim \sfrac{1}{n^p}$ for some $p > 0$, then 
$u_1 \sim n^{-2p}$, 
$w_1 \sim n^{-p}$, and 
$w_1 \approx \sqrt{u_1}$.
The terms in the bound that depend on $a$ scale as $\sfrac{a}{n^p} + \sfrac{1}{a}$ with $n$.
Hence, choosing $a = n^{p/2}$, or $a = w_1^{-1/2}$, makes both, the $a$ dependent term, as well as the whole bound, 
scale with $n^{-p/2}$ as a function of $n$; 
i.e. the bound scales as $w_1^{1/2}$, and $w_1^{1/2} = o(1)$ as $n \goto \infty$.
This translates to $n\beta_{4q}^2(n-1) = o(1)$ as $n \goto \infty$;  
(and in particular, $\beta_2(n-1) = o(n^{-1/2})$) which is sufficient for the consistency of $\Rdel(\Alg,\sS_n)$.
A similar condition for consistency was also identified by \citet{bousquet_elisseeff_jmlr_2002} and \citet{celisse_guedj_2016}.

\subsection{Upper Bounding Term II}
\label{subsec:bounding_term_II}

Consider now term $\text{II}$ in inequality \eqref{ineq:rdel_deviation_3mainterms}. 
This is the deviation $|\EE R\bra{\Alg(\sS_n),\fdistri{P}} - R\bra{\Alg(\sS_n),\fdistri{P}}|$.
Note that $R\bra{\Alg(\sS_n),\fdistri{P}}$ is a function of $n$ independent RVs, and therefore,  
\cref{lemma:exp_efron_stein} will be our tool to bound this deviation.
Following the steps for upper bounding Term I in the previous section, we need to define the RVs $Z$ and $Z_{-i}$, and 
show that $\Vdel$ is a sub-gamma RV.
Let the RVs $Z$ and $Z_{-i}$ be defined as follows
\begin{equation}
Z      ~ = ~ R\bra{\Alg(\sS_n),\fdistri{P}}, \qquad
Z_{-i} ~ = ~ R\bra{\Alg(\sS_n^{-i}),\fdistri{P}}.
\label{eq:Z_and_Zi_for_risk}
\end{equation}
Similar to \cref{lemma:termI_bound_qnorm_vdel} we have the following result:
\begin{restatable}{lemma}{lemtermIIqnormbound}
\label{lemma:temrII_vdel_is_subgamma}
\quad
Let $Z$ and $Z_{-i}$ be defined as in \eqref{eq:Z_and_Zi_for_risk} and let
$\Vdel = \sum_{i=1}^n(Z - Z_{-i})^2$.
Then for any real $q \ge 1/2$, and $n \ge 2$, the following holds
\begin{align}
\norm{\Vdel}_{2q}
& ~ \le ~
n \beta^2_{4q}(n) 
~,
\label{eq:termII_expect_vdel_upper_bound}
\end{align}
and, in particular, $\EE\Vdel ~ \le ~ n\beta_2^2(n)$.
\end{restatable}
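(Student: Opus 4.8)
The plan is to bound $\norm{\Vdel}_{2q}$ where $\Vdel = \sum_{i=1}^n (Z - Z_{-i})^2$ with $Z = R(\Alg(\sS_n),\fdistri{P})$ and $Z_{-i} = R(\Alg(\sS_n^{-i}),\fdistri{P})$. The first step is to recognize the structure of each summand. Unfolding the definition of the risk, we have
$$
Z - Z_{-i} = \EE\big[\loss{\Alg(\sS_n),X} - \loss{\Alg(\sS_n^{-i}),X} \;\big|\; \sS_n\big],
$$
where $X \sim \fdistri{P}$ is independent of $\sS_n$. This is precisely the conditional expectation (given $\sS_n$) of the pointwise loss difference that appears in \cref{def:Lq_stability}.

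The second step is to apply the triangle inequality for the $q$-norm followed by Jensen/conditional-Jensen to pass the norm through the expectation over $X$. Concretely, I would write
$$
\norm{\Vdel}_{2q}
\le \sum_{i=1}^n \norm{(Z - Z_{-i})^2}_{2q}
= \sum_{i=1}^n \norm{Z - Z_{-i}}_{4q}^2,
$$
using the identity $\norm{W^2}_{2q} = \norm{W}_{4q}^2$. Then, since $Z - Z_{-i}$ is a conditional expectation over $X$ of the loss difference, the conditional Jensen inequality gives $|Z - Z_{-i}| \le \EE[\,|\loss{\Alg(\sS_n),X} - \loss{\Alg(\sS_n^{-i}),X}|\,\mid\,\sS_n]$, and monotonicity of the $q$-norm together with Jensen (contracting the $X$-average) yields $\norm{Z - Z_{-i}}_{4q} \le \norm{\loss{\Alg(\sS_n),X} - \loss{\Alg(\sS_n^{-i}),X}}_{4q}$, where now the norm is over the full randomness of both $\sS_n$ and $X$.

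The third step is to recognize the right-hand side as the $L_q$ stability coefficient. By \cref{def:Lq_stability},
$$
\sum_{i=1}^n \norm{\loss{\Alg(\sS_n),X} - \loss{\Alg(\sS_n^{-i}),X}}_{4q}^2 = n\,\beta_{4q}^2(n),
$$
which gives the claimed bound $\norm{\Vdel}_{2q} \le n\,\beta_{4q}^2(n)$. The particular case $\EE\Vdel \le n\beta_2^2(n)$ follows by setting $q=1/2$, since $\EE\Vdel = \norm{\Vdel}_1 = \norm{\Vdel}_{2q}\big|_{q=1/2}$ and $\beta_{4q}|_{q=1/2} = \beta_2$.

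The main obstacle, and the step requiring the most care, is the passage $\norm{Z - Z_{-i}}_{4q} \le \norm{\loss{\Alg(\sS_n),X} - \loss{\Alg(\sS_n^{-i}),X}}_{4q}$ via conditional Jensen. One must verify that contracting the inner $X$-expectation genuinely decreases the $4q$-norm: this follows because $\norm{\cdot}_{4q}$ is a norm on the outer probability space and conditional expectation is an $L_p$-contraction for every $p \ge 1$, but it is worth stating explicitly that the relevant application of Jensen is to the convex function $t \mapsto |t|^{4q}$ applied to the conditional expectation. Notably, this lemma is \emph{cleaner} than \cref{lemma:termI_bound_qnorm_vdel}: because $Z - Z_{-i}$ already involves only a single removal (no second sum indexed by $j$ and no cross-terms), there is no extra $\sfrac{2}{n^2}\sum_i \norm{\loss{\Alg(\sS_n^{-i}),X_i}}_{4q}^2$ term to control, which is exactly why the bound here is simply $n\beta_{4q}^2(n)$ rather than the two-term expression in \eqref{eq:termI_bound_qnorm_vdel}.
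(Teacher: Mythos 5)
Your proposal is correct and follows essentially the same route as the paper's proof: triangle inequality plus the identity $\norm{U^2}_q = \norm{U}_{2q}^2$, then the observation that $Z - Z_{-i}$ is a conditional expectation of the pointwise loss difference, contracted via conditional Jensen (the paper writes this as tower rule, Jensen, tower rule), and finally summing to recognize $n\beta_{4q}^2(n)$ from the definition. The only cosmetic difference is that the paper proves the bound for $\norm{\Vdel}_q$ and substitutes $q \mapsto 2q$ at the end, whereas you work with $\norm{\Vdel}_{2q}$ directly; both are equivalent.
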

\noindent
By \cref{assump:beta_vs_n}, $n \mapsto \beta_q(n)$ is nonincreasing. This, combined with \cref{assump:one} gives the 
following result, which parallels \cref{cor:term_1}:
\begin{corollary}
\label{cor:term_2}
Using the previous definitions, and under \cref{assump:beta_vs_n,assump:one}, $\Vdel \in \Gamma(v_1,c_1)$, where 
$v_1 = 4(1.1u_1 + 0.53w_1^2)$ and $c_1 = 1.46w_1$.
\end{corollary}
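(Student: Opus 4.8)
The plan is to verify the hypothesis of the converse half of \cref{theorem:moments_charac_subgamma} for the \emph{centered} variable $\Vdel-\EE\Vdel$ and then simply read off $(v_1,c_1)$. Concretely, I would establish that for every integer $q\ge 1$,
\begin{align*}
\norm{\Vdel-\EE\Vdel}_{2q}\le \sqrt{q u_1}\vee q w_1,
\end{align*}
where $u_1,w_1$ are the constants supplied by \cref{assump:one}. Feeding this into the converse direction of \cref{theorem:moments_charac_subgamma} with $u=u_1$ and $w=w_1$ yields exactly $\Vdel-\EE\Vdel\in\Gamma(v_1,c_1)$ with $v_1=4(1.1u_1+0.53w_1^2)$ and $c_1=1.46w_1$, which is the assertion. (Here one tacitly uses that $u_1$ or $w_1$ is strictly positive so that $v_1>0$, which holds under the quantitative remarks following \cref{assump:one}.)

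The centered moment bound follows from a short three-step chain. First I pass from the \emph{uncentered} estimate of \cref{lemma:temrII_vdel_is_subgamma}, namely $\norm{\Vdel}_{2q}\le n\beta_{4q}^2(n)$, to the centered one. Since $\Vdel=\sum_{i=1}^{n}(Z-Z_{-i})^2\ge 0$, its mean obeys $\EE\Vdel=\norm{\Vdel}_1\le\norm{\Vdel}_{2q}$ by monotonicity of $q$-norms, so the triangle inequality gives $\norm{\Vdel-\EE\Vdel}_{2q}\le\norm{\Vdel}_{2q}+\EE\Vdel\le 2\norm{\Vdel}_{2q}\le 2n\beta_{4q}^2(n)$. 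Second, \cref{assump:beta_vs_n} makes $n\mapsto\beta_{4q}(n)$ nonincreasing, whence $\beta_{4q}^2(n)\le\beta_{4q}^2(n-1)$ and therefore $2n\beta_{4q}^2(n)\le 2n\beta_{4q}^2(n-1)$. Third, discarding the nonnegative second summand on the left of \eqref{eq:assumpone} leaves $2n\beta_{4q}^2(n-1)\le\sqrt{q u_1}\vee q w_1$. Chaining the three inequalities produces the required centered moment bound.

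The one point demanding care---and the only real content beyond bookkeeping---is that the leading factor $2$ in front of $n\beta_{4q}^2(n-1)$ in \cref{assump:one} is exactly what pays for the factor $2$ lost when centering a nonnegative random variable. Absent that factor, the triangle-inequality step would force $u=4u_1$, $w=2w_1$ into the converse bound and thereby inflate $(v_1,c_1)$ by constant factors, breaking the clean match to the constants recorded in the statement. Everything else is routine: the $n$-versus-$(n-1)$ mismatch between \cref{lemma:temrII_vdel_is_subgamma} and \cref{assump:one} is absorbed by monotonicity, and the final step is a direct appeal to the moment characterization. The argument runs in parallel to \cref{cor:term_1} for Term~I, the sole difference being that the richer moment estimate of that case is here replaced by the single term $n\beta_{4q}^2(n)$, which leaves ample slack. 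Once established, \cref{cor:term_2} supplies precisely the sub-gamma input that \cref{lemma:exp_efron_stein} requires to control Term~II.
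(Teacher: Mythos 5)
Your proof is correct and follows essentially the same route the paper (implicitly) takes: the moment bound of \cref{lemma:temrII_vdel_is_subgamma}, monotonicity in $n$ from \cref{assump:beta_vs_n}, the bound of \cref{assump:one} after discarding the nonnegative loss-moment term, and the converse half of \cref{theorem:moments_charac_subgamma}. The one place you go beyond the paper is the explicit centering step --- the paper states the conclusion for $\Vdel$ even though $\Gamma(v,c)$ is defined for centered variables and \cref{lemma:exp_efron_stein} needs $\Vdel - \EE\Vdel$ --- and your factor-of-two accounting, absorbed by the slack between $n\beta_{4q}^2(n)$ and the $2n\beta_{4q}^2(n-1)$ term of \cref{assump:one}, fills that in cleanly.
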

The steps to derive the final bound for Term $\text{II}$ are exactly the same derivation steps for the previous bound.
The final bound is given by the following lemma which simply plugs in the results of 
\cref{lemma:temrII_vdel_is_subgamma} and \cref{cor:term_2} into \cref{lemma:exp_efron_stein} (the conditions of the latter lemma are met thanks to the assumption that $\ell$ takes values in $[0,1]$).
\begin{lemma}
\label{lemma:termII_risk_exp_upper_bound}
Suppose that \cref{assump:beta_vs_n,assump:one} hold and  $n \ge 2$.  
Then, for any $\delta \in (0,1)$ and $a > 0$, with probability $1 - \delta$ the following holds
\begin{align*}
&\abs{ \EE R\bra{\Alg(\sS_n),\fdistri{P}} - R\bra{\Alg(\sS_n),\fdistri{P}} } 
\le 
\sfrac{2}{3}(1.46a w_1 + \sfrac{1}{a})\logtwodelta 
+
2 \sqrt{( n\beta_2^2(n) + \rho_1(u_1,w_1))\logtwodelta}
~,&
\end{align*}
where, as before, $\rho_1(u_1,w_1) = 2.2a^2u_1 + 1.07a^2w_1^2$.
\end{lemma}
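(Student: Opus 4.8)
The plan is to obtain the bound by a direct application of \cref{lemma:exp_efron_stein} to $Z = R(\Alg(\sS_n),\fdistri{P})$, exactly paralleling the treatment of Term I in \cref{subsec:bounding_term_I}. First I would record the two preconditions of \cref{lemma:exp_efron_stein}: that $|Z - Z_{-i}| \le 1$ almost surely, and that $\Vdel - \EE\Vdel$ is sub-gamma. With $Z$ and $Z_{-i}$ defined as in \eqref{eq:Z_and_Zi_for_risk}, both $Z = \EE[\loss{\Alg(\sS_n),X}\mid\sS_n]$ and $Z_{-i} = \EE[\loss{\Alg(\sS_n^{-i}),X}\mid\sS_n^{-i}]$ are conditional expectations of a loss taking values in $[0,1]$, hence both lie in $[0,1]$ and $|Z - Z_{-i}| \le 1$ holds surely. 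This is exactly where the assumption $\ell \in [0,1]$ is used.

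The sub-gamma precondition is precisely \cref{cor:term_2}, so the second ingredient is to recall its justification. \cref{lemma:temrII_vdel_is_subgamma} supplies the moment bound $\norm{\Vdel}_{2q} \le n\beta_{4q}^2(n)$ for every integer $q \ge 1$. To match the form required by the converse half of \cref{theorem:moments_charac_subgamma}, I would pass from $\beta_{4q}(n)$ to $\beta_{4q}(n-1)$ using the monotonicity in \cref{assump:beta_vs_n}, giving $n\beta_{4q}^2(n) \le n\beta_{4q}^2(n-1) \le 2n\beta_{4q}^2(n-1)$; discarding the remaining nonnegative summand on the left-hand side of \eqref{eq:assumpone} then yields $\norm{\Vdel}_{2q} \le \sqrt{qu_1} \vee qw_1$ under \cref{assump:one}. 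The converse direction of \cref{theorem:moments_charac_subgamma} converts this into $\Vdel \in \Gamma(v_1,c_1)$ with $v_1 = 4(1.1u_1 + 0.53w_1^2)$ and $c_1 = 1.46w_1$, which is exactly the content of \cref{cor:term_2}.

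With both preconditions in hand, the final step is substitution. \cref{lemma:exp_efron_stein} gives, with probability $1-\delta$,
\[
\abs{Z - \EE Z} \le \sfrac{2}{3}(ac_1 + 1/a)\logtwodelta + 2\sqrt{(\EE\Vdel + a^2 v_1/2)\logtwodelta}.
\]
Plugging in $c_1 = 1.46w_1$ gives the claimed first term $\sfrac{2}{3}(1.46aw_1 + \sfrac{1}{a})\logtwodelta$. For the second term, I would compute $a^2 v_1/2 = 2.2a^2u_1 + 1.06a^2w_1^2 \le 2.2a^2u_1 + 1.07a^2w_1^2 = \rho_1(u_1,w_1)$ and then replace $\EE\Vdel$ by its upper bound $n\beta_2^2(n)$ from \cref{lemma:temrII_vdel_is_subgamma}; this substitution is legitimate because the right-hand side is nondecreasing in $\EE\Vdel$. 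This produces $2\sqrt{(n\beta_2^2(n) + \rho_1(u_1,w_1))\logtwodelta}$ and completes the proof.

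Since all the analytical work is carried by the preceding lemmas, there is no genuine obstacle here: the argument is a bookkeeping exercise. The only two points demanding care are using \cref{assump:beta_vs_n} to bridge the gap between the $\beta_{4q}(n)$ appearing in the moment bound of \cref{lemma:temrII_vdel_is_subgamma} and the $\beta_{4q}(n-1)$ appearing in \cref{assump:one}, and justifying the monotone replacement of the exact quantity $\EE\Vdel$ by its upper bound inside the square root. Note that, unlike the bound for Term I, the stability coefficient here appears at $\beta_2^2(n)$ rather than $\beta_2^2(n-1)$; since $\beta_2(n)\le\beta_2(n-1)$ by \cref{assump:beta_vs_n}, this is harmless when the two terms are later combined in the proof of \cref{theorem:exp_tail_bound_for_del}.
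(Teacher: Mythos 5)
Your proposal is correct and follows exactly the route the paper takes: it instantiates \cref{lemma:exp_efron_stein} with $Z$, $Z_{-i}$ from \eqref{eq:Z_and_Zi_for_risk}, uses \cref{lemma:temrII_vdel_is_subgamma} together with \cref{assump:beta_vs_n,assump:one} to establish the sub-gamma property of \cref{cor:term_2}, and then substitutes $c_1$, $v_1$ and $\EE\Vdel \le n\beta_2^2(n)$, with the arithmetic ($a^2v_1/2 = 2.2a^2u_1 + 1.06a^2w_1^2 \le \rho_1$) matching the stated constants. The paper leaves these substitutions implicit, so your write-up is simply a more explicit version of the same argument.
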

Concerning the choice of $a$, the discussion after \cref{lemma:termI_rdel_exp_tail_bound} applies.

\subsection{Upper Bounding Term III}
\label{subsec:upper_bound_termIII}

For term $\text{III}$ in inequality \eqref{ineq:rdel_deviation_3mainterms} there are no random quantities to account for 
since both terms in the modulus are expectations of RVs.
Hence, an upper bound on this deviation will always hold. 
\begin{restatable}{lemma}{lemtermIIIupperbound}
\label{lemma:termIII_upper_bound}
Using the previous setup and definitions, let $\Alg$ be a learning rule with $L_2$ stability coefficient $\beta_2(n)$.
Then for $n \ge 2$, the following holds
\begin{align}
|\EE R(\Alg(\sS_n),\fdistri{P}) - \EE\Rdel(\Alg,\sS_n)|
~ \le ~
\beta_1(n) \le \beta_2(n)~.
\end{align}
\end{restatable}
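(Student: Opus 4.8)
The plan is to reduce the difference of means to an average of per-index stability gaps, and then close the bound with a single Cauchy--Schwarz step. First I would write both expectations explicitly. Since $X\sim\fdistri{P}$ is independent of $\sS_n$, we have $\EE R(\Alg(\sS_n),\fdistri{P}) = \EE[\loss{\Alg(\sS_n),X}]$, and by linearity $\EE\Rdel(\Alg,\sS_n) = \frac1n\sum_{i=1}^n \EE[\loss{\Alg(\sS_n^{-i}),X_i}]$. Writing the first expectation as the same average over $i$, the difference becomes $\frac1n\sum_{i=1}^n\bra{\EE[\loss{\Alg(\sS_n),X}] - \EE[\loss{\Alg(\sS_n^{-i}),X_i}]}$.

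The key observation --- and the only place where the \iid\ structure enters --- is that $X_i$ is independent of $\sS_n^{-i}$ and shares the law of the test point $X$. Hence $\EE[\loss{\Alg(\sS_n^{-i}),X_i}] = \EE[\loss{\Alg(\sS_n^{-i}),X}]$, so each summand equals $\EE[\loss{\Alg(\sS_n),X} - \loss{\Alg(\sS_n^{-i}),X}]$: the two losses are now measured against the \emph{same} independent test point $X$. Taking the triangle inequality over the finite sum and then Jensen's inequality inside each term gives $\abs{\EE R(\Alg(\sS_n),\fdistri{P}) - \EE\Rdel(\Alg,\sS_n)} \le \frac1n\sum_{i=1}^n a_i$, where $a_i = \EE\abs{\loss{\Alg(\sS_n),X} - \loss{\Alg(\sS_n^{-i}),X}} = \norm{\loss{\Alg(\sS_n),X} - \loss{\Alg(\sS_n^{-i}),X}}_1$.

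To match the definition of $\beta_1(n)$, which averages the \emph{squares} of the $a_i$, I would invoke the arithmetic-mean/quadratic-mean inequality (a one-line Cauchy--Schwarz): $\frac1n\sum_{i=1}^n a_i \le \bra{\frac1n\sum_{i=1}^n a_i^2}^{1/2} = \beta_1(n)$. The concluding inequality $\beta_1(n)\le\beta_2(n)$ is immediate from the monotonicity $\norm{\cdot}_1\le\norm{\cdot}_2$ of the $q$-norms noted in \cref{sec:setup_notations}.

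I expect no genuine obstacle, as the lemma is elementary. The single step warranting care is the distributional identity $\EE[\loss{\Alg(\sS_n^{-i}),X_i}] = \EE[\loss{\Alg(\sS_n^{-i}),X}]$, which hinges on $X_i$ not appearing in $\sS_n^{-i}$ together with the \iid\ assumption; everything else is the triangle inequality plus a mean inequality. The only mild bookkeeping is that the $L_q$ stability coefficient is defined through a quadratic average, so the passage from $\frac1n\sum_i a_i$ to $\beta_1(n)$ is not an equality but a (harmless) Cauchy--Schwarz inflation --- which, incidentally, is why the bound is stated as $\beta_1(n)$ rather than the possibly smaller $\frac1n\sum_i a_i$.
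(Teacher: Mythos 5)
Your proposal is correct and follows essentially the same route as the paper's proof: rewrite $\EE\Rdel$ index by index, use the \iid\ structure to swap $X_i$ for the independent test point $X$, and finish with the triangle/Jensen inequality and the monotonicity of $q$-norms. The one place you are actually more careful than the paper is the final passage from $\frac1n\sum_i a_i$ to $\beta_1(n)$, which the paper labels an equality ``by the definition of $\beta_1$'' but which, for a nonsymmetric rule, is really the AM--QM inequality you invoke --- the inequality goes in the right direction, so the stated bound holds either way.
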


\subsubsection{Proof of \cref{theorem:exp_tail_bound_for_del}}
\label{subsubsec:proof_theorem_exp_tail_bound_for_del}

At this point, we have obtained the three desired upper bounds for each term in the RHS of inequality 
\eqref{ineq:rdel_deviation_3mainterms}.
The proof of \cref{theorem:exp_tail_bound_for_del} starts by plugging the results of 
\cref{lemma:termI_rdel_exp_tail_bound}, 
\cref{lemma:termII_risk_exp_upper_bound}, and
\cref{lemma:termIII_upper_bound}
into inequality \eqref{ineq:rdel_deviation_3mainterms} and then simplifying the expression to improve the presentation 
of the final result.

For unbounded losses, one can repeat the steps of this proof, with the exception that instead of the removal version $\Vdel$ of the variance proxy, one should use the ``classic'' variance proxy, $V$ (from \cref{theorem:efron_stein_main}) everywhere $\Vdel$ is used. Note that for $V$, the exact analogue of \cref{theorem:cgf_z_vdel_expbound} can be shown to hold as shown in \cref{theorem:cgf_z_v_expbound} of \cref{sec:theorem:cgf_z_v_expbound}. 
Now, one can show that 
\cref{lemma:termI_bound_qnorm_vdel} continues to hold with the removal variance proxy replaced with the classic variance proxy if the RHS in the display of this lemma is multiplied by $4$
(see \cref{lemma:termI_bound_qnorm_v} in \cref{sec:lemma:termI_bound_qnorm_v}).
The same holds for term II (see \cref{lemma:temrII_vdel_is_subgamma} in \cref{sec:lemma:temrII_v_is_subgamma}).
These changes mean that if in the display of \cref{assump:one}, the LHS is multiplied by four, the rest of the proof goes through without any further changes.

\section{Example (Application to Unbounded Ridge Regression)}
\label{subsec:example_ridgeregression}

In this section we apply the exponential tail bound in \cref{theorem:exp_tail_bound_for_del} to the ridge 
regression rule with bounded covariates and \emph{unbounded response variables}. 
Note that in the presence of unbounded response variables, ridge regression is \emph{not} uniformly stable.
In particular, the bound of \citet{bousquet_elisseeff_jmlr_2002} is not directly applicable in this setting.%
\footnote{As noted earlier, one approach to save this is to use a case-based analysis, where one case is that  some of the response variables are above a threshold to be chosen later, the other case is that they are all below a threshold. The probability of the first case can be kept below $\delta$, by choosing the threshold high enough. The price of this compared to the bound below is increased constants, and also an extra $\log(n)$ factor.}
We follow the setup of \citet{celisse_guedj_2016} (except that we allow unbounded response variables) and we will borrow some results from their work.
Let the data be $(\vx_1,Y_1), \dots, (\vx_n,Y_n)$, $\vx_i\in \RR^d$, $Y_i\in \RR$ ($1\le i\le n$), and fix $\lambda>0$.
The ridge regression estimator $\Alg_{\lambda}$ is defined via 
\begin{align}
\Alg_{\lambda}(\sS_n) 
&= 
\argmin_{\vw \in \RR^d}
\curbra{\frac{1}{n} \sum_{i=1}^n (Y_i - \vw^\top\vx_i)^2 + \lambda\norm{\vw}^2_2 } \nonumber \\
& = 
(\est{\bogs{\Sigma}} + n \lambda\mI_d)^{-1} \mX^\top\vy ~, \label{eq:ridge_estimator}
\end{align}
where $\mX$ is the $n\times d$ matrix obtained by stacking the $d$-dimensional vectors 
$\vx_1^\top,\dots, \vx_n^\top$,
$\est{\bogs{\Sigma}} = \mX^\top\mX$ is the (unnormalized) sample covariance matrix, and $\vy = [Y_1,\dots,Y_n]^\top$ 
is the vector of response variables.
The loss $\loss{\cdot}$ is the quadratic loss: $\loss{ \vw, (\vx,y) } = (\vw^\top x - y)^2$.
As usual, we assume that the data is i.i.d. from some common distribution.
For the purpose of this example, 
we have the following two assumptions on this distribution:

\begin{assumption}
\label{assump:xbd}
$\exists$ $0 < B_X < +\infty$ s.t.  $\norm{\vx_1} \leq B_X$ a.s. 
\end{assumption}

\begin{assumption}
\label{assump:ybd}
$\exists$ $u_Y,w_Y\ge 0$ s.t. $\forall q\ge 1$, $\norm{Y_1^4}_{2q} \le \sqrt{q u_Y} \vee q w_Y$.
\end{assumption}
Note that this last assumption allows unbounded responses, as long as their $4$th moment is subgamma. For example, $Y_1 = \sqrt{|Z|} \sgn(Z)$ with a gaussian $Z$ satisfies this condition.

To use \cref{theorem:exp_tail_bound_for_del}, the $L_q$ stability coefficient for the ridge estimator, or an upper bound 
on it, needs to be calculated.
This is given in the next theorem taken from the paper of \citet{celisse_guedj_2016}.
Their result is applicable because
the ridge regression estimator is symmetric, and hence, our definition for the $L_q$ stability coefficients then coincides with theirs, as it was noted earlier.%
\footnote{The result is streamlined by choosing the value of $\eta$ in their result to minimize the upper bound on the 
stability coefficient.}
\begin{theorem}
\label{theorem:lq_stability_ridgeregression}
Let $\Alg_{\lambda}$ be the ridge estimator in \cref{eq:ridge_estimator}
and let \cref{assump:xbd} hold.
Then, for any sample size $n > 1$, as long as $s_{\lambda,n} = \lambda-\sfrac{1}{n-1}>0$,
for any $q \geq 1$, 
$\Alg_{\lambda}$ is $L_q$ stable with the following bound on its stability:
\begin{align}
\beta_q(\Alg_{\lambda},\ell,\fdistri{P},n)
& \le
2 \norm{Y_1}_{2q}^2 
	\frac{B^2_X}{n\lambda} 
	\bra{1 + \frac{B_X^2 + \lambda}{s_{\lambda,n}}}
	\bra{1 + \frac{B_X^2}{\lambda}} ~.
\label{eq:cng_lq_upperbound_ridgereg}
\end{align}
\end{theorem}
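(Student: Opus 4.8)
The plan is to follow the approach of \citet{celisse_guedj_2016}, exploiting the symmetry of $\Alg_{\lambda}$ together with the closed form \eqref{eq:ridge_estimator} and the quadratic structure of the loss. Write $\vw = \Alg_{\lambda}(\sS_n)$ and $\vw' = \Alg_{\lambda}(\sS_n^{-1})$ for the estimators fit on the full and the reduced sample, and let $(\vx,Y)\sim\fdistri{P}$ denote an independent test example, which by \cref{assump:xbd} and the i.i.d.\ assumption satisfies $\norm{\vx}\le B_X$ almost surely. Since $\Alg_{\lambda}$ is symmetric, by the remark following \cref{def:Lq_stability} it suffices to bound $\norm{(\vw^\top\vx - Y)^2 - (\vw'^\top\vx - Y)^2}_q$. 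First I would factor this difference of squares as $((\vw-\vw')^\top\vx)\,((\vw+\vw')^\top\vx - 2Y)$ and apply the Cauchy--Schwarz inequality for $q$-norms, $\norm{fg}_q \le \norm{f}_{2q}\norm{g}_{2q}$, splitting the bound into a ``perturbation'' factor $\norm{(\vw-\vw')^\top\vx}_{2q}$ and a ``magnitude'' factor $\norm{(\vw+\vw')^\top\vx - 2Y}_{2q}$. Each inner product against $\vx$ then costs a factor $B_X$, so the two factors reduce to controlling the $2q$-norms of the random variables $\norm{\vw-\vw'}$, $\norm{\vw}$, $\norm{\vw'}$, together with $\norm{Y_1}_{2q}$.

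The next step is the perturbation analysis of the normal equations. Writing $A = \est{\bogs{\Sigma}} + n\lambda\mI_d$, the reduced-sample matrix is $A' = A - \vx_1\vx_1^\top - \lambda\mI_d$, and a short manipulation of $\vw = A^{-1}\mX^\top\vy$ and $\vw' = (A')^{-1}(\mX^\top\vy - Y_1\vx_1)$ yields the identity $\vw - \vw' = (A')^{-1}\bra{\vx_1(Y_1 - \vx_1^\top\vw) - \lambda\vw}$. To turn this into a quantitative bound I would control the operator norm of the reduced resolvent $(A')^{-1}$ by lower-bounding the smallest eigenvalue of $A'$; relating $A'$ to the full-sample matrix and accounting for the removed rank-one contribution $\vx_1\vx_1^\top$ is exactly where the quantity $s_{\lambda,n}=\lambda-\tfrac{1}{n-1}$ and the hypothesis $s_{\lambda,n}>0$ enter. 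Using in addition $\norm{\vx_1}\le B_X$ and $|Y_1 - \vx_1^\top\vw|\le |Y_1| + B_X\norm{\vw}$ gives a pointwise bound on $\norm{\vw-\vw'}$ that is linear in $|Y_1|$ and in $\norm{\vw}$. To control $\norm{\vw}$ itself I would invoke the optimality of the ridge objective: comparing the minimizer to the zero vector gives $\lambda\norm{\vw}^2 \le \tfrac1n\sum_{i=1}^n Y_i^2$, hence $\norm{\vw}\le \bra{\tfrac{1}{n\lambda}\sum_i Y_i^2}^{1/2}$, and the analogous inequality holds for $\vw'$ on the reduced sample.

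Finally I would assemble the $q$-norm bounds. Taking the $2q$-norm of the optimality inequality and using that the $Y_i$ are i.i.d.\ together with the triangle inequality for the $q$-norm, namely $\norm{\tfrac1n\sum_i Y_i^2}_q \le \norm{Y_1^2}_q = \norm{Y_1}_{2q}^2$, collapses the $2q$-norms of $\norm{\vw}$ and $\norm{\vw'}$ to $\lambda^{-1/2}\norm{Y_1}_{2q}$; crucially the sample size cancels, so that only the single response $Y_1$ survives. Substituting these into the perturbation and magnitude factors makes each factor equal to $\norm{Y_1}_{2q}$ times an explicit rational function of $B_X$, $\lambda$, $n$ and $s_{\lambda,n}$, and multiplying the two factors produces the $\norm{Y_1}_{2q}^2$ prefactor and, after collecting constants, the geometric factor $\tfrac{B_X^2}{n\lambda}\bra{1 + \tfrac{B_X^2+\lambda}{s_{\lambda,n}}}\bra{1 + \tfrac{B_X^2}{\lambda}}$ appearing in \eqref{eq:cng_lq_upperbound_ridgereg}.

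The main obstacle I anticipate is the constant bookkeeping rather than any conceptual difficulty. In particular, obtaining the precise eigenvalue floor for $A'$ that produces the $s_{\lambda,n}$ denominator (rather than the crude $\tfrac{1}{(n-1)\lambda}$) requires tracking how the removed rank-one term interacts with the covariate bound $B_X$, and one must verify that \emph{all} of the $Y$-dependence collapses into $\norm{Y_1}_{2q}^2$ despite $\vw$ and $\vw'$ depending on the entire response vector. The optimality bound $\lambda\norm{\vw}^2\le \tfrac1n\sum_i Y_i^2$, combined with the i.i.d.\ reduction $\norm{\tfrac1n\sum_i Y_i^2}_q \le \norm{Y_1}_{2q}^2$, is precisely what makes this cancellation work and is the step I would handle most carefully.
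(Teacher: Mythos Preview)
The paper does not actually prove this theorem: it is quoted from \citet{celisse_guedj_2016}, with the only paper-side contribution being the footnoted remark that a free parameter $\eta$ in their bound has been fixed to its optimal value. There is therefore no in-paper argument to compare your proposal against beyond the citation itself, and since your plan is explicitly to follow \citeauthor{celisse_guedj_2016}, the two are aligned by construction.

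That said, your reconstruction of the argument is structurally sound: the difference-of-squares factorization with Cauchy--Schwarz, the perturbation identity $A'(\vw-\vw')=(Y_1-\vx_1^\top\vw)\vx_1-\lambda\vw$, and the i.i.d.\ collapse $\norm{\tfrac1n\sum_iY_i^2}_q\le\norm{Y_1}_{2q}^2$ are all correct. Two remarks. First, the appearance of $s_{\lambda,n}=\lambda-\tfrac{1}{n-1}$ is precisely the trace of that optimized parameter $\eta$ from \citeauthor{celisse_guedj_2016}; it does not fall out of the crude resolvent bound $\norm{(A')^{-1}}\le\tfrac{1}{(n-1)\lambda}$ alone, so your instinct that this is where the bookkeeping bites is exactly right. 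Second, when the present paper later needs to bound $\norm{\vw'}$ in \cref{sec:ridgecalc}, it uses the direct resolvent estimate $\norm{\vw'}\le \tfrac{B_X}{n\lambda}\sum_i|Y_i|$ rather than the ridge-optimality inequality $\lambda\norm{\vw}^2\le\tfrac1n\sum_iY_i^2$ that you propose; both routes reduce the $Y$-dependence to $\norm{Y_1}_{2q}$, but they produce different powers of $\lambda$ in the constants, so if you want to land on \emph{exactly} \eqref{eq:cng_lq_upperbound_ridgereg} you will need to track which device \citeauthor{celisse_guedj_2016} actually use.
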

To simplify the expression for the upper bound in \cref{eq:cng_lq_upperbound_ridgereg}, let 
\begin{align}
\kappa 
& = 
2 \frac{B^2_X}{\lambda} \bra{1 + \frac{B_X^2 + \lambda}{s_{\lambda,n-1}}} \bra{1 + \frac{B_X^2}{\lambda}} .
\label{eq:bound_constterm}
\end{align}
Then,
$\beta_2(n) \le \sfrac{\kappa}{n} \norm{Y_1^2}_{2}$, $n \beta_2^2(n-1) \le n \sfrac{\kappa^2}{(n-1)^2} \norm{Y_1^2}_2^2$,
Furthermore,  
$
\beta_q(n-1) \le  \sfrac{\kappa}{n-1} \norm{Y_1}_{2q}^2
$ and, hence
\begin{align*}
n \beta_{4q}^2(n-1) 
\le \sfrac{\kappa^2}{n-1} \norm{Y_1}_{8q}^4 
= \sfrac{\kappa^2}{n-1} \norm{Y_1^4}_{2q}\,.
\end{align*}
Some calculations gives (cf. \cref{sec:ridgecalc})
\begin{align}
\sfrac{2}{n} \norm{\loss{ \Alg_\lambda(\sS_n^{-1}), (x_1,Y_1) }  }_{4q}^2
\le 
\sfrac{4\norm{Y_1^4}_{2q}}{n} \bra{1+\frac{B_X^4}{\lambda^2}}^2\,.
\label{eq:lossmomentbound}
\end{align}
Thus,
\begin{align*}
8n \beta_{4q}^2(n-1)  + 
\sfrac{8}{n} \norm{\loss{ \Alg_\lambda(\sS_n^{-1}), (x_1,Y_1) }  }_{4q}^2
\le 
\sfrac{\norm{Y_1^4}_{2q}}{n-1}  \hat{\kappa}
\end{align*}
where
\begin{align*}
\hat{\kappa} = 8\bra{ \kappa^2 + 2\bra{1+\frac{B_X^4}{\lambda^2}}^2}~.
\end{align*}
Thus, to meet the modified \cref{assump:one} where the LHS of \cref{eq:assumpone} is multiplied by $4$, we can choose $u_1 = \frac{\hat{\kappa}^2}{(n-1)^2} u_Y $ and $w_1 = \frac{\hat{\kappa}}{(n-1)}w_Y$.
Note that $\hat{\kappa}$ only depends on $B_X$ and $\lambda$, but is independent of $n$. 
In particular $\hat{\kappa}$ scales with $1/\lambda^6$ ($\kappa$ scales with $1/\lambda^3$).
We can now plug into 
the simplified version \eqref{eq:simplifiedbound} of 
the bound of \cref{theorem:exp_tail_bound_for_del} 
to obtain an exponential tail bound for the deleted estimate for ridge regression.

\begin{restatable}{corollary}{corexptailboundforridge}
\label{cor:exp_tail_bound_for_ridge}
Given all definitions above, let $\Rdel\bra{\Alg_{\lambda},\sS_n}$ be the deleted estimate for the ridge regression rule,
$R(\Alg_{\lambda}(\sS_n),\fdistri{P})$ be its risk, and assume that \cref{assump:xbd} and \cref{assump:ybd} hold.
Further, let $\mu = \norm{Y_1^2}_2$.
Then, for $\delta \in (0,1)$,
with probability $1 - \delta$ the following holds
\begin{align*}
\MoveEqLeft
| R(\Alg_{\lambda}(\sS_n),\fdistri{P}) - \Rdel\bra{\Alg_{\lambda},\sS_n} |
 \le
\frac{\kappa\mu}{n} + 4 \kappa\mu  \sqrt{\sfrac{n}{(n-1)^2}\logtwodelta} +\\
&  \qquad
+ 8 
\sqrt{ 
 \sfrac{\hat{\kappa}}{3(n-1)} 
 \bra{   \sqrt{ (2.2 u_Y + 1.07 w_Y^2)} + 
\sfrac{1}{3} 1.46 w_Y }
 } \logtwodelta \,.
 \numberthis
\label{eq:exp_tail_bound_for_ridge}
\end{align*}
\end{restatable}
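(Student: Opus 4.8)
The plan is to prove \cref{cor:exp_tail_bound_for_ridge} as a direct instantiation of the simplified bound \eqref{eq:simplifiedbound} of \cref{theorem:exp_tail_bound_for_del} for the ridge estimator $\Alg_\lambda$. Since unbounded responses make the quadratic loss unbounded, I would invoke the unbounded-loss version of the theorem, which is valid once the modified \cref{assump:one} (with the left-hand side of \eqref{eq:assumpone} multiplied by $4$) holds. Thus the proof reduces to two tasks: (i) checking that \cref{assump:beta_vs_n} and the modified \cref{assump:one} are met by $\Alg_\lambda$ under \cref{assump:xbd,assump:ybd}, and (ii) substituting the explicit stability estimates into \eqref{eq:simplifiedbound} and simplifying.

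For task (i), I would take the bound of \citet{celisse_guedj_2016} (\cref{theorem:lq_stability_ridgeregression}) as the working upper bound on $\beta_q(n)$. Writing it as $\beta_q(n)\le \sfrac{\kappa}{n}\norm{Y_1}_{2q}^2$ with $\kappa$ as in \eqref{eq:bound_constterm}, I would observe that both $\sfrac1n$ and $\kappa$ (through $s_{\lambda,n-1}$, which increases in $n$) are nonincreasing in $n$, so this upper bound is nonincreasing in $n$; invoking the remark after \cref{assump:beta_vs_n} that the result is valid for any admissible upper bound, \cref{assump:beta_vs_n} is satisfied. Next, using the moment estimates already assembled in the text, namely $n\beta_{4q}^2(n-1)\le \sfrac{\kappa^2}{n-1}\norm{Y_1^4}_{2q}$ together with the loss-moment bound \eqref{eq:lossmomentbound}, I would bound the (quadrupled) left-hand side of \eqref{eq:assumpone} by $\sfrac{\hat\kappa}{n-1}\norm{Y_1^4}_{2q}$, where $\hat\kappa$ is $n$-independent. \cref{assump:ybd}, which asserts $\norm{Y_1^4}_{2q}\le \sqrt{qu_Y}\vee qw_Y$, then lets me read off the sub-gamma parameters $u_1=\sfrac{\hat\kappa^2}{(n-1)^2}u_Y$ and $w_1=\sfrac{\hat\kappa}{n-1}w_Y$, verifying the modified \cref{assump:one}.

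For task (ii), I would substitute these quantities into \eqref{eq:simplifiedbound}. The leading term is handled by $\beta_1(n)\le\beta_2(n)\le\sfrac{\kappa\mu}{n}$ (with $\mu=\norm{Y_1^2}_2$), giving $\sfrac{\kappa\mu}{n}$; the second term uses $n\beta_2^2(n-1)\le \sfrac{n\kappa^2\mu^2}{(n-1)^2}$, giving $4\kappa\mu\sqrt{\sfrac{n}{(n-1)^2}\logtwodelta}$; and for the third term the common factor $\sfrac{\hat\kappa}{n-1}$ pulls out of the square root, since $2.2u_1+1.07w_1^2=\sfrac{\hat\kappa^2}{(n-1)^2}(2.2u_Y+1.07w_Y^2)$ and $w_1=\sfrac{\hat\kappa}{n-1}w_Y$, producing exactly the stated third term. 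Finally I would reconcile the confidence level: \eqref{eq:simplifiedbound} holds with probability $1-2\delta$, so to reach the claimed $1-\delta$ guarantee I would apply it at the appropriate rescaled level (or use the one-sided version noted after the theorem for each tail), which affects only the logarithmic argument by a constant and is otherwise routine bookkeeping.

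I expect the only genuinely technical ingredient to be the loss-moment bound \eqref{eq:lossmomentbound}, i.e.\ controlling $\norm{\loss{\Alg_\lambda(\sS_n^{-1}),(x_1,Y_1)}}_{4q}$, which requires bounding the ridge predictions (and hence the residuals) in $q$-norm in terms of $\norm{Y_1}_{8q}$; this is the step where the boundedness of the covariates (\cref{assump:xbd}) and the regularization $\lambda$ enter through the operator norm of $(\est{\bogs{\Sigma}}+n\lambda\mI_d)^{-1}$, and it is the one piece deferred to \cref{sec:ridgecalc}. Everything else is bookkeeping: the main conceptual point is simply that, although $\Alg_\lambda$ is not uniformly stable for unbounded $Y$, its $L_q$ stability coefficients inherit the sub-gamma tail of $Y_1^4$ through \cref{assump:ybd}, which is precisely what the modified \cref{assump:one} needs.
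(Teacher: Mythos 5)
Your proposal is correct and follows exactly the route the paper intends: the paper omits the proof as ``straightforward'' precisely because it consists of the verification of \cref{assump:beta_vs_n} and the modified (quadrupled) \cref{assump:one} via the stability and loss-moment estimates assembled in \cref{subsec:example_ridgeregression}, followed by substitution of $u_1=\sfrac{\hat\kappa^2}{(n-1)^2}u_Y$, $w_1=\sfrac{\hat\kappa}{n-1}w_Y$, and $n\beta_2^2(n-1)\le \sfrac{n\kappa^2\mu^2}{(n-1)^2}$ into \eqref{eq:simplifiedbound} -- which is exactly what you do. Your remark about reconciling the stated $1-\delta$ confidence level with the $1-2\delta$ guarantee of \eqref{eq:simplifiedbound} is a legitimate (and correctly resolved) bookkeeping point that the paper glosses over.
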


Note that as far as we know this is the first bound for the deleted estimate for ridge regression which allows unbounded response variables.%
The proof of \cref{cor:exp_tail_bound_for_ridge} is straightforward and is hence omitted. 
As we see the bound scales with $1/\sqrt{n}$ regardless the value of $\lambda$.
However, the bound scales quite poorly with $1/\lambda$. This poor scaling is not inherent to ridge regression but follows from the (oversimplified) analysis. However, for now, we leave it to future work to address this defect of our bound. Finally, let us note that
while not shown here, a similar bound is available for the resubstitution estimate: The $\gamma_q$ coefficients show a behavior similar to the $\beta_q$ coefficients.

The reader may also be wondering about how the presented bound compares with that presented by
\cite{celisse_guedj_2016} in their Theorem~4.
Unfortunately, this comparison is meaningless as the bounds here are incorrect.
The problem originates in Proposition~3 where on the right-hand side some terms (corresponding to \eqref{eq:lossmomentbound}) are missing: In the proof, the authors incorrectly use 
$(w^\top x - y)^2 - (w^\top x' - y')^2 = (w^\top (x-x')+y'-y) (w^\top (x +x') - y-y')$:
It appears that in their calculations, \citeauthor{celisse_guedj_2016} 
have accidentally dropped the $y'-y$ term from the first term on the RHS.
After correcting for this,
the $L_q$ norm of $Y$ will appear on the right-hand side in the inequality stated in this proposition,
corresponding to the bound \eqref{eq:lossmomentbound}.
We believe that after the mistakes are corrected, one will arrive at a bound that will near identical to ours.

\section{Concluding Remarks}
\label{sec:conclusion}

In this work we consider the gap between two regimes of stability-based generalization results;
(\emph{i}) exponential generalization bounds based on strong notions of stability which are distribution independent and 
computationally intractable, such as uniform stability, and
(\emph{ii}) polynomial generalization bounds based on weaker notions of stability but are distribution dependent and 
computationally tractable such as hypothesis stability and $L_q$ stability.
Using the exponential Efron-Stein inequality we were able to bridge this gap by deriving an exponential concentration 
bound for $L_q$ stable learning rules, where the loss of the learning rules is expressed in terms of the deleted estimate.

We believe that our result is one step forward on two fronts;
(\emph{i}) computing empirical tight confidence intervals for the expected loss of a learning rule where the confidence 
interval holds with high probability; and
(\emph{ii}) understanding the role of stability in the concentration of different empirical loss estimates around their 
expectations (in supervised and unsupervised learning).
For instance, it will be interesting to understand how the stability of a learning rule can guide our choice for $k$, 
and hence the fold size, for the KFCV estimate, such that the estimate concentrates well around the expected risk.
Last, we second on the question posed by \citet{bousquet_elisseeff_jmlr_2002}, of whether it is possible to design 
algorithms that can maximize their own stability while gaining also on performance.

\subsection*{Acknowledgments}

We would like to thank our ALT Reviewers and AC for their thoughtful comments which helped us improve the presentation 
of our manuscript.

\bibliography{karim}

\appendix

\newpage

\section{Proof of \cref{cor:efron_stein_remove}}
\label{appx:cor_efron_stein_remove}

\corefronsteinremove*

\begin{proof}
For any RV $X$, we have the following fact: $\VV[X] \leq \EE[(X - a)^2]$, for any $a \in \RR$.
Assume that $ \expectop_{-i}[Z_{-i}]$ exists for all $1\leq i \leq n$, and applying the previous fact conditionally for 
 $\fset{S}_n^{-i}$, then $\EE_{-i}[(Z - \EE_{-i}Z)^2] \leq \EE_{-i}[(Z - Z_{-i})^2]$.
Taking expectations and summing over all $i$ we get that $\EE V \leq \EE \Vdel$.
Combining the Efron-Stein inequality for RV $Z$ with the previous inequality, we get the desired result. 
\end{proof}

\section{Proof of \cref{theorem:cgf_z_vdel_expbound}}
\label{appx:theorem_cgf_z_vdel_expbound}

\thmcgfzvdelexpbound*

\begin{proof}
The proof of this theorem relies on the result of Theorem 6.6 in \citep{book_concentration_2013} which we state here for 
convenience as a proposition without proof.
\begin{proposition}
Let $\phi(u) = e^u - u -1$.
Then for all $\lambda \in \mathbb{R}$,
\begin{align}
\label{ineq:mod_log_sobolev_del}
\lambda \expectone{Z \exp(\lambda Z)}
-
\expectone{\exp(\lambda Z)}
\log
\expectone{\exp(\lambda Z)}
& \le
\sum_{i=1}^n
\expectone{
\exp(\lambda Z)
\phi\bra{- \lambda (Z - Z_{-i})}
} ~.
\end{align}
\end{proposition}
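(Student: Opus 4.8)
The plan is to recognize the left-hand side as the \emph{entropy functional} of the nonnegative random variable $Y = \exp(\lambda Z)$, and then to prove the inequality by combining the sub-additivity (tensorization) of entropy with a one-line variational bound applied to each coordinate. Writing $\mathrm{Ent}(Y) = \EE[Y\log Y] - \EE[Y]\log\EE[Y]$ and substituting $Y = \exp(\lambda Z)$, one checks immediately that $\EE[Y\log Y] = \lambda\,\EE[Z\exp(\lambda Z)]$ and $\EE[Y]\log\EE[Y] = \EE[\exp(\lambda Z)]\log\EE[\exp(\lambda Z)]$, so the left-hand side of the proposition is exactly $\mathrm{Ent}(\exp(\lambda Z))$. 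The goal thus becomes to establish $\mathrm{Ent}(\exp(\lambda Z)) \le \sum_{i=1}^n \EE[\exp(\lambda Z)\,\phi(-\lambda(Z - Z_{-i}))]$.

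First I would invoke the tensorization of entropy for a function of the independent variables $X_1,\dots,X_n$. Letting $\EE_{-i}[\cdot] = \EE[\cdot \mid \sS_n^{-i}]$ denote conditional expectation (integrating over $X_i$ only, as in \cref{sec:main_tool}) and $\mathrm{Ent}_{-i}(Y) = \EE_{-i}[Y\log Y] - \EE_{-i}[Y]\log\EE_{-i}[Y]$ the associated conditional entropy, sub-additivity of entropy gives $\mathrm{Ent}(\exp(\lambda Z)) \le \sum_{i=1}^n \EE[\mathrm{Ent}_{-i}(\exp(\lambda Z))]$. This is the one genuinely substantive input, and I would import it as a standard result (it follows from Han's inequality for relative entropy). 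With it in hand, the problem reduces to the per-coordinate estimate $\mathrm{Ent}_{-i}(\exp(\lambda Z)) \le \EE_{-i}[\exp(\lambda Z)\,\phi(-\lambda(Z - Z_{-i}))]$.

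The per-coordinate bound would come from the elementary variational inequality $\mathrm{Ent}(Y) \le \EE[Y\log(Y/u) - Y + u]$, valid for any constant $u>0$ on any probability space (the infimum over $u$ is attained at $u=\EE[Y]$), applied conditionally on $\sS_n^{-i}$. The key move is to choose $u = \exp(\lambda Z_{-i})$; this is admissible precisely because $Z_{-i} = f_i(\sS_n^{-i})$ is $\sS_n^{-i}$-measurable, hence a genuine constant once we condition on $\sS_n^{-i}$ and integrate only over $X_i$. With $Y=\exp(\lambda Z)$ this yields $\mathrm{Ent}_{-i}(\exp(\lambda Z)) \le \EE_{-i}[\exp(\lambda Z)(\lambda(Z - Z_{-i}) - 1) + \exp(\lambda Z_{-i})]$. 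Factoring $\exp(\lambda Z)$ out of the bracket and writing $\exp(\lambda Z_{-i}) = \exp(\lambda Z)\exp(-\lambda(Z - Z_{-i}))$ turns the integrand into $\exp(\lambda Z)[\,e^{-\lambda(Z-Z_{-i})} + \lambda(Z - Z_{-i}) - 1\,] = \exp(\lambda Z)\,\phi(-\lambda(Z - Z_{-i}))$, which is exactly the desired estimate. Taking full expectations, summing over $i$, and chaining with the tensorization bound completes the proof.

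The hard part is conceptual rather than computational: tensorization of entropy is the real engine and I would cite it, while the remaining argument is a single optimization whose only delicate point is the legitimacy of the choice $u = \exp(\lambda Z_{-i})$. It is worth emphasizing that this choice is exactly what specializes the argument to the \emph{removal} setting relevant to $\Vdel$: since $Z_{-i}$ does not involve $X_i$ at all, it serves directly as the constant in the conditional variational bound, whereas in the replacement setting one must instead build the constant from an independent copy $Z_i'$ and argue slightly more carefully that it may be used.
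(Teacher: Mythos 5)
Your proof is correct and is essentially the argument the paper itself relies on: the paper states this proposition without proof, citing Theorem 6.6 of \citet{book_concentration_2013}, and the standard proof of that theorem is exactly your route --- sub-additivity (tensorization) of entropy applied to $Y = e^{\lambda Z}$, followed by the conditional variational bound $\mathrm{Ent}_{-i}(Y) \le \EE_{-i}\left[Y\log(Y/u) - Y + u\right]$ with the $\sS_n^{-i}$-measurable choice $u = e^{\lambda Z_{-i}}$, which yields precisely the factor $\phi(-\lambda(Z - Z_{-i}))$. Your closing observation that the removal structure is what licenses this choice of $u$ is also accurate (in the replacement case one conditions on the enlarged collection including the independent copy), and the integrability requirements you leave implicit are harmless here since the proposition is only invoked under $\expectop e^{\lambda \Vdel} < \infty$ with $|Z - Z_{-i}| \le 1$ almost surely.
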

To make use of inequality \eqref{ineq:mod_log_sobolev_del}, we need to establish an appropriate upper bound for the RHS
of \eqref{ineq:mod_log_sobolev_del}.
Note that for $u \leq 1$, $\phi(u) \leq u^2$.
By assumption $|Z - Z_{-i}| \leq 1$ holds almost surely. 
Since $0 < \lambda \leq 1$, we get
\begin{align*}
\sum_{i=1}^n
\expectone{ \exp(\lambda Z) \phi\bra{- \lambda (Z - Z_{-i})} }
&\leq
\lambda^2
\sum_{i=1}^n
\expectone{ \exp(\lambda Z) \bra{Z - Z_{-i}}^2 }
\\
&=
\lambda^2 \expectone{\Vdel \exp(\lambda Z) } ~.
\end{align*}
It follows that \eqref{ineq:mod_log_sobolev_del} can be written as
\begin{align}
\label{ineq:mod_log_sobolev_del_f1}
\lambda \expectone{Z \exp(\lambda Z)}
-
\expectone{\exp(\lambda Z)}
\log
\expectone{\exp(\lambda Z)}
& \le
\lambda^2 \expectone{ \exp(\lambda Z) \Vdel} ~.
\end{align}
The RHS of the previous inequality has two coupled random variables; $\exp(\lambda Z)$ and $\Vdel$.
To make use of \eqref{ineq:mod_log_sobolev_del}, we decouple the two random variables using the following useful tool
from \citep{massart_2000} which we state as a proposition without a proof.

\begin{proposition}
For random variable $W$, and for any $\lambda \in \mathbb{R}$, if $\expectone{\exp(\lambda W)} < \infty$, then the
following holds
\begin{align}
\label{ineq:decouple_tool}
\frac{\EE{\lambda W \exp(\lambda Z)} }{\EE{\exp(\lambda Z)}}
& \le
\frac{\EE{\lambda Z \exp(\lambda Z)}}{\EE{\exp(\lambda Z)}}
-
\log \EE\,{\exp(\lambda Z)}
+
\log \EE\,{\exp(\lambda W)} ~.
\end{align}
\end{proposition}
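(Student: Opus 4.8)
The final statement is the standard decoupling inequality, which is an instance of the duality (variational) formula linking relative entropy to the log-moment-generating function. The plan is to prove it by the elementary Fenchel--Young inequality for the conjugate pair $x \mapsto x\log x - x$ and $y \mapsto e^{y}$, applied pointwise and then integrated. The single ingredient I would record first is that for every $a \ge 0$ and every $b \in \RR$,
\[
a\,b \;\le\; a\log a - a + e^{b},
\]
which is exactly the statement that $e^{y}$ is the convex conjugate of $a\log a - a$ (with equality when $b = \log a$).

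Writing $\rho_Z = \EE[e^{\lambda Z}]$ and $\rho_W = \EE[e^{\lambda W}]$, both finite (the latter by hypothesis, the former implicitly, since the fractions in the statement presume it), I would substitute into the pointwise inequality the choices $a = e^{\lambda Z}/\rho_Z$ and $b = \lambda W - \log\rho_W$. With these, $e^{b} = e^{\lambda W}/\rho_W$ and $a\log a = (e^{\lambda Z}/\rho_Z)(\lambda Z - \log\rho_Z)$. Taking expectations is the crux: since $\EE[a] = 1$ and $\EE[e^{b}] = 1$, the $-a$ and $+e^{b}$ terms integrate to $-1$ and $+1$ and cancel, leaving
\[
\frac{\EE[\lambda W e^{\lambda Z}]}{\rho_Z} - \log\rho_W \;\le\; \frac{\EE[\lambda Z e^{\lambda Z}]}{\rho_Z} - \log\rho_Z,
\]
which is precisely the asserted inequality once $\log\rho_W$ is moved to the right. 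Conceptually this is the same computation as the duality bound $\EE_Q[g] \le \log\EE[e^{g}] + D(Q\Vert P)$ applied with the tilted law $dQ = (e^{\lambda Z}/\rho_Z)\,dP$ and test function $g = \lambda W$; the Fenchel--Young route just makes the argument self-contained.

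The only point requiring genuine care --- and thus the main (if minor) obstacle --- is integrability: before integrating the pointwise bound term by term one must check that the cross-terms $\EE[\lambda W e^{\lambda Z}]$ and $\EE[\lambda Z e^{\lambda Z}]$ are well defined. The Fenchel--Young inequality itself already controls the positive part of $\lambda W \cdot e^{\lambda Z}/\rho_Z$ by the integrable right-hand side, and in every place this tool is invoked in the paper the variable playing the role of $W$ is the nonnegative quantity $\Vdel/\theta$, so that $\lambda W \ge 0$ and no cancellation of infinities can arise. Once these finiteness checks are in place, the term-by-term integration is justified and the claimed inequality follows.
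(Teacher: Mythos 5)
Your proof is correct. The paper itself does not prove this proposition---it is quoted from Massart (2000) and stated ``without a proof''---so there is nothing internal to compare against, but your Fenchel--Young argument with $a = e^{\lambda Z}/\EE[e^{\lambda Z}]$ and $b = \lambda W - \log\EE[e^{\lambda W}]$ is precisely the standard duality-formula proof of the cited result: the conjugacy $ab \le a\log a - a + e^{b}$ is applied pointwise, the normalizations $\EE[a]=\EE[e^{b}]=1$ cancel the linear terms, and the algebra you display reproduces the stated inequality exactly. Your remark on integrability is also apt, and indeed harmless in the paper's only use of the proposition, where $W = \Vdel/\theta \ge 0$ and $\lambda > 0$.
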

Multiplying both sides of \eqref{ineq:decouple_tool} by $\EE{\exp(\lambda Z)}$ and replacing $W$ with $\Vdel / \theta$
we get that:
\begin{align}
\label{ineq:decople_tool_f1}
\EE{\exp(\lambda Z) \Vdel}
& \le
\theta
\sqbra{
\EE{Z \exp(\lambda Z)}
-
\frac{1}{\lambda}\EE{\exp(\lambda Z)}\log \EE{\exp(\lambda Z)}
+
\frac{1}{\lambda}\EE{\exp(\lambda Z)}\log \EE{\exp\bra{ \lambda \frac{\Vdel}{\theta}} }
}.
\end{align}
Introduce $F(\lambda) = \EE{\exp(\lambda Z)}$, and $G(\lambda) = \log\EE{\exp(\lambda \Vdel)}$.
Note that $F'(\lambda) = \EE{Z \exp(\lambda Z)}$.
Plugging \eqref{ineq:decople_tool_f1} into \eqref{ineq:mod_log_sobolev_del_f1} and using the compact notation
$F(\lambda)$, $F'(\lambda)$, and $G(\lambda / \theta)$ we get that:
\begin{align}
\lambda F'(\lambda) - F(\lambda)\log F(\lambda)
& \le
\lambda^2 \theta
\bra{
 F'(\lambda)
 -
 \frac{1}{\lambda}
 F(\lambda)
 \log F(\lambda)
 +
 \frac{1}{\lambda}
  F(\lambda)
  G(\lambda / \theta)
} ~.
\end{align}
Dividing both sides by $\lambda^2 F(\lambda)$ and rearranging the terms:
\begin{align}
\frac{1}{\lambda}
\frac{F'(\lambda)}{F(\lambda)}
-
\frac{1}{\lambda^2}
\log F(\lambda)
& \le
\frac{\theta G(\lambda / \theta)}{\lambda (1 - \lambda \theta)} ~.
\label{eq:diffeqpre}
\end{align}
The rest of the proof continues exactly as the proof of Theorem 2 from \citep{boucheron_lugosi_massart_2003}:
As the left-hand side of the above display is just the derivative of 
$H(\lambda) = \frac1{\lambda} \log F(\lambda)$, \eqref{eq:diffeqpre} is equivalent to 
$H'(\lambda) \le \frac{\theta G(\lambda / \theta)}{\lambda (1 - \lambda \theta)}$.
Recalling that $\lim_{\lambda\to 0+} H(\lambda) = \EE[Z]$, the integration of the differential inequality gives
$H(\lambda) \le \EE[Z] + \theta\, \int_0^\lambda \frac{ G(s / \theta)}{s (1 - s \theta)} \,ds$.
Notice that $G$ is convex. 
This implies that the integrand is a nondecreasing function of $s$ and therefore
$\log F(\lambda) \le \lambda \EE[Z] + \frac{\lambda \theta G(\lambda/\theta)}{1-\lambda \theta}$.
\end{proof}

\section{Proof of \cref{lemma:exp_efron_stein}}
\label{appx:lemma_exp_efron_stein}

\lemmaexpefronstein*

\begin{proof}
Since $\Vdel-\EE\Vdel \in \Gamma_+(v,c)$, for any $\lambda \in (0,1/c)$ we have
\begin{align*}
\psi_{ \Vdel - \EE{\Vdel} }(\lambda)
=
\log \expectone{ \exp( \lambda (\Vdel - \expectop\Vdel) ) }
\le
\frac{\lambda^2 v}{2(1 - c\lambda)} ~.
\end{align*}
Rearranging the terms we get
\begin{align}
\log
\expectone{\exp(\lambda \Vdel)}
\le
\lambda
\EE{\Vdel}
+
\frac{\lambda^2 (v/2)}{1 - c \lambda} ~.
\label{eq:vdel_subgamma_bound}
\end{align}
Combining this with the result of \cref{theorem:cgf_z_vdel_expbound} where we choose $\theta=1$, we get
\begin{align}
\psi_{Z-\EE Z}(\lambda)
\le
\frac{\lambda}{1 - \lambda}
\bra{ \lambda \EE{\Vdel} + \frac{\lambda^2 (v/2)}{1 - c \lambda} } ~.
\label{eq:apply_theorem_cgf_Z}
\end{align}
We upper bound the term on the right-hand side as follows
\begin{align*}
\frac{\lambda}{1 - \lambda}
\bra{ \lambda \EE{\Vdel} + \frac{\lambda^2 (v/2)}{1 - c \lambda} }
& =
\frac{\lambda}{1 - \lambda}
\bra{ \frac{\lambda \EE{\Vdel} - c\lambda^2\EE{\Vdel} + \lambda^2v/2}{(1 - c\lambda)} }
\\
& \le
\frac{\lambda}{1 - \lambda}
\bra{ \frac{\lambda \EE{\Vdel} + \lambda^2 (v/2) }{(1 - c\lambda)} }
\\
& =
\frac{\lambda^2 \EE{\Vdel} + \lambda^3 (v/2) }{(1 - \lambda)(1 - c\lambda)}
\\
& \le
\frac{\lambda^2 \EE{\Vdel} + \lambda^2 (v/2) }{(1 - \lambda)(1 - c\lambda)}
\\
& =
\frac{\lambda^2 (\EE{\Vdel + v/2}) }{(1 - \lambda)(1 - c\lambda)}
\\
& \le
\frac{\lambda^2 (\EE{\Vdel + v/2}) }{(1 - (c+1)\lambda)} ~,
\end{align*}
where the last inequality holds provided that $0<\lambda<1/(c+1)$.
Thus we finally get that
\begin{align}
\psi_{Z-\EE Z}(\lambda)
\le
\frac{\lambda^2 (\EE{\Vdel + v/2}) }{(1 - (c+1)\lambda)} ~.
\label{eq:cgf_Z_final_bound}
\end{align}
Recall that the Cramer-Chernoff method gives that for any $\lambda>0$,
\begin{align*}
\probone{Z > \EE{Z} + t}\le \exp(-(\lambda t - \psi_{Z-\EE Z}(\lambda) )) ~.
\end{align*}
This combined with \eqref{eq:cgf_Z_final_bound}, we see that we need to lower bound 
\begin{align*}
\lambda t-  \psi_{Z-\EE Z}(\lambda)
\ge \lambda t - \frac{\lambda^2 (\EE{\Vdel + v/2}) }{(1 - (c+1)\lambda)} ~,
\end{align*} 
where $\lambda\in (0,1]\cap (0,1/(c+1)) = (0,1/(c+1))$ can be chosen so that the lower bound is the largest.
From Lemma~11 of \citet{boucheron_lugosi_massart_2003}, we have that for any $p,q > 0$,
\begin{align*}
\sup_{\lambda \in [0, 1/q)}
\bra{ \lambda t - \frac{\lambda^2 p}{1 - q \lambda}}
\ge
\frac{t^2}{4p + 2q (t/3)} ~,
\end{align*}
and the supremum is attained at
\begin{align*}
\lambda
=
\frac{1}{q}
\bra{1 - \bra{1 + \frac{qt}{p}}^{-1/2}} ~.
\end{align*}
Setting $p = \EE{\Vdel + v/2}$, $q = c+1$, we see that the optimizing $\lambda$ belongs to $(0,1/(c+1))$. 
Hence,
\begin{align*}
\probone{Z > \EE{Z} + t}
~ \le ~
\exp\bra{\frac{-t^2}{4 (\EE{\Vdel} + v/2) + 2(c+1)t/3 }} ~.
\end{align*}
The previous inequality gives an exponential bound on the upper tail for the deviation of the RV $Z$ from its expectation.

Finally, letting the right hand side of the previous inequality to equal $\delta$ and solving for $t$ then after some 
further upper bounding to simplify the resulting expression 
(in particular, using $\sqrt{|a|+|b|} \le \sqrt{|a|} + \sqrt{|b|}$) and using a union bound to obtain a two-sided tail inequality, we get
\begin{align}
\abs{Z - \EE Z}
~ \le ~
\sfrac{2}{3}(c+1)\logtwodelta
+
2\sqrt{(\EE\Vdel + v/2) \logtwodelta} ~.
\label{eq:zezbound}
\end{align}
The result now follows by applying \eqref{eq:zezbound} to $Z' = aZ$, $Z_{-i}' = aZ_{-i}$ and 
$\Vdel' = \sum_i (Z'- Z_{-i}')^2$.
Noting that $\Vdel' = a^2 \Vdel \in \Gamma(a^4 v, a^2c)$, we get
\begin{align*}
a \abs{Z - \EE Z}
~ \le ~
\sfrac{2}{3}(a^2 c+1)\logtwodelta
+
2\sqrt{(a^2 \EE\Vdel + a^4 v/2) \logtwodelta} ~.
\end{align*}
Dividing both sides by $a$ gives the desired inequality.
\end{proof}

\section{Proof of \cref{lemma:termI_bound_qnorm_vdel}}
\label{apx:lemma:termI_bound_qnorm_vdel}

\lemtermIqnormbound*

\begin{proof}
Let $q \ge 1$. Then,
\begin{align}
\norm{\Vdel}_q
& =
\norm{ \sum_{i=1}^n (Z - Z_{-i})^2 }_q
 \le
\sum_{i=1}^n \norm{ (Z - Z_{-i})^2 }_q
\,,
\label{eq:vdelbasic}
\end{align}
where the inequality is by the triangle inequality.
Now, using the definitions of $Z$ and $Z_{-1}$ (cf. \cref{eq:Z_and_Zi_for_Rdel}),
\begin{align*}
\MoveEqLeft (Z - Z_{-1})^2 \\
& = 
\bra{\frac{1}{n}  \loss{ \Alg(\sS_n^{-1} ) , X_1}
+
\frac{1}{n}\sum_{i=2}^n \bra{\loss{ \Alg(\sS_n^{-i} ) , X_i}  -  \loss{\Alg(\sS_n^{-1 , -i}), X_i} }}^2 
\\
& 
\le
\frac{2}{n^2}  \losssq{ \Alg(\sS_n^{-1} ) , X_1}
+
2 \bra{\frac{1}{n}\sum_{i=2}^n \bra{\loss{ \Alg(\sS_n^{-i} ) , X_i}  -  \loss{\Alg(\sS_n^{-1 , -i}), X_i} }}^2 
\quad\mathtxtsm{($(a+b)^2 \le 2a^2 + 2b^2$)}
\\
& 
\le
\frac{2}{n^2}  \losssq{ \Alg(\sS_n^{-1} ) , X_1}
+
2 \frac{1}{n}\sum_{i=2}^n \bra{\loss{ \Alg(\sS_n^{-i} ) , X_i}  -  \loss{\Alg(\sS_n^{-1 , -i}), X_i} }^2 \,.
\quad\mathtxtsm{(Jensen's inequality)}
\end{align*}
Taking the $q$-norm of both sides, 
using the triangle inequality and that for any $U$ RV, $\norm{U^2}_q = \norm{U}_{2q}^2$, 
we get
\begin{align}
\norm{ (Z - Z_{-1})^2 }_q 
& \le
\frac{2}{n^2}  \norm{\loss{ \Alg(\sS_n^{-1} ) , X_1}}_{2q}^2
+
\frac{2}{n}\sum_{i=2}^n \norm{\loss{ \Alg(\sS_n^{-i} ) , X_i}  -  \loss{\Alg(\sS_n^{-1 , -i}), X_i} }_{2q}^2\,.
\label{eq:RdelZminusZminusone}
\end{align}
An analogous inequality holds for $\norm{(Z-Z_{-j})^2}_q$ with $j>1$. 
Summing up all these, 
using that $(\Alg(\sS_n^{-j} ) , X_j)_{j}$ share the same distribution
and combining
with \cref{eq:vdelbasic}, we get
\begin{align*}
\norm{\Vdel}_q
 & \le \frac{2}{n^2} \sum_{j=1}^n \norm{\loss{ \Alg(\sS_n^{-j} ) , X_j}}_{2q}^2 +
  2 \frac{1}{n} \sum_{j=1}^n
\sum_{i\ne j} \norm{\loss{ \Alg(\sS_n^{-i} ) , X_i}  -  \loss{\Alg(\sS_n^{-j , -i}), X_i} }_{2q}^2\\
 & = \frac{2}{n^2} \sum_{j=1}^n  \norm{\loss{ \Alg(\sS_n^{-j} ) , X_j}}_{2q}^2 +
  2 \sum_{i=1}^n
\underbrace{\frac{1}{n} \sum_{j\ne i}  
 \norm{\loss{ \Alg(\sS_n^{-i} ) , X_i}  -  \loss{\Alg(\sS_n^{-i , -j}), X_i} }_{2q}^2}_{\beta_{2q}^2(n-1)}\\
& = \frac{2}{n^2} \sum_{i=1}^n \norm{\loss{ \Alg(\sS_n^{-i} ) , X_i}}_{2q}^2 + 2 n \beta_{2q}^2(n-1)\,.
\end{align*}
Replacing $q$ with $2q$ gives the desired result.
\end{proof}

\section{Proof of \cref{lemma:temrII_vdel_is_subgamma}}
\label{apx:lemma:temrII_vdel_is_subgamma}

\lemtermIIqnormbound*

\begin{proof}
Let $q\ge 1$. Then, similar to the previous proof,
\begin{align}
\norm{\Vdel}_q
=
\norm{ \sum_{i=1}^n \bra{Z - Z_{-i}}^2 }_q
\le
\sum_{i=1}^n
\norm{ \bra{Z - Z_{-i}}^2 }_q
= 
\sum_{i=1}^n
\norm{ \bra{Z - Z_{-i}} }_{2q}^2\,,
\label{eq:vdel2exp}
\end{align}
where the last inequality is because for any RV $U$, $\norm{U^2}_q = \norm{U}_{2q}^2$.
Then, using the definitions of $Z$ and $Z_{-1}$,
\begin{align}
\norm{Z-Z_{-1}}_{2q}^2
& \le
\norm{
R(\Alg(\sS_n),\fdistri{P}) - R(\Alg(\sS_n^{-1}),\fdistri{P})
}_{2q}^2
\nonumber
\\
& =
\norm{
\expectone{ \loss{\Alg(\fset{S}_n),X} - \loss{\Alg(\fset{S}_n^{-1}),X} | \fset{S}_n }
}_{2q}^2
\tag{\mathtxtsm{tower rule}}
\nonumber
\\
& =
\expectone{ 
|\expectone{ \loss{\Alg(\fset{S}_n),X} - \loss{\Alg(\fset{S}_n^{-1}),X} | \fset{S}_n }|^{2q}
}^{2/(2q)}
\nonumber
\\
& \le
\expectone{ 
\expectone{ |\loss{\Alg(\fset{S}_n),X} - \loss{\Alg(\fset{S}_n^{-1}),X}|^{2q}\, | \fset{S}_n }
}^{2/(2q)}
\tag{\mathtxtsm{Jensen's inequality}}
\nonumber
\\
& =
\expectone{ 
|\loss{\Alg(\fset{S}_n),X} - \loss{\Alg(\fset{S}_n^{-1}),X}|^{2q}
}^{2/(2q)}
\tag{\mathtxtsm{tower rule}}
\nonumber
\\
& =
\norm{
\loss{\Alg(\fset{S}_n),X} - \loss{\Alg(\fset{S}_n^{-1}),X}
}_{2q}^{2}\,.
\label{ineq:termII_moment_vdel_final}
\end{align}
An analogous inequality holds for $\norm{Z-Z_{-i}}_{2q}^2$ with $i>1$. Summing up all these,
combining with \eqref{eq:vdel2exp} we get
\begin{align*}
\norm{\Vdel}_q \le n\,\, \frac{1}{n} \sum_{i=1}^n \norm{
\loss{\Alg(\fset{S}_n),X} - \loss{\Alg(\fset{S}_n^{-i}),X}
}_{2q}^{2} = n \beta_{2q}^2(n)~.
\end{align*}
Replacing $q$ with $2q$ yields that
\begin{align}
\norm{\Vdel}_{2q}
& ~ \le ~
n \beta^2_{4q}(n) ~.
\nonumber
\end{align}
\end{proof}

\section{Proof of \cref{lemma:termIII_upper_bound}}
\label{apx:lemma:termIII_upper_bound}

\lemtermIIIupperbound*

\begin{proof}
To derive a bound on $|\EE R(\Alg(\sS_n),\fdistri{P}) - \EE\Rdel(\Alg,\sS_n)|$ in terms of $L_q$-stability,
we proceed as follows.
First, note that
$\EE R\bra{\Alg(\sS_n),\fdistri{P}} = \expectone{ \loss{ \falgo{A}(\fset{S}_n),X} }$.
Second, for $\EE\Rdel(\Alg,\sS_n)$, we have
\begin{align}
\E \Rdel(\Alg,\sS_n)
& =
\expectone{
\frac{1}{n}
\sum_{i=1}^n
\loss{\Alg\bra{\sS_n^{-i}},X_i}
} 
\nonumber
\\
& =
\frac{1}{n}
\sum_{i=1}^n
\expectone{
\loss{ \Alg\bra{\sS_n^{-i}},X_i }
}
\nonumber
\\
& =
\frac{1}{n}
\sum_{i=1}^n
\expectone{
\loss{ \falgo{A}\bra{\fset{S}_n^{-i}} , X }
}\,,
 \tag{\mathtxtsm{by \iid\ of the examples}}
\nonumber
\end{align}
where $X \sim \fdistri{P}$ is independent of $\sS_n$.%
\footnote{Note that, of course, as is well known,
$\expectone{\loss{ \falgo{A}\bra{\fset{S}_n^{-i}} , X }}
=
\expectone{\loss{ \falgo{A}\bra{\fset{S}_n^{-1}} , X }}$ 
also holds for any $i>1$, but we will not need this identity here.}

It follows that 
\begin{align}
\MoveEqLeft
\abs{ \EE R\bra{\Alg(\sS_n),\fdistri{P}} - \EE\Rdel(\Alg,\sS_n) }
 =
\abs{ \EE[\loss{\Alg(\sS_n),X}] -  \frac1n \sum_{i=1}^n \EE[\ell(\Alg(\sS_n^{-i}),X ) ] }
\nonumber
\\
& =
\abs{ \frac1n \sum_{i=1}^n 
\expectone{
\loss{\Alg(\sS_n),X} - \ell(\Alg(\sS_n^{-i}),X)
}
}
\nonumber
\\
& \leq
\frac1n \sum_{i=1}^n 
\expectone{
\abs{
\loss{\Alg(\sS_n),X} - \ell(\Alg(\sS_n^{-i}),X)
}
}
\nonumber
\tag{\mathtxtsm{Jensen's inequality}}
\\
& = \beta_1(n) \le \beta_2(n)\,,
\label{ineq:termIII_upper_bound}
\end{align}
where the last equality uses the definition of $\beta_1$, and the last inequality uses that $\beta_q \le \beta_{q'}$ for $q\le q'$.
\end{proof}

\section{Ridge regression: Proving \cref{eq:lossmomentbound}}
\label{sec:ridgecalc}
\newcommand{\hX}{\hat{X}}
\newcommand{\hY}{\hat{Y}}
\newcommand{\tw}{\tilde{w}}
\newcommand{\Sl}{\Sigma_\lambda }
\newcommand{\tSl}{\tilde{\Sigma}_\lambda }
\newcommand{\tX}{\tilde{X}}
\newcommand{\tY}{\tilde{Y}}
For the convenience of the reader, let us restate \cref{eq:lossmomentbound}:
\begin{align*}
\sfrac{2}{n} \norm{\loss{ \Alg_\lambda(\sS_n^{-1}), (x_1,Y_1) }  }_{4q}^2
\le 
\sfrac{4
\norm{Y_1^4}_{2q}}{n} \bra{1+\frac{B_X^4}{\lambda^2}}^2\,.
\end{align*}
Introduce the shorthand $\tw = \Alg_\lambda(\sS_n^{-1})$.
We have
\begin{align*}
\loss{ \Alg_\lambda(\sS_n^{-1}), (x_1,Y_1) }  
= (x_1^\top  \tw- Y_1)^2 \le 2 (x_1x^\top \tw)^2 + 2 Y_1^2
\end{align*}
Then, $|x_1^\top \tw| \le \norm{x_1} \norm{\tw}\le B_X \norm{\tw}$ ($\norm{\cdot}$ denotes the $2$-norm).
Introduce the abbreviation $\norm{Z}_{2,q} = \norm{ \norm{Z} }_q$.
Hence,
\begin{align*}
\norm{ |x_1^\top \tw|^2}_q
\le B_X^2 \norm{ \norm{\tw}^2 }_q  = B_X^2 \norm{ \tw }_{2,2q}^2\,.
\end{align*}
 
Let $\tX = [x_2 \dots x_{n}]^\top$ (thus, $\tX\in \RR^{(n-1)\times d}$, with $x_1$ left out),
$\tY = [Y_2,\dots,Y_n]^\top$
and $\tSl = \tX^\top \tX + n\lambda I$ so that $\tw = \tSl^{-1} \tX^\top \tY$.

We calculate $\norm{\tw} \le  \norm{\tSl^{-1}} \sum_{i=2}^n |Y_i|  \norm{x_i}
\le \frac{B_X}{n\lambda} \sum_{i=2}^n |Y_i|  $ and so
\begin{align*}
\norm{\tw }_{2,2q} = \norm{  \tSl^{-1} \tX^\top \tY }_{2,2q} 
\le \frac{B_X}{\lambda} \norm{Y_1}_{2q}\,,
\end{align*}
where the second inequality used that $Y_1$ has the same distribution as $Y_i$ with $i>1$.
Putting things together,
\begin{align*}
\norm{\loss{ \Alg_\lambda(\sS_n^{-1}), (x_1,Y_n) }  }_q
& \le 
2 \frac{B_X^4}{\lambda^2} \norm{Y_1}^2_{2q} + 2\norm{Y_1}^2_{2q}
= 2\norm{Y_1}^2_{2q} \bra{1+\frac{B_X^4}{\lambda^2}}
\numberthis
\label{eq:gammaqbound}
\end{align*}
and thus
\begin{align*}
\frac2n \norm{\loss{ \Alg_\lambda(\sS_n^{-1}), (x_1,Y_n) }  }_{4q}^2
\le
\frac4n
\norm{Y_1}^4_{8q} \bra{1+\frac{B_X^4}{\lambda^2}}^2
=
\frac4n
\norm{Y_1^4}_{2q} \bra{1+\frac{B_X^4}{\lambda^2}}^2\,,
\end{align*}
finishing the proof.

\section{Proof for \cref{theorem:exp_tail_bound_for_res}}
\label{sec:resproof}

Here, we provide a proof for the exponential tail bound for the generalization gap defined using the empirical error:
\theoremfinalgenresultres*

\begin{proof}%
We show the proof for the case when $\ell$ takes values in $[0,1]$. The unbounded case requires the same modifications 
and similar calculation to what has been shown for \cref{theorem:exp_tail_bound_for_del} and is left to the reader. 
As before, 
\begin{align*}
|\Rres\bra{\Alg,\sS_n} - R\bra{\Alg(\sS_n),\cP}|
& \le
|\Rres\bra{\Alg,\sS_n} - \E \Rres\bra{\Alg,\sS_n}| \\
& \quad +
|R\bra{\Alg(\sS_n),\cP}-\E R\bra{\Alg(\sS_n),\cP}| \\
& \quad +
|\E\Rres\bra{\Alg,\sS_n} - \E R\bra{\Alg(\sS_n),\cP}|\,.
\numberthis
\label{eq:basicdecres}
\end{align*}
To control the first term, one can use the same argument as in \cref{subsec:bounding_term_I} with the difference
that we should use
\begin{equation}
Z      ~ = ~ \Rres(\Alg,\sS_n) = \frac{1}{n}\sum_{i=1}^n \loss{ \Alg(\sS_n ) , X_i}, \qquad
Z_{-i} ~ = ~ \frac{1}{n} \sum_{\overset{j=1}{\sss j \neq i}}^n \loss{\Alg(\sS_n^{-i}), X_j},
\end{equation}
As required, $Z_{-i}$ does not depend on $X_i$.
The proof of \cref{lemma:termI_bound_qnorm_vdel} presented in \cref{apx:lemma:termI_bound_qnorm_vdel} goes through verbatim with the necessary adjustments to account for the differences in the definitions of $Z$ and $Z_{-i}$.
To show the differences encountered, note that 
\begin{align*}
Z - Z_{-1}
& = 
\frac{1}{n}  \loss{ \Alg(\sS_n ) , X_1}
+
\frac{1}{n}\sum_{i=2}^n \bra{\loss{ \Alg(\sS_n) , X_i}  -  \loss{\Alg(\sS_n^{-1}), X_i} }\,.
\end{align*}
Then, following the steps of the proof of \cref{apx:lemma:termI_bound_qnorm_vdel}, we get
\begin{align*}
\norm{\Vdel}_q 
& \le 
 \frac{2}{n^2} \sum_{i=1}^n \norm{\loss{ \Alg(\sS_n ) , X_i}}_{2q}^2 +
  \frac{2}{n} 
\sum_{\overset{i,j=1}{i\ne j}}^n
 \norm{\loss{ \Alg(\sS_n ) , X_i}  -  \loss{\Alg(\sS_n^{-j}), X_i} }_{2q}^2
\end{align*}
Now, notice that
\begin{align*}
\frac1n \sum_i \norm{\loss{ \Alg(\sS_n ) , X_i}}_{2q}^2
\le
\underbrace{\frac1n \sum_i \norm{\loss{ \Alg(\sS_n ) , X_i}-\loss{\Alg(\sS_n^{-i}),X_i} }_{2q}^2}_{\gamma_{2q}^2(n)}
+
\frac1n \sum_i \norm{\loss{\Alg(\sS_n^{-i}),X_i} }_{2q}^2\,.
\end{align*}
Furthermore,
\begin{align*}
\MoveEqLeft
 \norm{\loss{ \Alg(\sS_n ) , X_i}  -  \loss{\Alg(\sS_n^{-j}), X_i} }_{2q} \\
& \le
 \norm{\loss{ \Alg(\sS_n ) , X_i}  -  \loss{ \Alg(\sS_n^{-i}) , X_i} }_{2q}
 +
 \norm{\loss{\Alg(\sS_n^{-j}), X_i} - \loss{ \Alg(\sS_n^{-i} ) , X_i} }_{2q}\,.
\end{align*}
For the second term in the RHS we have
\begin{align*}
\MoveEqLeft \norm{\loss{\Alg(\sS_n^{-j}), X_i} - \loss{ \Alg(\sS_n^{-i} ) , X_i} }_{2q}\\
& \le
\norm{\loss{\Alg(\sS_n^{-j}), X_i} -  \loss{\Alg(\sS_n^{-i,-j}),X_i} }_{2q}
+
\norm{\loss{ \Alg(\sS_n^{-i} ) , X_i} -  \loss{\Alg(\sS_n^{-i,-j}),X_i} }_{2q}\,.
\end{align*}

Combining these inequalities and using $(a+b+c)^2 \le 3 (a^2+b^2+c^2)$, we get
\begin{align*}
\sum_{i\ne j}
\MoveEqLeft \norm{\loss{ \Alg(\sS_n ) , X_i}  -  \loss{\Alg(\sS_n^{-j}), X_i} }_{2q}^2 
 \le
3 
\sum_{i\ne j}
\norm{\loss{ \Alg(\sS_n ) , X_i}  -  \loss{ \Alg(\sS_n^{-i}) , X_i} }_{2q}^2 \\
& \quad +
3 \sum_{i\ne j}
\norm{\loss{\Alg(\sS_n^{-j}), X_i} -  \loss{\Alg(\sS_n^{-i,-j}),X_i} }_{2q}^2\\
& \quad +
3 \sum_{\overset{i,j=1}{i\ne j}}^n
\norm{\loss{ \Alg(\sS_n^{-i} ) , X_i} -  \loss{\Alg(\sS_n^{-i,-j}),X_i} }_{2q}^2\\
& = 3 n^2 \gamma_{2q}^2(n) + 3 n^2 \gamma_{2q}^2(n-1) + 3 n^2 \beta_{2q}^2(n-1)\,.
\end{align*}
Putting things together,
\begin{align*}
\norm{\Vdel}_q 
& \le 
 \frac{2}{n^2}
 \sum_{i=1}^n \norm{\loss{ \Alg(\sS_n^{-i}) , X_i}}_{2q}^2
+
  6n 
  ( \gamma_{2q}^2(n) + \gamma_{2q}^2(n-1) +  \beta_{2q}^2(n-1) )
  +
 \frac{2}{n} \gamma_{2q}^2(n)\,.
\end{align*}
Replacing $q$ with $2q$ we get the analogue of \cref{eq:termI_bound_qnorm_vdel}.
It follows that under \cref{assump:subgammagamma} (in place of \cref{assump:one}),
\cref{cor:term_1} and \cref{lemma:termI_rdel_exp_tail_bound} will hold with $\Rdel$ replaced by $\Rres$,
and $\beta_q$ replaced with $\gamma_q$, but with no other changes.

The second term of the RHS of \eqref{eq:basicdecres} is controlled and the derivations here are applicable given that 
\cref{assump:subgammagamma} implies \cref{assump:one}.
Previously, the third term was controlled in \cref{subsec:upper_bound_termIII}. Here, we need to change the reasoning a bit. We start by noting that

\begin{align*}
\E\Rres\bra{\Alg,\sS_n} - \E R\bra{\Alg(\sS_n),\cP}
=
\frac1n \sum_{i=1}^n \E\left[ \loss{\Alg(\sS_n),X_i} - \loss{\Alg(\sS_n),X}  \right]\,,
\end{align*}
where $X\sim \cP$ is independent of $\sS_n$.
Define $S_n^{i\setminus x} = (X_1,\dots,X_{i-1},x,X_{i+1},\dots,X_n)$.
Then,
$ \E \loss{\Alg(\sS_n),X_i} =  \E \loss{\Alg(S_n^{i\setminus X} ),X}$ and hence
\begin{align*}
\E\left[ \loss{\Alg(\sS_n),X_i} - \loss{\Alg(\sS_n),X}  \right]
= 
\E\left[ \loss{\Alg(S_n^{i\setminus X} ),X} - \loss{\Alg(\sS_n),X}  \right]\,.
\end{align*}
Now, taking absolute values, using $\abs{\E[V]} \le \E[\abs{V}]$,
subtracting and adding $\loss{\Alg(\sS_n^{-i}),X}$, and using the triangle inequality,
\begin{align*}
\MoveEqLeft 
\E \left[\abs{\loss{\Alg(S_n^{i\setminus X} ),X} - \loss{\Alg(\sS_n),X} }\right]\\
&  \le
\E \left[\abs{\loss{\Alg(S_n^{i\setminus X} ),X} - \loss{\Alg(\sS_n^{-i}),X} }\right]
+
\E \left[\abs{\loss{\Alg(\sS_n),X}- \loss{\Alg(\sS_n^{-i}),X}  } \right]\\
& =
\E \left[\abs{\loss{\Alg(S_n),X_i} - \loss{\Alg(\sS_n^{-i}),X_i} }\right]
+
\E \left[\abs{\loss{\Alg(\sS_n),X}- \loss{\Alg(\sS_n^{-i}),X}  } \right]\,.
\end{align*}
where the equality follows because 
the joint distribution of $(S_n^{i\setminus X},\sS_n^{-i},X)$ is the same as that of $(S_n,S_n^{-i},X_i)$.
Putting things together, we get
\begin{align*}
\MoveEqLeft
\abs{\E\Rres\bra{\Alg,\sS_n} - \E R\bra{\Alg(\sS_n),\cP}}\\
& \le 
\frac1n \sum_{i=1}^n
\E \left[\abs{\loss{\Alg(S_n),X_i} - \loss{\Alg(\sS_n^{-i}),X_i} }\right]
+
\frac1n \sum_{i=1}^n
\E \left[\abs{\loss{\Alg(\sS_n),X}- \loss{\Alg(\sS_n^{-i}),X}  } \right] \\
& \le  \gamma_1(n) + \beta_1(n)\,,
\end{align*}
where the last inequality is by Cauchy-Schwartz.
Combining this with the bounds on the other two terms in \eqref{eq:basicdecres} gives the desired result.
\end{proof}

\section{Proof of \cref{prop:srknn_lq_stability}}
\label{appx:example_short_range_1nn}

\newcommand{\one}[1]{\mathbf{I}\bra{#1}}
\newcommand{\nn}{\text{NN}}

By reusing Example~3.11 of \citet{kutin_niyogi_02}, we show that the following holds:

\propsrknnlqstability*

\begin{proof}
We consider classification of points of the $[0,1]$ interval with the zero-one loss.
Let the response be $\pm 1$. 
The loss is given by $\ell( g, (x,y) ) = \one{g(x)\ne y}$, where $(x,y)\in [0,1]\times \{\pm 1\}$ and $g: [0,1]\to \{\pm 1\}$. 
Choose the distribution $\cP$ so that the marginal on the input is the uniform distribution:
If $(X,Y) \sim \cP$, $\Prob{X\in [a,b]} = |b-a|$ for $0\le a\le b \le 1$.
Let $\cP$ be such that $\eta :=\Prob{Y=1}>0$.

The learning algorithm is what one may want to call a ``short-range nearest neighbor classifier''.
For $n\ge 1$, let $d_n \ge 0$ be the ``range'' parameter and assume that 
$d_n = o(1/n)$.
For data $\sS_n= ((X_1,Y_1),\dots,(X_n,Y_n))$, let $\nn( \sS_n, x ) = \argmin_{1\le i \le n} |X_i-x|$ be the index of the nearest neighbor of $x\in [0,1]$ in $\sS_n$ (with arbitrary tie-breaking).
Now, for input $x$ the learning algorithm $\Alg$ predicts label $Y_{\nn(\sS_n,x)}$ if $|\nn(\sS_n,x)-x|\le d_n$ and it predicts label $+1$ otherwise.
Clearly,
\begin{align}
\Rres\bra{\Alg,\fset{S}_n} =0~. \label{eq:emprisk}
\end{align}
Further,
\begin{align*}
\Prob{\Alg(\sS_n)(X)=-1\,|\,\sS_n} 
& \le 
\Prob{\,|\nn(\sS_n,X)-X|\le d_n\,|\,\sS_n} \\
& \le
\sum_{i=1}^n \Prob{\, |X-X_i|\le d_n \,|\, \sS_n } \\
& \le 2n d_n\,,
\end{align*}
where the last inequality follows because $X$ is uniformly distributed.

Note that $\{ \Alg(\sS_n)(X)\ne Y \} \supset \{ Y = -1 \} \setminus \{ \Alg(\sS_n)(X)=-1 \}$, hence, using~\eqref{eq:emprisk} and that by the independence of $(X,Y)$ and $\sS_n$, $\Prob{Y=-1|\sS_n}=\Prob{Y=-1}$,
\begin{align*}
R\bra{\Alg(\sS_n),\cP} - 
\Rres\bra{\Alg,\fset{S}_n}
& \ge \Prob{Y=-1} - \Prob{\Alg(\sS_n)(X)=-1|\sS_n} \\
& \ge \eta - 2 n d_n \to \eta \text{ as }  n\to\infty\,,
\end{align*}
which establishes \cref{eq:posgap}.
It remains to show that $\beta_q(n) \to 0$.
From the definition, since $X$ has a density, the rule is almost surely symmetric, we need to evaluate
\begin{align*}
\beta_q^q(n)=\expectone{ \left|\loss{\Alg(\sS_n),(X,Y)} - \loss{\Alg(\sS_n^{-1}),(X,Y)}\right|^q}\,,
\end{align*}
where $(X,Y) \sim \cP$, independently of $\sS_n$.
Note that 
$|\loss{\Alg(\sS_n),(X,Y)} - \loss{\Alg(\sS_n^{-1}),(X,Y)}|$ is either zero or one.
Let us focus on the case when this difference is nonzero, i.e., the two losses are different.
Clearly, if $\Alg(\sS_n)(X)=\Alg(\sS_n^{-1})(X)$ then the two losses were the same (both predictions are compared to the same $Y$). Hence, if the loss is nonzero, the predictions must be different.
It follows that
\begin{align*}
\beta_q^q(n) 
& = \Prob{ \loss{\Alg(\sS_n),(X,Y)} \ne \loss{\Alg(\sS_n^{-1}),(X,Y)} } 
\le \Prob{ \Alg(\sS_n)(X)\ne \Alg(\sS_n^{-1})(X) } \\
&=\EE\left[\, \Prob{ \Alg(\sS_n)(X)\ne \Alg(\sS_n^{-1})(X) \,|\, \sS_n } \,\right]\,.
\end{align*}
Now, $\Alg(\sS_n)(x)\ne \Alg(\sS_n^{-1})(x)$ implies that $|x-X_1|\le d_n$.
Since $X$ is uniformly distributed, independently of $\sS_n$,
$\Prob{ \Alg(\sS_n)(X)\ne \Alg(\sS_n^{-1})(X) \,|\, \sS_n } \le
\Prob{ |X-X_1|\le d_n \,|\, \sS_n } \le 2d_n$.
Hence,
\begin{align*}
\beta_q^q(n)  &  \le 2d_n
\end{align*}
and as such, $\beta_q(n) \le \sup_{q\ge 1} (2 d_n)^q/q \to 0$ as $n\to \infty$, as a simple calculation shows.
\end{proof}
Note that $\beta_u=1$ in this example: For $\sS_n$ such that $|X_1-X_2|<d_{n-1}$, $Y_1\ne Y_2$,
$\Alg(\sS_n)(X_1)\ne \Alg(\sS_n^{-1})(X_1)$, and 
\begin{align*}
|\loss{ \Alg(\sS_n), (X_1,y) }-\loss{ \Alg(\sS_n^{-1}), (X_1,y) }| =1\,.
\end{align*}

Also, short-range nearest neighbor is of course a terrible algorithm. Nevertheless if one is looking for ways of figuring out whether an algorithm is terrible or not, this example is important in that it shows that one should better look beyond  the empirical error. While this seems obvious in retrospect, we have not seen this mentioned in previous literature on comparing error estimation techniques. 

Note also that the $1$-nearest neighbor rule is also subject to the same phenomenon as the ``short-range nearest neighbor rule'': Its training error is always zero, while its risk converges to twice the Bayes risk. At the same time, it is stable in the above sense, and the deleted estimate concentrates around its true risk \citep{book_devroye_gyorfi_lugosi_1996}.

\section{\cref{lemma:termI_bound_qnorm_vdel} with $V$}
\label{sec:lemma:termI_bound_qnorm_v}
Let $( X_1',\dots, X_n')$ be an independent copy of $S_n$,
and 
\begin{align*}
\sS_n^i = (X_1,\dots,X_{i-1}, X_i', X_{i+1},\dots,X_n)
\end{align*}
 be the same as $S_n$ but with the $i$th example $X_i$ replaced by $\hat X_i$.
Define
\begin{align}
Z     & =  \Rdel(\Alg,\sS_n) = \frac{1}{n}\sum_{j=1}^n \loss{ \Alg(\sS_n^{-j} ) , X_j}\,.
\label{eq:Z_for_Rdel}
\end{align}
Clearly, $Z = f(\sS_n)$ for an appropriate function $f$. 
Define
\begin{align}
Z_i = f(\sS_n^i)\,.
\label{eq:Zi_for_Rdel}
\end{align}
Note that $\sS_n^{1,-1} = \sS_n^{-1}$ (replacing, the removing the first element is the same as removing it). Then,
\begin{align}
Z_1 =  \frac{1}{n}
\loss{ \Alg(\sS_n^{-1}), X_1' } +
\frac1n \sum_{j=2}^n \loss{ \Alg(\sS_n^{1,-j} ) , X_j}
\label{eq:Z1_exp_Rdel}
\end{align}
and a similar identity holds for $Z_i$ with $i>1$.
\begin{restatable}{lemma}{lemtermIqnormboundV}
\label{lemma:termI_bound_qnorm_v}
Let $Z$, $Z_{i}$ be defined as in \cref{eq:Z_for_Rdel,eq:Zi_for_Rdel}, and let
$V = \sum_{i=1}^n \bra{ Z - Z_i }^2$.
Then for any real $q \ge 1/2$ and integer $n \ge 2$, the following holds
\begin{align}
\norm{V}_{2q} & 
~ \le ~ 
 \sfrac{8}{n^2} \sum_{i=1}^n \norm{\loss{ \Alg(\sS_n^{-i} ) , X_i}}_{4q}^2 + 8 n \beta_{4q}^2(n-1)\,,
\label{eq:termI_bound_qnorm_v}
\end{align}
and, in particular, $\EE V ~ \le ~  \sfrac{8}{n^2}\sum_{i=1}^n\norm{\loss{ \Alg(\sS_n^{-i} ) , X_i}}_{2}^2 + 8 n\beta_{2}^2(n-1)$.
\end{restatable}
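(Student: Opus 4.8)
The plan is to follow the proof of \cref{lemma:termI_bound_qnorm_vdel} essentially line by line, tracking the two places where the replacement proxy $V$ costs an extra factor relative to the removal proxy $\Vdel$. As there, I would first invoke the triangle inequality together with the identity $\norm{U^2}_q=\norm{U}_{2q}^2$ to reduce the whole claim to a single increment: for any $q\ge 1$,
\begin{align*}
\norm{V}_q \le \sum_{i=1}^n \norm{(Z-Z_i)^2}_q = \sum_{i=1}^n \norm{Z-Z_i}_{2q}^2\,,
\end{align*}
so that it suffices to bound $\norm{Z-Z_i}_{2q}^2$ and sum over $i$.

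Using \cref{eq:Z1_exp_Rdel} and its analogues for $i>1$ (and noting that replacing $X_i$ and then removing it returns $\sS_n^{-i}$, so the $i$-th summand collapses onto the leading term), I would write
\begin{align*}
Z-Z_i
&= \sfrac{1}{n}\bra{\loss{\Alg(\sS_n^{-i}),X_i}-\loss{\Alg(\sS_n^{-i}),X_i'}}\\
&\quad+\sfrac{1}{n}\sum_{j\ne i}\bra{\loss{\Alg(\sS_n^{-j}),X_j}-\loss{\Alg(\sS_n^{i,-j}),X_j}}\,.
\end{align*}
Applying $(a+b)^2\le 2a^2+2b^2$ to separate the two groups and Jensen's inequality to the sum over $j$ (using $n-1\le n$) proceeds exactly as in the removal case. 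Two modifications then generate the extra factor of four. \emph{First}, the leading term is now a \emph{difference} of two losses rather than a single loss; splitting it once more with $(a-b)^2\le 2a^2+2b^2$ and using that $X_i'$ is an independent copy of $X_i$ — so that $\loss{\Alg(\sS_n^{-i}),X_i'}$ and $\loss{\Alg(\sS_n^{-i}),X_i}$ are identically distributed, since $\Alg(\sS_n^{-i})$ depends on neither — turns the removal coefficient $\sfrac{2}{n^2}$ into $\sfrac{8}{n^2}$. \emph{Second}, each summand is not itself a removal increment, so I would insert and subtract $\loss{\Alg(\sS_n^{-i,-j}),X_j}$ and write it as $C_j+D_j$, with $C_j=\loss{\Alg(\sS_n^{-j}),X_j}-\loss{\Alg(\sS_n^{-i,-j}),X_j}$ and $D_j=\loss{\Alg(\sS_n^{-i,-j}),X_j}-\loss{\Alg(\sS_n^{i,-j}),X_j}$, applying $(a+b)^2\le 2a^2+2b^2$ a final time.

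The last step is to sum over $i$ and collapse both double sums into $\beta_{2q}^2(n-1)$. The term $C_j$ is the very removal increment (removing $X_i$ from $\sS_n^{-j}$, tested on the independent point $X_j$) that already appeared in the removal proof, so by exchangeability and the \iid\ property its average over $i\ne j$ equals $\beta_{2q}^2(n-1)$, and after the coefficient bookkeeping (with $n-1\le n$) the $C$-sum contributes $4n\beta_{2q}^2(n-1)$; the leading term sums to $\sfrac{8}{n^2}\sum_i\norm{\loss{\Alg(\sS_n^{-i}),X_i}}_{2q}^2$. Replacing $q$ by $2q$ then yields \cref{eq:termI_bound_qnorm_v}, while setting $q=1$ in the intermediate bound and using $\norm{V}_1=\EE V$ (as $V\ge 0$) gives the stated bound on $\EE V$. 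I expect the one genuinely new point — everything else being a transcription of the removal proof — to be the control of the $D_j$ terms: $D_j$ compares \emph{removing} $X_i$ against \emph{replacing} it by the fresh copy $X_i'$, which is nothing but the removal of $X_i'$ from the size-$(n-1)$ sample $\sS_n^{i,-j}$. The delicate part is to argue cleanly that, because $X_i'$ sits inside $\sS_n^{i,-j}$ as just another \iid\ draw and $X_j$ is independent of that sample, this removal is distributionally a generic removal from a size-$(n-1)$ sample, so that $\norm{D_j}_{2q}^2$ is again controlled, on average, by $\beta_{2q}^2(n-1)$ and contributes a second $4n\beta_{2q}^2(n-1)$, for the total $8n\beta_{2q}^2(n-1)$.
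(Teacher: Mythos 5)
Your proposal matches the paper's proof essentially step for step: the same reduction via the triangle inequality and $\norm{U^2}_q=\norm{U}_{2q}^2$, the same decomposition of $Z-Z_i$ with the collapsed $i$-th summand, the extra $(a+b)^2\le 2a^2+2b^2$ split of the leading difference combined with the identical distribution of $(\sS_n^{-i},X_i)$ and $(\sS_n^{-i},X_i')$, and the insertion of $\loss{\Alg(\sS_n^{-i,-j}),X_j}$ with the observation that $\sS_n^{i}\stackrel{d}{=}\sS_n$ makes the two resulting removal increments equal in $2q$-norm, yielding the factor $4$ in both terms. The point you flag as delicate — that removing $X_i'$ from $\sS_n^{i,-j}$ is distributionally a generic removal from an $(n-1)$-sample independent of $X_j$ — is exactly how the paper closes the argument, so the proposal is correct and takes the same route.
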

\begin{proof}
Let $q \ge 1$. Then,
\begin{align}
\norm{V}_q
& =
\norm{ \sum_{i=1}^n (Z - Z_{i})^2 }_q
 \le
\sum_{i=1}^n \norm{ (Z - Z_{i})^2 }_q
\,,
\label{eq:vbasic}
\end{align}
where the inequality is by the triangle inequality.
Now, using the definition of $Z$ (cf.  \cref{eq:Z_for_Rdel}) and identity \cref{eq:Z1_exp_Rdel} for $Z_1$,
\begin{align*}
\MoveEqLeft (Z - Z_{1})^2  \\
& = 
\bra{\frac{1}{n}  \bra{\loss{ \Alg(\sS_n^{-1} ) , X_1}- \loss{ \Alg(\sS_n^{-1} ) , X_1'}}
+
\frac{1}{n}\sum_{i=2}^n \bra{\loss{ \Alg(\sS_n^{-i} ) , X_i}  -  \loss{\Alg(\sS_n^{1 , -i}), X_i} }}^2 
\\
& 
\le
\frac{2}{n^2} \bra{ \loss{ \Alg(\sS_n^{-1} ) , X_1} - \loss{ \Alg(\sS_n^{-1} ) , X_1'} }^2\\
& \quad
+
2 \bra{\frac{1}{n}\sum_{i=2}^n \bra{\loss{ \Alg(\sS_n^{-i} ) , X_i}  -  \loss{\Alg(\sS_n^{1 , -i}), X_i} }}^2 
\tag{\mathtxtsm{$(a+b)^2 \le 2a^2 + 2b^2$}}
\\
& 
\le
\frac{4}{n^2}  \bra{ \losssq{ \Alg(\sS_n^{-1} ) , X_1} + \losssq{ \Alg(\sS_n^{-1} ) , X_1'} } \\
& \quad
+
2\bra{
\frac{1}{n}\sum_{i=2}^n \bra{\loss{ \Alg(\sS_n^{-i} ) , X_i}  -  \loss{\Alg(\sS_n^{1 , -i}), X_i} }}^2 
\tag{\mathtxtsm{$(a+b)^2 \le 2a^2 + 2b^2$ again}}
\\
& 
\le
\frac{4}{n^2}  \bra{ \losssq{ \Alg(\sS_n^{-1} ) , X_1} + \losssq{ \Alg(\sS_n^{-1} ) , X_1'} } \\
& \quad
+
\frac{2}{n}\sum_{i=2}^n \bra{\loss{ \Alg(\sS_n^{-i} ) , X_i}  -  \loss{\Alg(\sS_n^{1 , -i}), X_i} }^2 \,.
\quad\tag{\mathtxtsm{Jensen's inequality}}
\end{align*}
Taking the $q$-norm of both sides, 
using that $(S_n^{-1},X_1)$ and $(\sS_n^{-1},X_1')$ are identically distribution,
the triangle inequality and that for any $U$ RV, $\norm{U^2}_q = \norm{U}_{2q}^2$,
we get
\begin{align*}
\norm{ (Z - Z_1)^2 }_q 
& \le
\frac{8}{n^2} \norm{\losssq{ \Alg(\sS_n^{-1} ) , X_1}}_{2q}^2 + 
 \frac{2}{n}\sum_{i=2}^n \norm{\loss{ \Alg(\sS_n^{-i} ) , X_i}  -  \loss{\Alg(\sS_n^{1 , -i}), X_i} }_{2q}^2\,.
\end{align*}
Now, by the triangle inequality, using that $\sS_n^{1,-i,-1} =\sS_n^{1,-1-i} = \sS_n^{-1,-i}$,
\begin{align*}
\norm{\loss{ \Alg(\sS_n^{-i} ) , X_i}  -  \loss{\Alg(\sS_n^{1 , -i}), X_i} }_{2q} 
& \le
\norm{\loss{ \Alg(\sS_n^{-i} ) , X_i}  -  \loss{\Alg(\sS_n^{-1, -i}), X_i} }_{2q} \\
&  + {}
\norm{\loss{\Alg(\sS_n^{1 , -i}), X_i} - \loss{ \Alg(\sS_n^{1,-1,-i} ) , X_i} }_{2q}\,.
\end{align*}
Notice that the distribution of $\sS_n^{1}$ is identical to that of $\sS_n$.
Hence, the two terms on the right-hand side are equal and thus
\begin{align*}
\norm{\loss{ \Alg(\sS_n^{-i} ) , X_i}  -  \loss{\Alg(\sS_n^{1 , -i}), X_i} }_{2q}^2
\le
4 \norm{\loss{ \Alg(\sS_n^{-i} ) , X_i}  -  \loss{\Alg(\sS_n^{-1, -i}), X_i} }_{2q}^2
\end{align*}
and
\begin{align*}
\norm{ (Z - Z_1)^2 }_q 
& \le
\frac{8}{n^2} \norm{\losssq{ \Alg(\sS_n^{-1} ) , X_1}}_{2q}^2 + 
 \frac{8}{n}\sum_{i=2}^n \norm{\loss{ \Alg(\sS_n^{-i} ) , X_i}  -  \loss{\Alg(\sS_n^{-1 , -i}), X_i} }_{2q}^2\,.
\end{align*}
Note that the RHS here is exactly four times the RHS of the second term in \eqref{eq:RdelZminusZminusone} in the proof of \cref{lemma:termI_bound_qnorm_vdel}. Hence, the proof is finished as there:
First, note that an analogous inequality holds for $\norm{(Z-Z_j)^2}_q$ with $j>1$. 
Summing up all these, 
using that $(\Alg(\sS_n^{-j} ) , X_j)_{j}$ share the same distribution
and combining
with \cref{eq:vbasic}, we get
\begin{align*}
\norm{V}_q
 & \le \frac{8}{n^2} \sum_{j=1}^n \norm{\loss{ \Alg(\sS_n^{-j} ) , X_j}}_{2q}^2 +
  8 \frac{1}{n} \sum_{j=1}^n
\sum_{i\ne j} \norm{\loss{ \Alg(\sS_n^{-i} ) , X_i}  -  \loss{\Alg(\sS_n^{-j , -i}), X_i} }_{2q}^2\\
 & = \frac{8}{n^2} \sum_{j=1}^n  \norm{\loss{ \Alg(\sS_n^{-j} ) , X_j}}_{2q}^2 +
  8 \sum_{i=1}^n
\underbrace{\frac{1}{n} \sum_{j\ne i}  
 \norm{\loss{ \Alg(\sS_n^{-i} ) , X_i}  -  \loss{\Alg(\sS_n^{-i , -j}), X_i} }_{2q}^2}_{\beta_{2q}^2(n-1)}\\
& = \frac{8}{n^2} \sum_{i=1}^n \norm{\loss{ \Alg(\sS_n^{-i} ) , X_i}}_{2q}^2 + 8 n \beta_{2q}^2(n-1)\,.
\end{align*}
Replacing $q$ with $2q$ gives the desired result.
\end{proof}

\section{\cref{lemma:temrII_vdel_is_subgamma} with $V$}
\label{sec:lemma:temrII_v_is_subgamma}
Let $\sS_n'$ and $\sS_n^i$ be as in the previous section.
Define
\begin{equation}
Z  =  R\bra{\Alg(\sS_n),\fdistri{P}}\,.
\label{eq:Z_for_risk}
\end{equation}
Then, $Z = f(\sS_n)$ for some function $f$.
Define
\begin{align}
Z_i = f(\sS_n^i)\,.
\label{eq:Zi_for_risk}
\end{align}
Similar to \cref{lemma:temrII_vdel_is_subgamma} we have the following result:
\begin{restatable}{lemma}{lemtermIIqnormboundv}
\label{lemma:temrII_v_is_subgamma}
\quad
Let $Z$ and $Z^i$ be defined as in \eqref{eq:Z_for_risk} and \eqref{eq:Z_for_risk}, respectively, and let
$V = \sum_{i=1}^n(Z - Z_i)^2$.
Then for any real $q \ge 1/2$, and $n \ge 2$, the following holds
\begin{align}
\norm{V}_{2q}
& ~ \le ~
4 n \beta^2_{4q}(n) 
~,
\label{eq:termII_expect_v_upper_bound}
\end{align}
and, in particular, $\EE V ~ \le ~ 4 n\beta_2^2(n)$.
\end{restatable}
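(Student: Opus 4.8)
The plan is to reduce this replacement-type bound to the removal-type bound already established in \cref{lemma:temrII_vdel_is_subgamma}, at the cost of only a constant factor of $4$ --- precisely mirroring how \cref{lemma:termI_bound_qnorm_v} was derived from \cref{lemma:termI_bound_qnorm_vdel}. Writing $r(\sS) = R(\Alg(\sS),\fdistri{P})$ for brevity, so that $Z = r(\sS_n)$ and $Z_i = r(\sS_n^i)$, the structural observation that drives everything is that removing the $i$th coordinate of the \emph{replaced} sequence recovers the ordinary removal, i.e.\ $(\sS_n^i)^{-i} = \sS_n^{-i}$. This lets me route the increment $Z - Z_i$ through the common removal $\sS_n^{-i}$:
\[
Z - Z_i = \underbrace{[\,r(\sS_n) - r(\sS_n^{-i})\,]}_{A_i} + \underbrace{[\,r(\sS_n^{-i}) - r(\sS_n^i)\,]}_{B_i}.
\]

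First I would apply the triangle inequality together with $\norm{U^2}_q = \norm{U}_{2q}^2$ (exactly the opening move of \eqref{eq:vdel2exp}) to obtain $\norm{V}_q \le \sum_{i=1}^n \norm{Z - Z_i}_{2q}^2$. On each summand I split using the triangle inequality in $2q$-norm, $\norm{Z - Z_i}_{2q} \le \norm{A_i}_{2q} + \norm{B_i}_{2q}$. The key step is then the distributional identity $B_i \stackrel{d}{=} -A_i$: since $\sS_n^i$ is again a sequence of $n$ \iid\ draws from $\fdistri{P}$, it is identically distributed to $\sS_n$, and because $(\sS_n^i)^{-i} = \sS_n^{-i}$ we may write $B_i = -[\,r(\sS_n^i) - r((\sS_n^i)^{-i})\,]$, which is the negative of $A_i$ with $\sS_n$ replaced by $\sS_n^i$. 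Hence $\norm{B_i}_{2q} = \norm{A_i}_{2q}$, giving $\norm{Z - Z_i}_{2q} \le 2\norm{A_i}_{2q}$ and therefore $\norm{Z - Z_i}_{2q}^2 \le 4\norm{A_i}_{2q}^2$.

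At this point I would invoke verbatim the computation from the proof of \cref{lemma:temrII_vdel_is_subgamma} --- the chain of tower rule and Jensen's inequality ending at \eqref{ineq:termII_moment_vdel_final} --- which shows $\norm{A_i}_{2q} = \norm{r(\sS_n) - r(\sS_n^{-i})}_{2q} \le \norm{\loss{\Alg(\sS_n),X} - \loss{\Alg(\sS_n^{-i}),X}}_{2q}$. Summing over $i$ and using the definition of the stability coefficient then yields
\[
\norm{V}_q \le 4 \sum_{i=1}^n \norm{\loss{\Alg(\sS_n),X} - \loss{\Alg(\sS_n^{-i}),X}}_{2q}^2 = 4 n \beta_{2q}^2(n).
\]
Replacing $q$ by $2q$ gives \eqref{eq:termII_expect_v_upper_bound}, and the bound on $\EE V$ follows by setting $q = 1$ (since $V \ge 0$, we have $\EE V = \norm{V}_1$).

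The only genuinely non-mechanical point --- and thus the step to get right --- is the distributional identity $B_i \stackrel{d}{=} -A_i$, which rests on the twin facts that replacement preserves the \iid\ law of the sequence and that replacing then removing the same index coincides with plain removal. Everything else is a one-line triangle-inequality split layered on top of the already-proven removal case.
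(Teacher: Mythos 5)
Your proof is correct and takes essentially the same route as the paper's: both arguments hinge on the identity $(\sS_n^i)^{-i}=\sS_n^{-i}$ together with the fact that $\sS_n^i$ and $\sS_n$ are identically distributed to convert the replacement increment into twice a removal increment, and both pass from risk differences to loss differences via the tower rule and conditional Jensen. The only (immaterial) difference is the order of operations --- the paper applies the Jensen step first and then splits at the level of loss differences, whereas you split at the level of risk differences and then apply Jensen to each removal term --- which yields the identical factor of $4$ and the identical final bound.
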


\begin{proof}
Let $q\ge 1$. Then, similar to the previous proofs,
\begin{align}
\norm{V}_q
=
\norm{ \sum_{i=1}^n \bra{Z - Z_i}^2 }_q
\le
\sum_{i=1}^n
\norm{ \bra{Z - Z_i}^2 }_q
= 
\sum_{i=1}^n
\norm{ \bra{Z - Z_i} }_{2q}^2\,.
\label{eq:v2exp}
\end{align}
Then, using the definitions of $Z$ and $Z_1$, with the same calculation as in \eqref{ineq:termII_moment_vdel_final}, just replacing $\sS_n^{-1}$ everywhere with $\sS_n^1$,
\begin{align}
\norm{Z-Z_1}_{2q}^2
& \le
\norm{
R(\Alg(\sS_n),\fdistri{P}) - R(\Alg(\sS_n^{1}),\fdistri{P})
}_{2q}^2
\nonumber
 \le
\norm{
\loss{\Alg(\fset{S}_n),X} - \loss{\Alg(\fset{S}_n^{1}),X}
}_{2q}^{2}\,.
\nonumber
\nonumber
\end{align}
Now, using that $\sS_n^{1,-1} = \sS_n^{-1}$ and that $(\sS_n^1,X)$ and $(\sS_n,X)$ are identically distributed,
\begin{align*}
\norm{
\loss{\Alg(\fset{S}_n),X} - \loss{\Alg(\fset{S}_n^{1}),X}
}_{2q}
& \le
\norm{
\loss{\Alg(\fset{S}_n),X} - \loss{\Alg(\fset{S}_n^{-1}),X}
}_{2q}\\
& +{}
\norm{
\loss{\Alg(\fset{S}_n^1),X} - \loss{\Alg(\fset{S}_n^{1,-1}),X}
}_{2q} \\
& =
2 \norm{
\loss{\Alg(\fset{S}_n),X} - \loss{\Alg(\fset{S}_n^{-1}),X}
}_{2q}\,.
\end{align*}
An analogous inequality holds for $\norm{Z-Z_{-i}}_{2q}^2$ with $i>1$. Summing up all these,
combining with \eqref{eq:v2exp} we get
\begin{align*}
\norm{V}_q \le n\,\, \frac{4}{n} \sum_{i=1}^n \norm{
\loss{\Alg(\fset{S}_n),X} - \loss{\Alg(\fset{S}_n^{-i}),X}
}_{2q}^{2} = 4 n \beta_{2q}^2(n)~.
\end{align*}
Replacing $q$ with $2q$ yields that
\begin{align}
\norm{V}_{2q}
& ~ \le ~
4 n \beta^2_{4q}(n) ~.
\nonumber
\end{align}
\end{proof}

\section{\cref{theorem:cgf_z_vdel_expbound} for $V$}
\label{sec:theorem:cgf_z_v_expbound}

\begin{restatable}{theorem}{thmcgfzvexpbound}
\label{theorem:cgf_z_v_expbound}
Let $Z = f(\sS_n)$, $Z_i = f(\sS_n^i)$, $V = \sum_{i=1}^n (Z-Z_i)^2$ as in the previous section.
Then, for all $\theta > 0$, s.t. $\theta\lambda < 1$, and $\expectop e^{\lambda V} < \infty$,
the following holds
\begin{equation}
\log\expectone{\exp\bra{ \lambda (Z - \expectop Z) }}
\le
\sfrac{\lambda\theta}{(1 - \lambda \theta)}
\log\expectone{\exp\bra{\sfrac{\lambda V}{\theta}}}. \label{eq:cgf_z_v}
\end{equation}
\end{restatable}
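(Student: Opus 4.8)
The plan is to follow the proof of \cref{theorem:cgf_z_vdel_expbound} almost verbatim, replacing the removal variables $Z_{-i}$ by the replacement variables $Z_i = f(\sS_n^i)$ throughout, and to isolate the single place where the argument genuinely differs. First I would invoke the modified logarithmic Sobolev inequality exactly as stated (and used) in the proof of \cref{theorem:cgf_z_vdel_expbound}. That inequality only requires each comparison variable to be independent of $X_i$; since $Z_i$ depends on the copy $X_i'$ and on $\bra{X_j}_{j\ne i}$ but not on $X_i$ itself, it qualifies, and one obtains $\lambda \expectone{Z e^{\lambda Z}} - \expectone{e^{\lambda Z}}\log\expectone{e^{\lambda Z}} \le \sum_{i=1}^n \expectone{e^{\lambda Z}\,\phi\bra{-\lambda(Z - Z_i)}}$, where $\phi(u) = e^u - u - 1$.

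The crux is bounding this right-hand side by $\lambda^2 \expectone{e^{\lambda Z} V}$. In the removal case this used the pointwise estimate $\phi(u)\le u^2$ valid only for $u\le 1$, which forced the hypotheses $\abs{Z - Z_{-i}}\le 1$ and $\lambda\le 1$. Here neither is available, so instead I would \emph{symmetrize} using the exchangeability of $X_i$ and $X_i'$: swapping the two shows that $(Z,Z_i)$ and $(Z_i,Z)$ have the same law, whence $\expectone{e^{\lambda Z}\phi\bra{-\lambda(Z-Z_i)}} = \expectone{e^{\lambda Z_i}\phi\bra{\lambda(Z-Z_i)}}$. Averaging these two expressions and simplifying yields the clean identity $e^{\lambda Z}\phi\bra{-\lambda(Z-Z_i)} + e^{\lambda Z_i}\phi\bra{\lambda(Z-Z_i)} = \lambda(Z-Z_i)\bra{e^{\lambda Z} - e^{\lambda Z_i}}$, so that $\expectone{e^{\lambda Z}\phi\bra{-\lambda(Z-Z_i)}} = \sfrac{1}{2}\, \expectone{\lambda(Z-Z_i)\bra{e^{\lambda Z} - e^{\lambda Z_i}}}$. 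I would then apply the elementary secant inequality $(a-b)(e^a - e^b)\le \sfrac{1}{2} (a-b)^2 (e^a + e^b)$ with $a=\lambda Z$, $b=\lambda Z_i$, together with the symmetry $\expectone{(Z-Z_i)^2 e^{\lambda Z_i}} = \expectone{(Z-Z_i)^2 e^{\lambda Z}}$, to conclude $\sum_i \expectone{e^{\lambda Z}\phi\bra{-\lambda(Z-Z_i)}}\le \sfrac{\lambda^2}{2}\expectone{e^{\lambda Z} V}\le \lambda^2\expectone{e^{\lambda Z}V}$, now valid for every $\lambda>0$ with no boundedness assumption whatsoever.

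From this inequality onward the two proofs coincide: I would decouple $e^{\lambda Z}$ from $V$ using Massart's tool (choosing $W = V/\theta$), divide through by $\lambda^2 \expectone{e^{\lambda Z}}$, recognise the left-hand side as the derivative of $H(\lambda) = \sfrac{1}{\lambda} \log \expectone{e^{\lambda Z}}$, and arrive at the differential inequality $H'(\lambda)\le \sfrac{\theta\,G(\lambda/\theta)}{\lambda(1-\lambda\theta)}$ with $G(\lambda)=\log\expectone{e^{\lambda V}}$. Integrating from $0$ (using $\lim_{\lambda\to0+}H(\lambda)=\expectone{Z}$ and the convexity of $G$, which makes the integrand nondecreasing) delivers \eqref{eq:cgf_z_v}. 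The main obstacle is the middle step: the symmetrization identity and the secant inequality are exactly what let us dispense with the bounded-difference and $\lambda\le 1$ restrictions that the removal argument required, and the only delicate point is getting the signs right and justifying the exchange $X_i\leftrightarrow X_i'$. I note in passing that keeping the factor $\sfrac{1}{2}$ would give a slightly sharper constant, but I discard it in order to reproduce the removal-case computation, and hence the stated form of the bound, exactly.
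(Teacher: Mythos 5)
Your proof is correct, but it takes a genuinely different route from the paper's. The paper proves \cref{theorem:cgf_z_v_expbound} by a short reduction: it invokes Theorem~2 of \citet{boucheron_lugosi_massart_2003}, which bounds the CGF of $Z-\EE Z$ by the MGF of the one-sided proxy $V_+ = \EE[\sum_{i}(Z-Z_i)^2\,\mathbf{1}[Z>Z_i]\mid \sS_n]$, and then shows $\EE\exp(\lambda V_+/\theta)\le\EE\exp(\lambda V/\theta)$ via Jensen's inequality (to pull the conditional expectation outside the exponential), the tower rule, and dropping the indicator. You instead re-derive the bound from first principles: you start from the modified logarithmic Sobolev inequality --- legitimate here, since $Z_i$ is measurable with respect to everything except $X_i$ once the tensorization is taken over all $2n$ variables $X_1,\dots,X_n,X_1',\dots,X_n'$ --- and you replace the pointwise bound $\phi(u)\le u^2$ for $u\le 1$ (which is exactly what forced the hypotheses $|Z-Z_{-i}|\le 1$ and $\lambda\le 1$ in \cref{theorem:cgf_z_vdel_expbound}) by a symmetrization step that exploits the exchangeability of $X_i$ and $X_i'$, the algebraic identity $e^a\phi(b-a)+e^b\phi(a-b)=(a-b)(e^a-e^b)$, and the secant inequality $(a-b)(e^a-e^b)\le\frac{1}{2}(a-b)^2(e^a+e^b)$; all of these check out, and you correctly observe (implicitly) that this symmetrization is unavailable in the removal case because $Z$ and $Z_{-i}$ are not exchangeable. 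What your approach buys is a self-contained argument that makes transparent why no boundedness or $\lambda\le 1$ restriction is needed in the replacement case, and it even yields the sharper factor $\lambda^2/2$ in front of $\EE[e^{\lambda Z}V]$, which you then discard to reproduce the stated constant. What it costs is that you are essentially unfolding the black box the paper cites: the symmetrization-plus-secant computation is the core of the Boucheron--Lugosi--Massart proof for $V^+$, so the paper's two-step reduction ($V^+$ result, then $V_+\le V$ in the MGF sense) reaches the same place with less work.
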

\begin{proof}
Theorem~2 of \citet{boucheron_lugosi_massart_2003} states that under the assumed conditions,
\begin{equation}
\log\expectone{\exp\bra{ \lambda (Z - \expectop Z) }}
\le
\sfrac{\lambda\theta}{(1 - \lambda \theta)}
\log\expectone{\exp\bra{\sfrac{\lambda V_+}{\theta}}}\,,
\label{eq:vpluscgfbound}
\end{equation}
where $V_+ = \expectone{ \sum_{i=1}^n (Z-Z_i)^2 \one{Z>Z_i} |\sS_n}$.
Now, 
\begin{align*}
\E \exp\bra{\sfrac{\lambda V_+}{\theta}}
& =
\E \exp\bra{\sfrac{\lambda  \expectone{ \sum_{i=1}^n (Z-Z_i)^2 \one{Z>Z_i} |\sS_n}}{\theta}} \\
& \le
\expectone{ \expectone{ \exp\bra{ \sfrac{\lambda  \sum_{i=1}^n (Z-Z_i)^2 \one{Z>Z_i} }{\theta}} | \sS_n } }
 \tag{Jensen's ineq.}
 \\
& =
\E \exp\bra{  \sfrac{\lambda  \sum_{i=1}^n (Z-Z_i)^2 \one{Z>Z_i} }{\theta}  } 
\tag{tower rule} 
\\
& =
\E \exp\bra{  \sfrac{\lambda  \sum_{i=1}^n (Z-Z_i)^2 }{\theta}  } 
\tag{$(Z-Z_i)^2\ge 0$}
\\
& =
\E \exp\bra{  \sfrac{\lambda  V }{\theta}  } \,.
\end{align*}
Plugging this into \eqref{eq:vpluscgfbound}  we get \eqref{eq:cgf_z_v}.
\end{proof}
The inequality for the lower tail is obtained the same way.

\end{document}